\renewcommand{\cite}{\citep}
\def\shownotes{0}
\newcommand{\authnote}[2]{[#1: #2]}
\newcommand{\authnote}[2]{}
\newtheorem{condition}{Condition}
\newcommand{\lambdamax}{\lambda_{\text{max}}}
\newcommand{\lambdamin}{\lambda_{\text{min}}}
\newcommand{\fisherinfj}{I_{j,\thetastar}}
\newcommand{\fisherinfk}{I_{k,\thetastar}}
\newcommand{\conditionnum}{\gamma_{\thetastar}}
\newcommand{\fn}{f_n}
\newcommand{\rnthetastar}{r_n(\thetastar)}
\newcommand{\rntheta}{r_n(\theta)}
\newcommand{\dsname}{GINC\xspace}
\newcommand{\Topic}{Concept\xspace}
\newcommand{\topic}{concept\xspace}
\newcommand{\topics}{concepts\xspace}
\newcommand{\hiddenseg}{H}
\newcommand{\hiddensegstart}{h^{\text{start}}}
\newcommand{\hiddensegstarttest}{h^{\text{start}}_{\text{test}}}
\newcommand{\hiddensegend}{H^{\text{seg}}}
\newcommand{\promptseq}{S_n}
\newcommand{\obsex}{O^{\text{ex}}}
\newcommand{\Lzeroone}{L_{\text{0-1}}}
\newcommand{\LCE}{L_{\text{CE}}}
\newcommand{\indicator}{\mathbf{1}}
\newcommand{\badset}{\sB}
\newcommand{\delimub}{c_2}
\newcommand{\delimlb}{c_1}
\newcommand{\delimstartub}{c_4}
\newcommand{\delimstartlb}{c_3}
\newcommand{\translb}{c_5}
\newcommand{\emitlb}{c_6}
\newcommand{\examplelb}{c_7}
\newcommand{\hiddenstartlb}{c_8}
\newcommand{\KLk}{KL_k(\thetastar \| \theta)}
\newcommand{\KLj}{KL_j(\thetastar \| \theta)}
\newcommand{\ppromptj}{\pprompt^j}
\newcommand{\pjtheta}{p^j_\theta}
\newcommand{\errstart}{\epsilon^\theta_{\text{start}}}
\newcommand{\errdelim}{\epsilon^\theta_{\text{delim}}}
\newcommand{\h}{h}
\newcommand{\obs}{o}
\newcommand{\obsset}{\sO}
\newcommand{\delim}{h^{\text{delim}}}
\newcommand{\obsseg}{O}
\newcommand{\obsdelim}{o^{\text{delim}}}
\newcommand{\X}{x}
\newcommand{\y}{y}
\newcommand{\Xtest}{x_{\text{test}}}
\newcommand{\ytest}{y_{\text{test}}}
\newcommand{\pprompt}{p_{\text{prompt}}}
\newcommand{\ppromptstart}{p_{\text{prompt}}}
\newcommand{\ppromptdelim}{p_{\text{prompt}}^{\text{delim}}}
\newcommand{\minv}{{-1}}
\newcommand{\thetastar}{{\theta^*}}
\DeclareMathOperator*{\argmax}{arg\,max}
\newcommand{\yone}{\y_{\text{max}}}
\newlength{\widebarargwidth}
\newlength{\widebarargheight}
\newlength{\widebarargdepth}
\newcommand\sB{\ensuremath{\mathcal{B}}}
\newcommand\sD{\ensuremath{\mathcal{D}}}
\newcommand\sH{\ensuremath{\mathcal{H}}}
\newcommand\sO{\ensuremath{\mathcal{O}}}
\newcommand\sS{\ensuremath{\mathcal{S}}}
\newcommand\sV{\ensuremath{\mathcal{V}}}
\newcommand\sY{\ensuremath{\mathcal{Y}}}
\newcommand\R{\ensuremath{\mathbb{R}}} 
\newcommand{\E}{\ensuremath{\mathbb{E}}} 
\begin{document}
\bibliographystyle{plainnat}

\title{An Explanation of In-context Learning as Implicit Bayesian Inference}

\author{%
        \large{Sang Michael Xie} \\
        \large{Stanford University}\\
        {\texttt{xie@cs.stanford.edu}} \\
        \And
        \large{Aditi Raghunathan} \\
        \large{Stanford University} \\
        {\texttt{aditir@stanford.edu}} \\
        \AND
        \large{Percy Liang} \\
        \large{Stanford University} \\
        {\texttt{pliang@cs.stanford.edu}} \\
        \And
        \large{Tengyu Ma} \\
        \large{Stanford University} \\
        {\texttt{tengyuma@cs.stanford.edu}} \\
}

\date{}

\newcommand{\fix}{\marginpar{FIX}}
\newcommand{\new}{\marginpar{NEW}}

\maketitle

\begin{abstract}
    Large language models (LMs) such as GPT-3 have the surprising ability to do in-context learning, where the model learns to do a downstream task simply by conditioning on a prompt consisting of input-output examples. The LM learns from these examples \emph{without being explicitly pretrained to learn}. Thus, it is unclear what enables in-context learning. In this paper, we study how in-context learning can emerge when pretraining documents have long-range coherence. Here, the LM must infer a latent document-level concept to generate coherent next tokens during pretraining. At test time, in-context learning occurs when the LM also infers a shared latent concept between examples in a prompt. We prove when this occurs despite a distribution mismatch between prompts and pretraining data in a setting where the pretraining distribution is a mixture of HMMs. In contrast to messy large-scale datasets used to train LMs capable of in-context learning, we generate a small-scale synthetic dataset (\dsname) where Transformers and LSTMs both exhibit in-context learning\footnote{The code, data, and experiments are located on \href{https://github.com/p-lambda/incontext-learning}{GitHub} and \href{https://worksheets.codalab.org/worksheets/0xff6e1b45dc20429486bb91549a6e9660}{CodaLab}.}. Beyond the theory, experiments on \dsname exhibit large-scale real-world phenomena including improved in-context performance with model scaling (despite the same pretraining loss), sensitivity to example order, and instances where zero-shot is better than few-shot in-context learning.
\end{abstract}

\vspace{-0.05in}
\section{Introduction}
Large language models (LMs) such as GPT-3~\citep{brown2020gpt3,J1WhitePaper,wang2021gptj,radford2019language} are pretrained on massive text corpora to predict the next word given previous words.
They demonstrate the surprising ability to do \emph{in-context learning}, where an LM ``learns'' to do a task simply by conditioning on a prompt containing input-output pairs, achieving SOTA results on LAMBADA~\citep{paperno2016lambada} and TriviaQA~\citep{joshi2017triviaqa} tasks (18\% and 3\% over previous SOTA~\citep{brown2020gpt3}).
For example, consider the task of predicting nationalities from names.
A prompt (Figure~\ref{fig:fig1}) is constructed by concatenating independent ``training'' examples (e.g., ``Albert Einstein was German'') followed by a ``test example'' (``Marie Curie was''). Conditioning on this prompt, GPT-3 places the largest probability on the correct output
\begin{align}
    p(\text{``Polish''} \mid \text{``Albert Einstein was German $\backslash$n Mahatma Gandhi was Indian $\backslash$n Marie Curie was''})\nonumber
\end{align}
by inferring the task from examples.
Intruigingly, GPT-3 was not explicitly pretrained to learn from examples, and the distribution of prompts (which concatenate independent examples) is quite different from natural language.
Our understanding of in-context learning is limited since (i) real pretraining data is messy and (ii) in-context learning has so far required large-scale datasets and models.

\begin{figure}[tbp]
    \centering
    \includegraphics[scale=0.26]{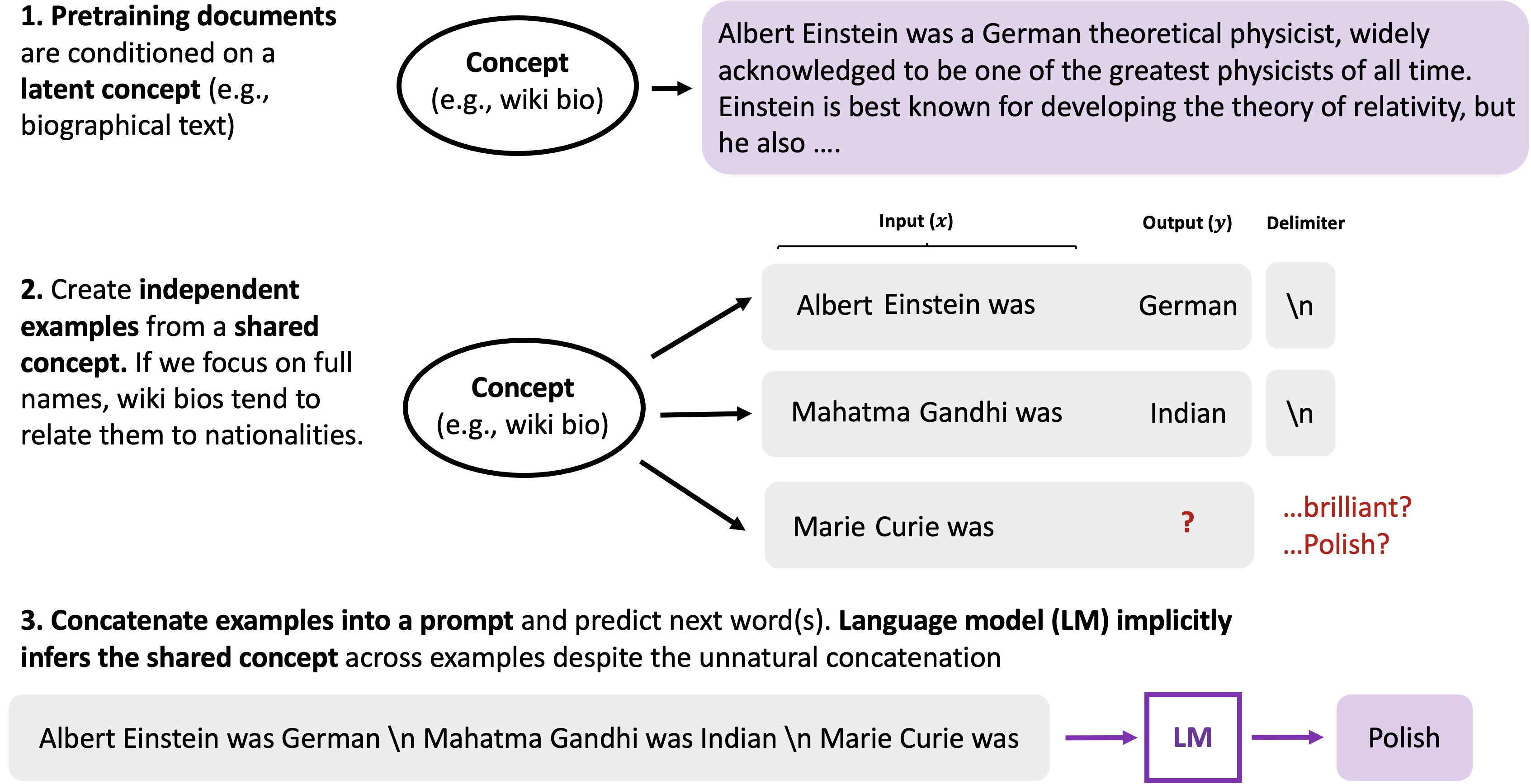}
    \caption{In-context learning can emerge from modeling long-range coherence in the pretraining data. During pretraining, the language model (LM) implicitly learns to infer a latent \topic (e.g., wiki bios, which typically transition between name (Albert Einstein) $\rightarrow$ nationality (German) $\rightarrow$ occupation (physicist) $\rightarrow$ ...) shared across sentences in a document. Although prompts are unnatural sequences that concatenate independent examples, in-context learning occurs if the LM can still infer the shared \topic across examples to do the task (name $\rightarrow$ nationality, which is part of wiki bios).}
\label{fig:fig1}
\end{figure}
In this paper, we introduce a simple pretraining distribution where in-context learning emerges.
To generate a document, we first draw a latent \topic $\theta$, which parameterizes the transitions of a Hidden Markov Model (HMM)~\citep{baum1966statistical}, then sample a sequence of tokens from the HMM (Figure~\ref{fig:simulation}).
This latent variable structure is common in topic models such as LDA~\citep{blei03lda,gruber2007hidden}.
During pretraining, the LM must infer the latent \topic across multiple sentences to generate coherent continuations.
When conditioning on a prompt, in-context learning occurs when the LM also infers a shared \emph{prompt \topic} across examples to make a prediction.
We assume the LM fits the pretraining distribution $p$ exactly with enough data and expressivity, so that the question of in-context learning becomes characterizing the conditional distribution of completions given prompts $p(\text{output} \vert \text{prompt})$ under the pretraining distribution, where the $\text{prompt}$ is generated from a different distribution $\pprompt$.
This conditional distribution, which is the \emph{posterior predictive distribution}, marginalizes out the latent \topics:
\begin{align}
    p(\text{output} \vert \text{prompt}) = \int_{\text{\topic}} p(\text{output} \vert \text{\topic}, \text{prompt}) p(\text{\topic} \vert \text{prompt}) d(\text{\topic}).
\end{align}
If $p(\text{\topic} \vert \text{prompt})$ concentrates on the prompt \topic with more examples, then the LM learns via marginalization by ``selecting'' the prompt \topic.
Thus, in-context learning can be viewed as the LM implicitly performing Bayesian inference.

The main challenge is that prompts are sampled from a different distribution than the pretraining distribution.
The canonical Bayesian asymptotic tool is the Bernstein-von Mises theorem~\citep{vaart98asymptotic,kleijn2012bernstein,gunst2008asymptotic}, which asserts (under regularity conditions) that the posterior distribution of a latent variable concentrates on the maximum likelihood estimate. However, Bernstein-von Mises typically assumes observations are independent and/or drawn from the same distribution as the model, both of which are not satisfied.
We prove that despite the distribution mismatch, the asymptotic prediction error of in-context learning is optimal when the signal about the latent \topic in each prompt example is larger than the error due to the distribution mismatch.
Additionally, we prove that the in-context learning error decreases with the length of each example---thus, information in the inputs, not just the input-output mapping, can be useful for in-context learning.

As a companion to this theory, we created the \textbf{G}enerative \textbf{IN}-\textbf{C}ontext learning dataset (\dsname), which is a small-scale synthetic dataset for studying in-context learning.
We find that both Transformers~\citep{vaswani2017attention} and LSTMs~\citep{hochreiter1997lstm} trained on \dsname exhibit in-context learning.
We verify intuitions from the theory, showing that the accuracy of in-context learning improves with the number of examples and example length.
Ablations of the \dsname dataset show that the latent \topic structure in the pretraining distribution is crucial to the emergence of in-context learning.

The experiments also bring up open questions which go beyond our theory, which only studies the pretraining distribution.
We find that scaling up the number of model parameters steadily improves the in-context accuracy despite achieving the same pretraining loss, showing that larger models may improve in-context learning beyond increasing the capacity for memorizing the training data better.
Previously observed in-context learning phenomena such as sensitivity to example ordering~\citep{zhao2021calibrate} and the existence of settings where zero-shot is better than one/few-shot learning~\citep{brown2020gpt3} are also mirrored in \dsname.

\vspace{-0.05in}
\section{In-context learning setting}

\paragraph{Pretraining distribution.}
In our framework, a latent \topic $\theta$ from a family of \topics $\Theta$ defines a distribution over observed tokens $\obs$ from a vocabulary $\obsset$.
To generate a document, we first sample a \topic from a prior $p(\theta)$ and then sample the document given the \topic.
Each pretraining document is a length $T$ sequence:
\begin{align}
    p(\obs_1, \dots, \obs_T) = \int_{\theta \in \Theta}  p(\obs_1, \dots, \obs_T \vert \theta) p(\theta) d\theta.
\end{align}
We assume $p(\obs_1,\dots, \obs_T \vert \theta)$ is defined by a Hidden Markov Model (HMM).
The \topic $\theta$ determines the transition probability matrix of the HMM hidden states $\h_1, \dots, \h_T$ from a hidden state set $\sH$.

\paragraph{Prompt distribution.}
The prompt distribution $\pprompt$ generates prompts for in-context learning.
The prompt is a concatenation of $n$ independent training examples and 1 test input $\Xtest$, which are all conditioned on a shared prompt \topic $\thetastar$.
The goal is to predict the test output $\ytest$ by predicting the next token.

A prompt example is composed of an input token sequence $\X$ (e.g., Albert Einstein was) followed by an output token $\y$ (e.g., German).
In particular, the $i$-th training example $\obsseg_i$ consists of an input $\X_i = \obsseg_i[1\colon k-1]$ (the first $k-1$ tokens) followed by an output token $\y_i = \obsseg_i[k]$ at the end\footnote{The example length $k$ is fixed for simplicity --- we leave extending our analysis to variable $k$ as future work.}. The $i$-th training example is independently generated as follows:
\begin{enumerate}
    \item Generate a start hidden state $\hiddensegstart_i$ from a \emph{prompt start distribution} $\ppromptstart$.
    \item Given $\hiddensegstart_i$, generate the example sequence $\obsseg_i=[\X_i,\y_i]$ from $p(\obsseg_i \vert \hiddensegstart_i, \thetastar)$, the \emph{pretraining distribution} conditioned on a prompt \topic $\thetastar$.
\end{enumerate}
The test input $\Xtest = \X_{n+1}$ is sampled similarly.
Between each example, there is a special delimiter token $\obsdelim$.
The prompt consists of a sequence of training examples ($\promptseq$) followed by the test example $\Xtest$:
\begin{align}
    [\promptseq, \Xtest] = [\X_1, \y_1, \obsdelim, \X_2, \y_2, \obsdelim, \dots, \X_n, \y_n, \obsdelim, \Xtest] \sim \pprompt.
\end{align}

\paragraph{Mismatch between prompt and pretraining distributions.}
Since transitions between independent examples can be unnatural, the prompts are low probability sequences under the pretraining distribution.
We provide a simple illustration using the names to nationalities example.
Suppose that wiki bio documents in the pretraining data typically transition between name $\rightarrow$ nationality $\rightarrow$ occupation $\rightarrow \dots$.
In the prompt, the examples transition between name $\rightarrow$ nationality $\rightarrow$ name $\rightarrow$ nationality $\rightarrow \dots$, which contains low-probability transitions such as ``German'' $\rightarrow$ ``Mahatma Gandhi''.
The prompt formatting (e.g., choice of delimiter) can also be a source of mismatch.
We aim to show that despite this mismatch, large LMs can infer the prompt \topic from examples.

\paragraph{In-context predictor and task.}
For in-context learning, the output target $\y$ for each example $\X$ is sampled according to $\pprompt(\y \vert \X)$:
\begin{align}
    \label{eqn:incontext-task}
    \ytest &\sim \pprompt(\y \vert \Xtest) = \E_{\hiddensegstarttest \sim \ppromptstart(\hiddensegstarttest \vert \Xtest)}\left[p(\y \vert \Xtest, \hiddensegstarttest, \thetastar)\right].
\end{align}
where $\hiddensegstarttest$ denotes the hidden state corresponding to the first token of $\Xtest$.

We analyze the in-context predictor $\fn(\Xtest) = \argmax_{\y} p(\y \vert \promptseq, \Xtest)$, which outputs the most likely prediction over the \emph{pretraining} distribution conditioned on the prompt from the \emph{prompt} distribution\footnote{In practice, greedy decoding or nucleus sampling~\citep{holtzman2020curious} are used for likely completions.}.
We study the in-context predictor and its expected 0-1 error with $n$ examples
$\Lzeroone(\fn) = \E_{\Xtest,\ytest \sim \pprompt}[\indicator[\fn(\Xtest) \neq \ytest]]$.

\subsection{Assumptions}
\label{sec:assumptions}
We detail the assumptions in our framework, including the structure of delimiters and regularity assumptions.
We first assume that there exists a subset of \emph{delimiter hidden states} $\sD$ which generates the special delimiter token $\obsdelim$ deterministically.
\begin{assumption}[Delimiter hidden states]
\label{ass:delimiterstates}
Let the delimiter hidden states $\sD$ be a subset of $\sH$. For any $\delim\in \sD$ and $\theta \in \Theta$, $p(\obsdelim \vert \delim, \theta)=1$ and for any $\h\notin \sD$, $p(\obsdelim \vert \h, \theta)=0$.
\end{assumption}
Thus, observing the delimiter $\obsdelim$ reveals that the corresponding hidden state is in $\sD$, but does not reveal which element of $\sD$ it is.
The delimiter is usually a token that can appear in a broad range of contexts (e.g., newline).
The delimiter ideally does not distract from the examples --- for example, an adversarial delimiter could look like part of the input $\X$.
To mitigate these scenarios, we assume that no delimiter (e.g., newline) is significantly more likely under one \topic rather than another.
\begin{assumption}[Bound on delimiter transitions]
\label{ass:delimiterbound}
For any delimiter state $\delim \in \sD$ and any hidden state $\h \in \sH$, the probability of transitioning to a delimiter hidden state under $\theta$ is upper bounded $p(\delim \vert \h, \theta) < \delimub$ for any $\theta \in \Theta \setminus \{\thetastar\}$, and is lower bounded $p(\delim \vert \h, \thetastar) > \delimlb > 0$ for $\thetastar$.
Additionally, the start hidden state distribution for delimiter hidden states is bounded as $p(\delim \vert \theta) \in [\delimstartlb, \delimstartub]$.
\end{assumption}

The choice of prompt start distribution can be a source of distribution shift which is separate from the distribution shift from concatenating independent examples.
We make an assumption that limits how much distribution shift is introduced by the prompt start distribution.
\begin{assumption}[Distribution shift from prompt start distribution]
\label{ass:promptstartshift}
We assume that the prompt start distribution $\ppromptstart$ is close in TV distance to all hidden transition distributions (under $\thetastar$) starting from a delimiter hidden state: $\max_{\delim \in \sD} TV(\ppromptstart(h) \| p(h \vert \delim, \thetastar)) < \Delta / 4$.
Here, $\Delta = \pprompt(\yone \vert \Xtest) - \max_{\y \neq \yone}\pprompt(\y \vert \Xtest)$ is the margin between the most likely label $\yone = \argmax_{\y} \pprompt(\y \vert \Xtest)$ and the second most likely label.
\end{assumption}
Note that even when the maximum TV distance is 0, there is still distribution shift from concatenating independent examples.

We also assume the prompt \topic $\thetastar$ is in the family $\Theta$, which is a broad set of \topics.
\begin{assumption}[Well-specification]
\label{ass:wellspecified}
The prompt \topic $\thetastar$ is in $\Theta$.
\end{assumption}
Even though the pretraining distribution is broad, the prompt is still low probability under the pretraining distribution since it concatenates independent examples.

Finally, if the prompt has zero probability under the prompt \topic $\thetastar$, then Bayesian inference will not be able to infer the prompt \topic as in Section~\ref{sec:highlevel}.
The following are regularity assumptions which mainly ensure that the prompt is not zero probability under $\thetastar$.
\begin{assumption}[Regularity]
\label{ass:regularity}
The pretraining distribution $p$ satisfies: 1) Lower bound on transition probability for the prompt \topic $\thetastar$: for any pair of hidden states $\h, \h' \in \sH$, $p(\h \vert \h', \thetastar) > \translb > 0$. 
2) Start hidden state is lower bounded: for any $\h \in \sH$, $p(\h \vert \thetastar)\geq \hiddenstartlb > 0$.
3) All tokens can be emitted: for every symbol $\obs$, there is some hidden state $\h\in \sH$ such that $p(\obs \vert \h, \thetastar) > \emitlb > 0$,
4) The prior $p(\theta)$ has support over the entire \topic family $\Theta$ and is bounded above everywhere.
\end{assumption}

\vspace{-0.05in}
\section{Theoretical analysis}
\label{sec:theory}

We prove that in the limit of infinite examples, the error of the in-context predictor is optimal if a \emph{distinguishability} condition holds --- the prompt \topic $\thetastar$ is distinct enough from the other \topics in $\Theta$ (e.g., when $\Theta$ is a discrete set).
When distinguishability does not hold (e.g, $\Theta$ is continuous-valued), we show that the expected error still decreases with the length of each example, showing that information in both the inputs and the input-output mapping contribute to in-context learning.

\subsection{High-level approach}
\label{sec:highlevel}
Our goal is to show that $\argmax_{\y} p(\y \vert \promptseq, \Xtest) \rightarrow \argmax_{\y} \pprompt(y \vert \Xtest)$ as the number of examples $n$ grows.
In the following, assume that the prompt has non-zero probability under the pretraining distribution $p$ given $\thetastar$, meaning that $p(\promptseq, \Xtest \vert \thetastar) > 0$.
We expand $p(\y \vert \promptseq, \Xtest)$ to analyze its limit:
\begin{align}
    p(\y \vert \promptseq, \Xtest) &= \int_\theta  p(\y\vert \promptseq, \Xtest, \theta) p(\theta \vert \promptseq, \Xtest) d\theta \nonumber \\
                                   &\propto \int_\theta  p(\y\vert \promptseq, \Xtest, \theta) p(\promptseq, \Xtest\vert \theta) p(\theta) d\theta ~~~~~ \text{(Bayes' rule, drop the constant $\frac{1}{p(\promptseq, \Xtest)}$)}\nonumber \\
                                   &= \int_\theta \sum_{\hiddensegstarttest \in \sH} p(\y\vert \Xtest, \hiddensegstarttest, \theta) p(\hiddensegstarttest \vert \promptseq, \Xtest, \theta) \frac{p(\promptseq, \Xtest\vert \theta)}{p(\promptseq, \Xtest\vert \thetastar)} p(\theta) d\theta  \label{eqn:overall-approach}\\
                                   &~~~~~~~~~~~~~~~~~~~~~~~ \text{(Law of total prob, Markov property, divide by $p(\promptseq, \Xtest \vert \thetastar)$ (a constant))}\nonumber\\
                                   &= \int_\theta \sum_{\hiddensegstarttest \in \sH} p(\y\vert \Xtest, \hiddensegstarttest, \theta) p(\hiddensegstarttest \vert \promptseq, \Xtest, \theta) \exp(n\cdot \rntheta) p(\theta) d\theta \label{eqn:overall-approach}
\end{align}
where $\rntheta = \frac{1}{n}\log \frac{p(\promptseq, \Xtest\vert \theta)}{p(\promptseq, \Xtest\vert \thetastar)}$.
In Theorem~\ref{thm:thm1}, we prove that under a distinguishability condition, $\exp(n\cdot \rntheta) \rightarrow 0$ for all \topics $\theta$ except the prompt \topic $\thetastar$, where $\exp(n\cdot \rnthetastar)=1$.
The only nonzero term in the integral is when $\theta = \thetastar$, and thus the prompt \topic is ``selected'' as a consequence of Bayesian inference\footnote{We can exchange limits and integrals since the probabilities are bounded (dominated convergence).}.
Lemma~\ref{lem:convergence} shows that the argmax after restricting to $\thetastar$ is the same as the most likely label under $\pprompt(\y \vert \Xtest)$ (using Assumption~\ref{ass:promptstartshift}).
Putting these together with Equation~\ref{eqn:overall-approach}, the in-context predictor infers the prompt \topic $\thetastar$:
\begin{align}
    \argmax_{\y}~ p(\y \vert \promptseq, \Xtest) &\rightarrow \argmax_{\y}~ \pprompt(\y \vert \Xtest)
\end{align}
Thus, the in-context predictor is optimal as the number of in-context examples increases.

\subsection{Heuristic derivation}
\label{sec:heuristic}
Recall from Section~\ref{sec:highlevel} that if $\exp(n\cdot \rntheta) \rightarrow 0$ for all $\theta \neq \thetastar$, then Bayesian inference ``selects'' the prompt \topic through marginalization.
To do this, we focus on showing that $\rntheta$, the average log-likelihood ratio between $\theta$ and $\thetastar$, converges to a negative constant, and thus $nr_n$ goes to $-\infty$.

The main technical challenge is to handle the sequence-of-examples structure of the prompt, which makes all the examples dependent with respect to the pretraining distribution. Our approach uses properties of delimiter tokens to approximately factorize the examples, with constant error per example.
We let $\obsex_i = [\obsdelim_{i-1}, \obsseg_i]$ be the $i$-th input-output pair and the previous delimiter together for $i >1$ and define $\obsex_1 = \obsseg_1$.
Expanding the likelihood term inside $\rntheta$, our goal is to show
\begin{align}
    p(\promptseq, \Xtest \vert \theta) = p(\Xtest \vert \promptseq, \theta) p(\promptseq \vert \theta)
                                    \approx \prod_{i=1}^n O(1) p(\obsseg_i \vert \theta) \label{eqn:7}
\end{align}
To show this, we expand $p(\promptseq \vert \theta)$ with the chain rule, and with Assumption~\ref{ass:regularity} (to bound $p(\Xtest \vert \promptseq, \theta)$ by $O(1)$) it can be shown that
\begin{align}
    p(\Xtest \vert \promptseq, \theta) p(\promptseq \vert \theta) \approx  \prod_{i=1}^n O(1) p(\obsex_i \vert \obsex_{1:i-1}, \theta).
\end{align}
We then marginalize $p(\obsex_i \vert \obsex_{1:i-1}, \theta)$ over the hidden state $\delim_{i-1}$ corresponding to the delimiter in $\obsex_i = [\obsdelim_{i-1}, \obsseg_i]$:
\begin{align}
    \prod_{i=1}^n O(1) p(\obsex_i \vert \obsex_{1:i-1}, \theta)                                      &=  \prod_{i=1}^n O(1)\sum_{\delim_{i-1} \in \sD} p(\obsseg_i \vert \delim_{i-1}, \theta)p(\delim_{i-1} \vert \obsex_{1:i-1}, \theta) \approx \prod_{i=1}^n O(1) p(\obsseg_i \vert \theta)
\end{align}
While summing over $\sH$ above would be a trivial equality, we can replace $\sH$ with the set of delimiter hidden states $\sD$ since $p(\h \vert \obsex_{1:i-1}, \theta)=0$ for non-delimiter hidden states $\h \notin \sD$ (Assumption~\ref{ass:delimiterstates}).
We used in the first equality that $\obsex_{1:i-1}\rightarrow  \delim_{i-1}\rightarrow \obsex_i$ forms a Markov chain and $p(\obsdelim_{i-1}\vert\delim_{i-1})=1$ (Assumption~\ref{ass:delimiterstates}) to change $\obsex_{i}$ to $\obsseg_i$.
Finally, we can show using properties of delimiter hidden states (Assumption~\ref{ass:delimiterbound}) that $p(\delim_{i-1} \vert \obsex_{1:i-1}, \theta) = O(1)$ and $\sum_{\delim_{i-1} \in \sD}p(\obsseg_i \vert \delim_{i-1}, \theta) \approx O(1)p(\obsseg_i \vert \theta)$ in the second step.
Therefore, we can upper bound $\rntheta$ as
\begin{align}
    \label{eqn:errplusexp}
    \rntheta
    &\leq \frac{1}{n}\left( O(n) + \sum_{i=1}^n \log \frac{p(\obsseg_i \vert \theta)}{p(\obsseg_i \vert \thetastar)} \right)
    \rightarrow O(1) + \E_{\obsseg\sim \pprompt}\left[\log \frac{p(\obsseg \vert \theta)}{p(\obsseg \vert \thetastar)}\right].
\end{align}
The expectation term can be written as the difference of two KL divergences, $KL(\pprompt(\obsseg) \|p(\obsseg \vert \thetastar)) - KL(\pprompt(\obsseg) \| p(\obsseg \vert \theta))$.
We bound the first KL term by a constant using Assumption~\ref{ass:regularity} --- intuitively for one example, $\pprompt$ and $p(\cdot \vert \thetastar)$ are close.
We break the second term into a sum of negative KL divergences over $k$ tokens.
There are $O(k)$ KL terms and only $O(1)$ other error terms, which come from the distribution mismatch between the prompt and pretraining distributions.
If the KL terms are larger than the error terms, then $\rntheta$ has a negative limit.
If this holds for all $\theta\neq \thetastar$, then we have $\exp(n\cdot \rntheta) \rightarrow 0$ for all $\theta \neq \thetastar$, enabling in-context learning.

\subsection{Formal results}

\subsubsection{In-context learning under distinguishability}
We define a distinguishability condition which formalizes when in-context learning occurs.
Letting $\pjtheta(\obs) \coloneqq p(\obsseg[j]=\obs \vert \obsseg[1:j-1], \theta)$ be the output distribution of the $j$-th token given the previous tokens and $\ppromptj(\obs) \coloneqq \pprompt(\obsseg[j]=\obs \vert \obsseg[1:j-1])$ be the analogous distribution under the prompt distribution, the distinguishability condition depends on the KL divergence between $\ppromptj$ (which represents $\thetastar$) and $\pjtheta$ as well as error terms $\errstart$ and $\errdelim$ coming from the distribution mismatch between the prompt and pretraining distributions at the start and delimiter token for each example:
\begin{align}
    \KLj \coloneqq \E_{\obsseg[1\colon j-1] \sim \pprompt} [KL(\ppromptj \| \pjtheta)] & \\
    \errdelim \coloneqq 2(\log(\delimub) - \log(\delimlb)) + \log(\delimstartub) - \log(\delimstartlb), & ~~~~~ \errstart \coloneqq \log(1/\hiddenstartlb).
\end{align}
\begin{condition}[Distinguishability]
\label{thm:condition}
We define $\thetastar$ to be distinguishable if for all $\theta\in \Theta, \theta \neq \thetastar$,
\begin{align}
\label{eqn:distinguishability}
    \sum_{j=1}^k \KLj > \errstart + \errdelim.
\end{align}
\end{condition}
When the signal from KL divergence (LHS) is larger than the error terms, Equation~\ref{eqn:distinguishability} is satisfied (Figure~\ref{fig:redgreen}).
For larger example lengths $k$, the LHS increases, improving distinguishability. Intuitively, larger example lengths increase the proportion of the prompt sampled from the pretraining distribution by providing more evidence for Bayesian inference.
Under Condition~\ref{thm:condition}, the in-context predictor asymptotically achieves the optimal expected error.
\begin{theorem}
\label{thm:thm1}
Assume the assumptions in Section~\ref{sec:assumptions} hold. If Condition~\ref{thm:condition} holds, then as $n\rightarrow \infty$ the prediction according to the pretraining distribution is
\begin{align}
    \argmax_{\y}~ p(\y \vert \promptseq, \Xtest) \rightarrow \argmax_{\y}~\pprompt(\y\vert \Xtest).
\end{align}
Thus, the in-context predictor $\fn$ achieves the optimal 0-1 risk:
$
    \lim_{n\rightarrow \infty} \Lzeroone(\fn) = \inf_{f}~\Lzeroone(f).
$
\end{theorem}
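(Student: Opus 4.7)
The plan is to execute the program already outlined in \refsec{highlevel}: expand the posterior predictive $p(\y \vert \promptseq, \Xtest)$ as an integral over $\theta$ weighted by $\exp(n\cdot\rntheta)\,p(\theta)$ and show that, for every $\theta \neq \thetastar$, the weight $\exp(n\cdot\rntheta)$ vanishes as $n\to\infty$, while $\exp(n\cdot\rnthetastar)=1$. Because the full integrand is dominated by the integrable envelope $p(\theta)$ (Assumption~\ref{ass:regularity}, item 4), dominated convergence lets me pass the limit inside the integral in the expansion \refeqn{overall-approach} and conclude that $p(\y\vert\promptseq,\Xtest)$ converges, up to a $\y$-independent normalizing constant, to $\sum_{\hiddensegstarttest}p(\y\vert\Xtest,\hiddensegstarttest,\thetastar)\,p(\hiddensegstarttest\vert\promptseq,\Xtest,\thetastar)$. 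I then invoke \reflem{convergence} together with Assumption~\ref{ass:promptstartshift} (whose margin $\Delta$ controls how much the prompt start distribution can perturb the argmax) to identify this restricted limit with $\pprompt(\y\vert\Xtest)$, giving the first convergence statement. The $\Lzeroone$ claim follows because $\argmax_{\y}\pprompt(\y\vert\Xtest)$ is the Bayes optimal classifier under $\pprompt$; since $\Lzeroone\in[0,1]$, bounded convergence moves the pointwise convergence of $\fn(\Xtest)$ past the outer $\pprompt$-expectation.

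The heart of the argument is the claim that $\rntheta$ has a strictly negative limit whenever $\theta\neq\thetastar$, which I would establish by turning the heuristic of \refsec{heuristic} into a rigorous bound. The key step is the per-example factorization: apply the chain rule to $p(\promptseq,\Xtest\vert\theta)$, marginalize each conditional $p(\obsex_i\vert\obsex_{1:i-1},\theta)$ over the single intervening delimiter hidden state $\delim_{i-1}$, and use Assumption~\ref{ass:delimiterstates} to restrict that marginalization from $\sH$ to $\sD$ (via the Markov property and $p(\obsdelim_{i-1}\vert\delim_{i-1})=1$). The quantities $p(\delim_{i-1}\vert\obsex_{1:i-1},\theta)$ and $\sum_{\delim\in\sD}p(\obsseg_i\vert\delim,\theta)$ are then sandwiched between constants using the bounds $\delimlb,\delimub,\delimstartlb,\delimstartub,\hiddenstartlb,\emitlb$ from Assumptions~\ref{ass:delimiterbound}--\ref{ass:regularity}; the logarithms of these constants accumulate exactly into $\errstart+\errdelim$ per example. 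Applying the matching lower bound to the denominator yields
\[
    \rntheta \;\le\; (\errstart+\errdelim) \;+\; \frac{1}{n}\sum_{i=1}^{n}\log\frac{p(\obsseg_i\vert\theta)}{p(\obsseg_i\vert\thetastar)}.
\]

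Since the $\obsseg_i$ are i.i.d.\ under $\pprompt$, the strong law of large numbers sends the empirical average to $\E_{\obsseg\sim\pprompt}[\log p(\obsseg\vert\theta)-\log p(\obsseg\vert\thetastar)]$, which by the chain rule for KL divergence along the $k$ tokens of one example equals $-\sum_{j=1}^{k}\KLj$ up to the same start/delimiter slack on the $\thetastar$ side (bounded by Assumption~\ref{ass:regularity}). Condition~\ref{thm:condition} is exactly the inequality which forces the resulting limit to be strictly negative, so $\rntheta\to c(\theta)<0$ almost surely and hence $\exp(n\cdot\rntheta)\to 0$, completing the posterior-concentration step.

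The hard part will be handling the two distinct sources of distribution mismatch simultaneously with the sequential dependence induced by concatenating examples through a shared delimiter hidden state. Making \refeqn{errplusexp} rigorous requires tracking, at each of the $n$ example boundaries, both (i) the replacement of $\ppromptstart$ by the pretraining conditional $p(\h\vert\delim_{i-1},\thetastar)$ and (ii) the mismatch between the delimiter marginal under the prompt and under $p(\cdot\vert\theta)$ for a generic $\theta$. The budget is tight: if either source of error scaled with $n$ rather than contributing an $O(1)$ constant per example, Condition~\ref{thm:condition} would no longer suffice. Ensuring that these contributions enter only as the $\theta$-independent additive constants $\errstart$ and $\errdelim$ while the KL signal accumulates linearly as $\sum_{j=1}^{k}\KLj$ per example is precisely what makes \refeqn{distinguishability} the right threshold. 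A secondary subtlety is uniformity of $\exp(n\cdot\rntheta)\to 0$ in $\theta$, which is needed to apply dominated convergence cleanly when $\Theta$ is not discrete; for that case the integrable envelope $p(\theta)$ from Assumption~\ref{ass:regularity} already suffices to control the tails, while the per-$\theta$ negativity of the limit handles the pointwise convergence.
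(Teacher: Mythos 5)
Your proposal reconstructs the paper's proof essentially step for step: the $\exp(n\cdot\rntheta)$ expansion with dominated convergence over the $p(\theta)$ envelope, the per-example factorization through delimiter hidden states using Assumptions~\ref{ass:delimiterstates}--\ref{ass:delimiterbound}, the SLLN over the i.i.d.\ $\obsseg_i$, the chain-rule decomposition of the KL into $\sum_{j=1}^k\KLj$, and the appeal to \reflem{convergence} for the final argmax identification all match the paper. The only small bookkeeping slip is that in your displayed inequality the constant $\errstart$ appears already at the per-example factorization stage, whereas in the paper's derivation only $\errdelim$ (and an $O(1/n)$ term from $\examplelb$) accumulates at example boundaries, with $\errstart=-\log\hiddenstartlb$ entering later, after the SLLN step, when $\E_{\obsseg\sim\pprompt}[\log(p(\obsseg\vert\theta)/p(\obsseg\vert\thetastar))]$ is rewritten as $KL(\pprompt\|p(\cdot\vert\thetastar)) - KL(\pprompt\|p(\cdot\vert\theta))$ and the first KL term is bounded via Assumption~\ref{ass:regularity}; the resulting limit bound and conclusion are unchanged.
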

\begin{figure}[tbp]
    \centering
    \includegraphics[scale=0.26]{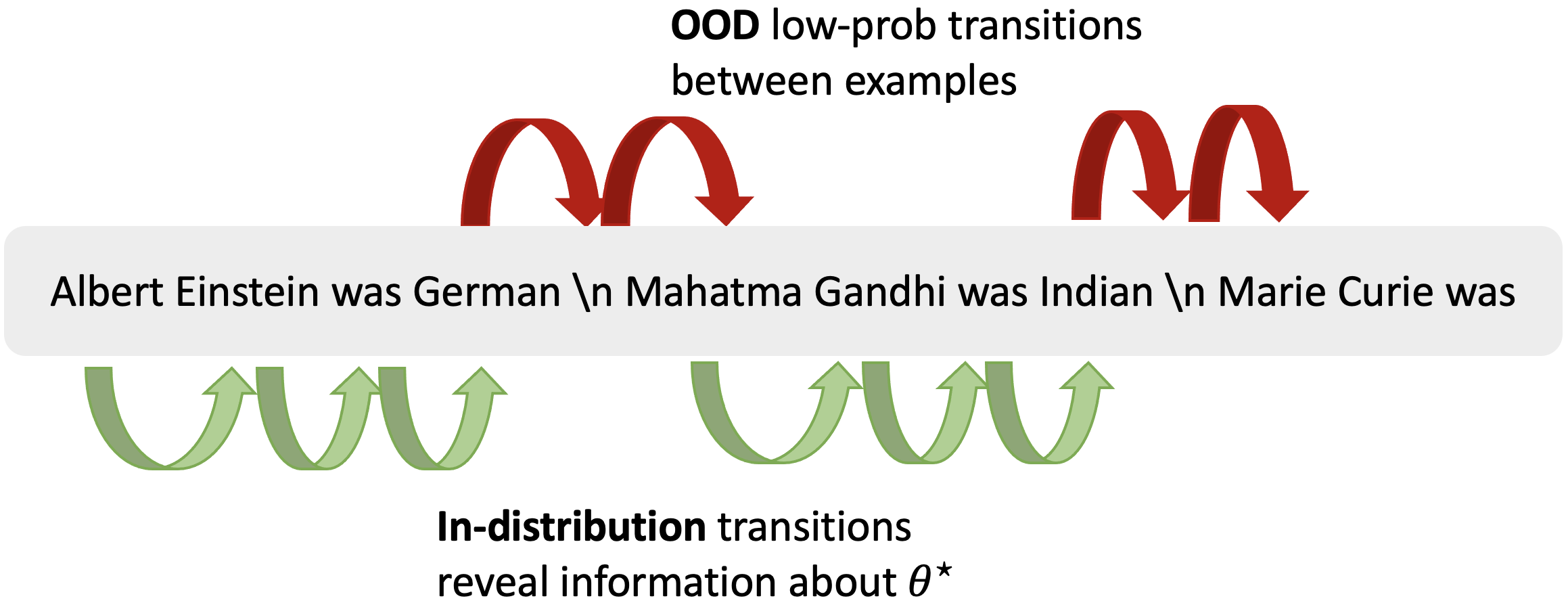}
    \caption{When the signal about the prompt \topic within each example (green) is greater than the error from low-probability transitions between examples, in-context learning succeeds in our latent \topic setting (Theorem~\ref{thm:thm1}). Increasing the example length $k$ increases the signal. The signal for in-context learning comes from tokens in both the inputs and the input-output mapping.}
\label{fig:redgreen}
\end{figure}

\subsubsection{Non-distinguishable case}

The distinguishability condition (Condition~\ref{thm:condition}) fails when there is some $\theta \neq \thetastar$ for which the KL divergence between $\theta$ and $\thetastar$ is less than the error terms. However, this also means that the output distributions of $\theta$ and $\thetastar$ are close in KL.
We leverage this to prove that the expected 0-1 error decreases with the example length $k$ under two different settings where distinguishability does not hold.

\paragraph{Continuity.}
Our first result relies on a continuity assumption between the \topic parameter and its corresponding output distribution.
Our assumption is based on prior works~\citep{kleijn2012bernstein}, where the KL divergence is assumed to have a 2nd-order Taylor expansion.
\begin{theorem}
\label{thm:continuity}
Let the set of $\theta$ which does not satisfy Equation~\ref{eqn:distinguishability} in Condition~\ref{thm:condition} to be $\badset$.
Assume that KL divergences have a 2nd-order Taylor expansion around $\thetastar$:
\begin{align}
    \forall j>1,~~\KLj = \frac{1}{2}(\theta - \thetastar)^\top \fisherinfj (\theta - \thetastar) + O(\|\theta - \thetastar\|^3)
\end{align}
where $\fisherinfj$ is the Fisher information matrix of the $j$-th token distribution with respect to $\thetastar$.
Let $\conditionnum = \frac{\max_{j}\lambdamax(\fisherinfj)}{\min{j}\lambdamin(\fisherinfj)}$ where $\lambdamax,\lambdamin$ return the largest and smallest eigenvalues.
Then for $k \geq 2$ and as $n\rightarrow \infty$, the 0-1 risk of the in-context learning predictor $\fn$ is bounded as
\begin{align}
    \lim_{n\rightarrow \infty} \Lzeroone(\fn) \leq \inf_{f} \Lzeroone(f) + g^\minv\left(O\left(\frac{\conditionnum\sup_{\theta \in \badset}(\errstart + \errdelim)}{k-1}\right)\right)
\end{align}
where
$
    g(\delta) = \frac{1}{2}((1-\delta)\log(1-\delta) + (1+\delta)\log(1+\delta))
$
is a calibration function~\citep{steinwart2007how,pires2016multiclass} for the multiclass logistic loss for $\delta \in [0, 1)$, assuming that the minimizers of the 0-1 risk and multiclass logistic risk are the same.
\end{theorem}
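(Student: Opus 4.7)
The plan is to reduce the non-distinguishable case to a statement about the KL-closeness of per-token predictive distributions of every $\theta \in \badset$ to those of $\thetastar$, and then convert this into an excess 0-1 risk bound via the calibration function $g$.

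First I would bound the diameter of $\badset$. By definition of $\badset$ and nonnegativity of KL, for any $\theta \in \badset$,
\begin{align}
    \sum_{j=2}^k \KLj \leq \sum_{j=1}^k \KLj \leq \errstart + \errdelim.
\end{align}
Plugging in the Taylor expansion (which holds for $j>1$) and using positive-definiteness of the Fisher matrices yields $\tfrac{1}{2}(k-1)\min_j \lambdamin(\fisherinfj)\,\|\theta-\thetastar\|^2 \leq \errstart + \errdelim + O(\|\theta-\thetastar\|^3)$, and hence $\|\theta-\thetastar\|^2 = O((\errstart+\errdelim)/((k-1)\min_j \lambdamin(\fisherinfj)))$ for all $\theta \in \badset$. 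Applying the Taylor expansion in the opposite direction with $\lambdamax$ then gives, uniformly over $\badset$ and over each $j>1$,
\begin{align}
    \KLj \leq \tfrac{1}{2}\max_j \lambdamax(\fisherinfj)\|\theta-\thetastar\|^2 + O(\|\theta-\thetastar\|^3) = O\!\left(\frac{\conditionnum(\errstart+\errdelim)}{k-1}\right).
\end{align}

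Next I would characterize the asymptotic in-context predictor. Using the decomposition in Equation~\ref{eqn:overall-approach}, the same analysis underlying Theorem~\ref{thm:thm1} gives $\exp(n\rntheta) \to 0$ for all $\theta \notin \badset$ while $\exp(n\rnthetastar) = 1$, and the remaining factors $p(\y\mid\Xtest,\hiddensegstarttest,\theta)p(\hiddensegstarttest\mid\promptseq,\Xtest,\theta)p(\theta)$ are uniformly bounded. Under a dominated-convergence argument, the posterior predictive distribution therefore concentrates onto a mixture of the per-topic predictives over $\badset$:
\begin{align}
    \lim_{n\to\infty} p(\y\mid\promptseq,\Xtest) = \int_{\badset} p(\y\mid\Xtest,\theta)\,dq(\theta)
\end{align}
for some limiting probability measure $q$ supported on $\badset$. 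By convexity of $\KL{p}{\cdot}$ in the second argument, together with Assumption~\ref{ass:promptstartshift} to absorb the TV gap between $\pprompt(\y\mid\Xtest)$ and the $\thetastar$-conditional used inside $\KLk$, this yields
\begin{align}
    \KL{\pprompt(\y\mid\Xtest)}{\lim_n p(\y\mid\promptseq,\Xtest)} \leq \sup_{\theta\in\badset}\KLk = O\!\left(\frac{\conditionnum(\errstart+\errdelim)}{k-1}\right).
\end{align}

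Finally I would translate this excess cross-entropy bound into an excess 0-1 risk bound. By the defining property of the calibration function for the multiclass logistic loss, $g(\Lzeroone(\fn)-\inf_f \Lzeroone(f)) \leq \LCE(\fn) - \inf_f \LCE(f)$ (using the assumption that the 0-1 and CE minimizers coincide), and applying $g^{-1}$ (well-defined and increasing on $[0,1)$) gives the claimed bound. The main obstacle I foresee is the middle step: rigorously exchanging limit and integral over $\theta$ and identifying $q$. This requires a uniform-in-$\theta$ quantitative decay rate for $\exp(n\rntheta)$ for $\theta$ outside any neighborhood of $\badset$, which the heuristic derivation of Section~\ref{sec:heuristic} supplies only pointwise; obtaining such uniformity likely needs additional continuity or compactness of $\Theta$, along with careful tracking of how the start-distribution mismatch quantified by $\Delta$ in Assumption~\ref{ass:promptstartshift} affects the limiting mixture.
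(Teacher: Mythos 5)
Your proposal matches the paper's proof essentially step for step: bound the diameter of $\badset$ via the Taylor expansion and nonnegativity of the $j=1$ KL term, use that with $\lambdamax$ to get $\KLk = O\bigl(\conditionnum(\errstart+\errdelim)/(k-1)\bigr)$, then characterize the limiting predictor as an argmax of a mixture of per-topic predictives over $\badset$, apply convexity of KL, and convert to 0-1 excess risk via the Pires--Szepesv\'ari calibration function (your middle steps reproduce the paper's Lemma~\ref{thm:kltoacc} inline). The soft spot you flag at the end---rigorously justifying the existence of the limiting mixture $q$ over $\badset$ from only pointwise decay of $\exp(n\rntheta)$---is also treated informally in the paper, so you have correctly identified a gap in the original argument rather than introduced one.
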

Since the inverse calibration function $g^\minv$ is roughly linear in $\epsilon$ for $\epsilon \leq 0.7$, the excess risk roughly decreases as $O(1/k)$.
When the ``worst-case condition number'' $\conditionnum$ of the Fisher information matrices is smaller (well-conditioned), the error decreases.
Intuitively, this means that there is no direction to vary $\thetastar$ in which the output distribution will sharply change. As a consequence, the \topics $\theta$ that are not distinguishable from the prompt \topic $\thetastar$ parameterize distributions that produce similar outputs to the prompt \topic and thus achieve a small error.

\paragraph{Varying-length test examples.}
In the setting where the length of $\Xtest$ is random (uniformly from 2 to $k$), we can give a similar error guarantee without continuity.
\begin{theorem}
\label{thm:varying}
Let the set of $\theta$ which does not satisfy Equation~\ref{eqn:distinguishability} in Condition~\ref{thm:condition} to be $\badset$.
Let the length of the test example $\Xtest$ be uniformly distributed between 2 and $k$, for $k\geq 2$.
Then for $k\geq 2$ and as $n\rightarrow \infty$, the 0-1 risk of the in-context learning predictor $\fn$ is bounded as
\begin{align}
    \lim_{n\rightarrow \infty} \Lzeroone(\fn) \leq \inf_{f}~\Lzeroone(f) + g^\minv\left(O\left(\frac{ \sup_{\theta \in \badset}(\errstart + \errdelim)}{k-1}\right)\right),
\end{align}
assuming that the minimizers of the 0-1 risk and multiclass logistic risk are the same.
\end{theorem}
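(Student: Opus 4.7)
The plan is to adapt the overall Bayesian marginalization argument used in Theorem~\ref{thm:thm1}, but to exploit the fact that every $\theta \in \badset$ is already ``close'' to $\thetastar$ in an averaged KL sense (this is precisely what the failure of Condition~\ref{thm:condition} gives us). Randomizing the test-example length then lets me convert the per-example KL bound $\sum_{j=1}^k \KLj \leq \errstart+\errdelim$ into a per-token bound, which is what drops the condition number $\conditionnum$ that appeared in Theorem~\ref{thm:continuity}.

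First, I would repeat the argument of Section~\ref{sec:highlevel} and the proof of Theorem~\ref{thm:thm1} essentially verbatim up to the point where we analyze $\exp(n\cdot\rntheta)$. For any $\theta\notin \badset$, Condition~\ref{thm:condition} holds, so the heuristic derivation in Section~\ref{sec:heuristic} yields a strictly negative asymptotic limit for $\rntheta$ and hence $\exp(n\cdot \rntheta)\to 0$. By dominated convergence in Equation~\ref{eqn:overall-approach}, the in-context posterior $p(\theta\mid \promptseq, \Xtest)$ concentrates on $\badset\cup\{\thetastar\}$, and the in-context predictive becomes a $p(\theta\mid \promptseq,\Xtest)$-weighted mixture of per-topic predictive distributions supported on this set.

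Second, I would quantify how far those mixture components can be from $\pprompt$. By definition of $\badset$, every $\theta\in\badset$ satisfies $\sum_{j=1}^k \KLj \leq \errstart+\errdelim$, and nonnegativity of KL gives
\begin{align*}
    \frac{1}{k-1}\sum_{j=2}^k \KLj \;\leq\; \frac{\errstart+\errdelim}{k-1}.
\end{align*}
Because the test length $|\Xtest|$ is uniform in $\{2,\ldots,k\}$, the token position $j$ we must predict is uniform in $\{2,\ldots,k\}$, so the expected excess cross-entropy of predicting with $\pjtheta$ instead of $\ppromptj$ equals exactly $\mathbb{E}_j[\KLj]$, which is bounded by $(\errstart+\errdelim)/(k-1)$. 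Convexity of KL in its second argument lifts the same bound from individual $\theta\in\badset$ to the posterior-weighted mixture, giving an excess cross-entropy bound of $\sup_{\theta\in\badset}(\errstart+\errdelim)/(k-1)$ for the in-context predictor.

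Third, I would invoke the calibration function $g$ for multiclass logistic loss~\citep{steinwart2007how,pires2016multiclass}, under the stated assumption that the minimizers of $\Lzeroone$ and the multiclass logistic risk coincide: this converts the excess cross-entropy bound into the excess $0$-$1$ bound via $g^\minv$, yielding the claimed $\lim_n \Lzeroone(\fn) \leq \inf_f \Lzeroone(f) + g^\minv(O(\sup_{\theta\in\badset}(\errstart+\errdelim)/(k-1)))$. The main obstacle, and the step I would spend the most care on, is the transition from ``posterior concentrates on $\badset\cup\{\thetastar\}$'' to an honest per-token cross-entropy bound for the mixture predictor: I need to absorb (i) the replacement of $\pjthetastar$ by $\ppromptj$ into the already-present $\errstart+\errdelim$ from Assumptions~\ref{ass:delimiterbound} and~\ref{ass:promptstartshift}, (ii) the averaging over the hidden start state $\hiddensegstarttest$ appearing in Equation~\ref{eqn:overall-approach}, and (iii) the uniform average over $j$, all simultaneously. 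Once this bookkeeping is set up, the calibration step is essentially the same as in Theorem~\ref{thm:continuity}, but crucially with no quadratic expansion of KL in $\theta-\thetastar$, which is why $\conditionnum$ does not appear.
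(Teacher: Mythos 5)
Your proposal matches the paper's proof essentially step for step: use failure of Condition~\ref{thm:condition} to get $\sum_{j=2}^k \KLj \leq \errstart+\errdelim$, average over the uniformly random predicted-token position $j\in\{2,\dots,k\}$ to obtain the per-token bound $(\errstart+\errdelim)/(k-1)$, and then pass through convexity of KL for the posterior-weighted mixture and the multiclass-logistic calibration function. The paper packages your steps (posterior concentration, convexity of KL over $\badset$, and the calibration argument) into Lemma~\ref{thm:kltoacc}, so your ``bookkeeping'' concerns are precisely what that lemma handles; the route and the key idea (random test length turns a per-example KL bound into a per-token one, avoiding the Taylor expansion and hence $\conditionnum$) are the same.
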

Instead of measuring only the error at the $k$-th token, we average the prediction error on the 2nd to $k$-th tokens.
However, we leave bridging the mismatch between training examples, which are consistently length $k$, and test examples, which have random length, to future work.

\vspace{-0.05in}
\section{Simulations}
\label{sec:experiments}
\begin{figure}[tbp]
    \centering
    \begin{subfigure}[t]{0.47\linewidth}
       \centering
       \includegraphics[width=\linewidth]{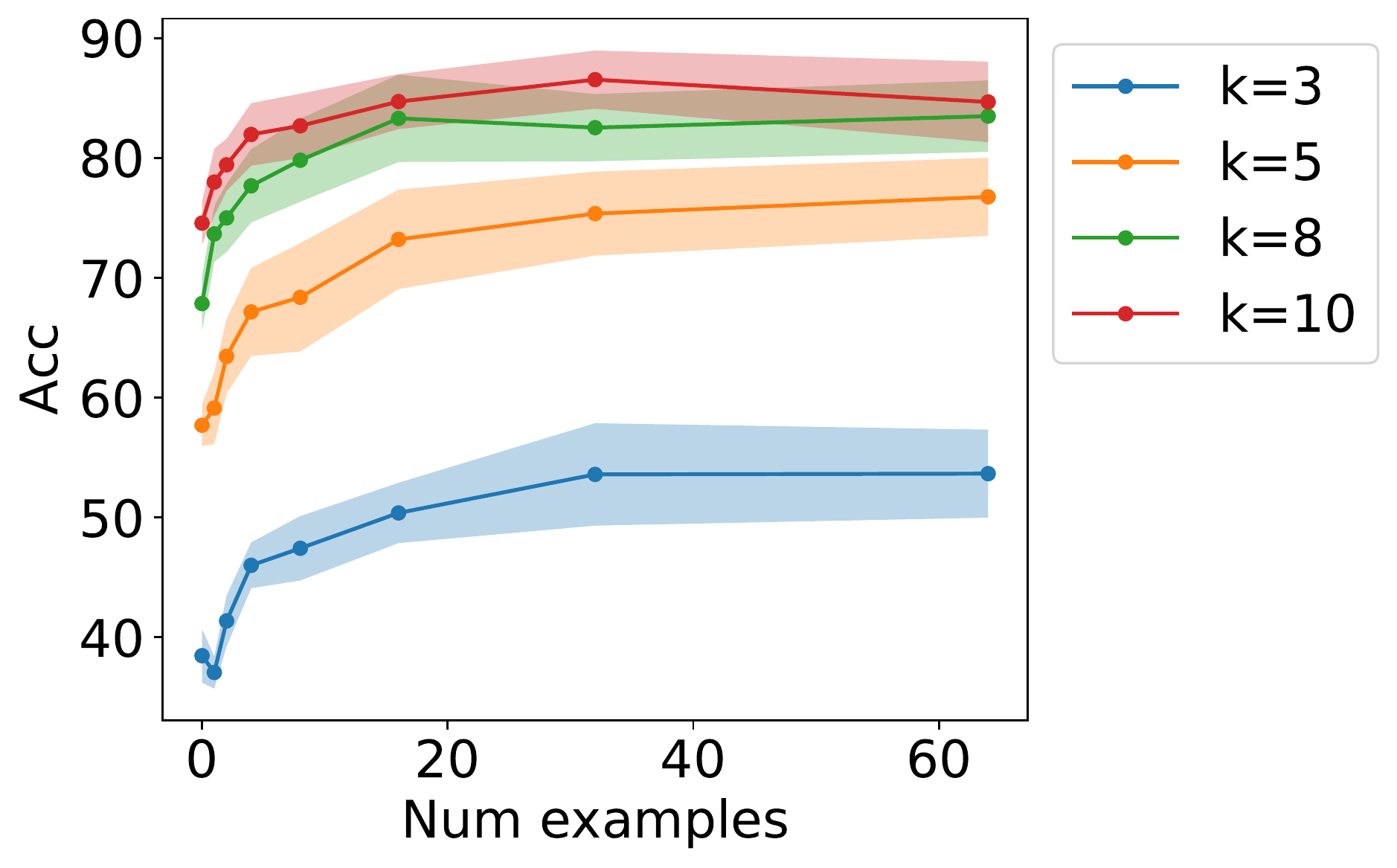}
    \end{subfigure}
    \hfill
    \begin{subfigure}[t]{0.47\linewidth}
       \centering
       \includegraphics[width=\linewidth]{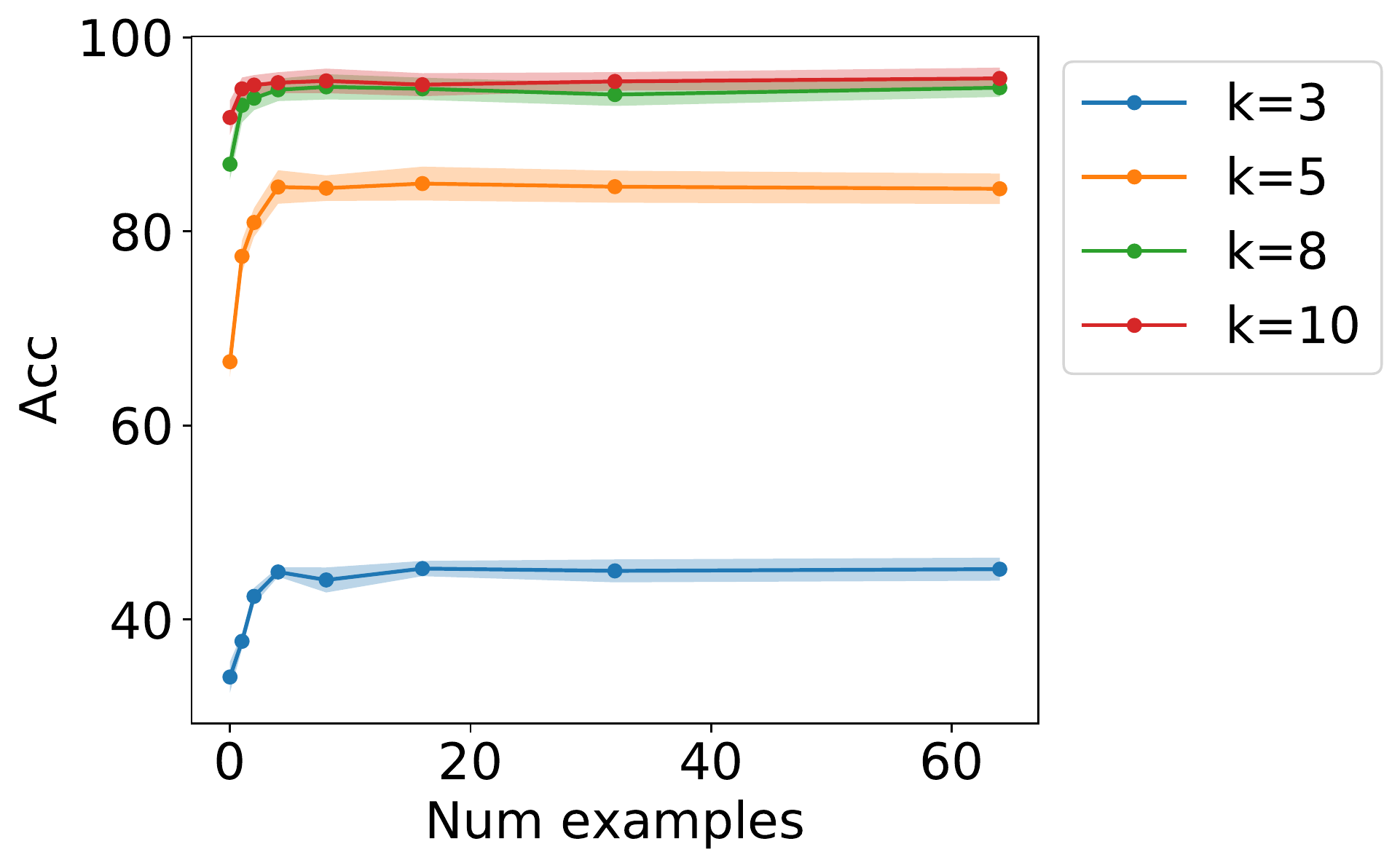}
    \end{subfigure}
    \caption{In-context accuracy (95\% intervals) of Transformers (left) and LSTMs (right) on the \dsname dataset. Accuracy increases with number of examples $n$ and length of each example $k$.}
\label{fig:transformervsrnn}
\end{figure}
\begin{figure}[tbp]
    \centering
    \begin{subfigure}[t]{0.32\linewidth}
       \centering
       \includegraphics[width=\linewidth]{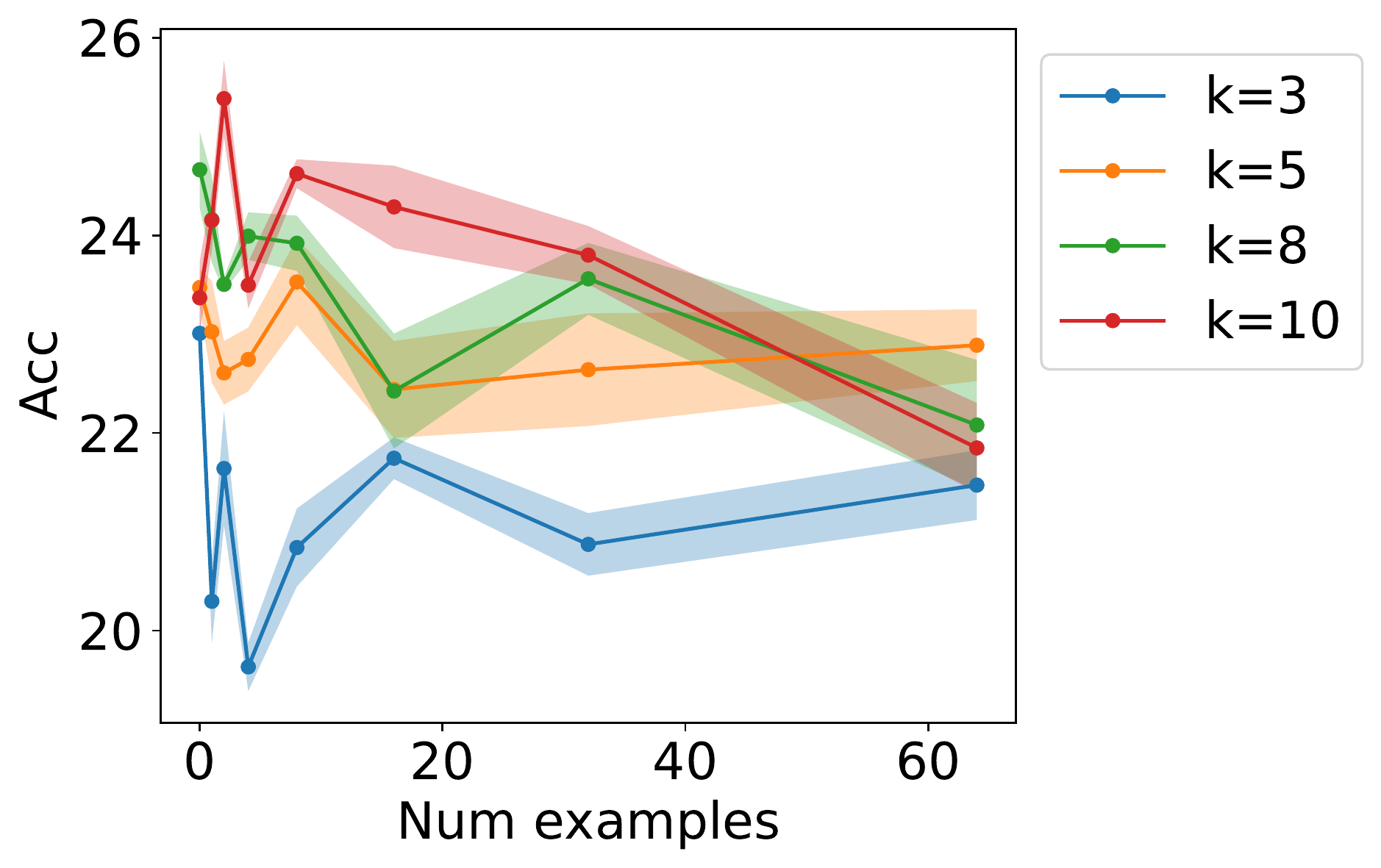}
    \end{subfigure}
    \hfill
    \begin{subfigure}[t]{0.32\linewidth}
       \centering
       \includegraphics[width=\linewidth]{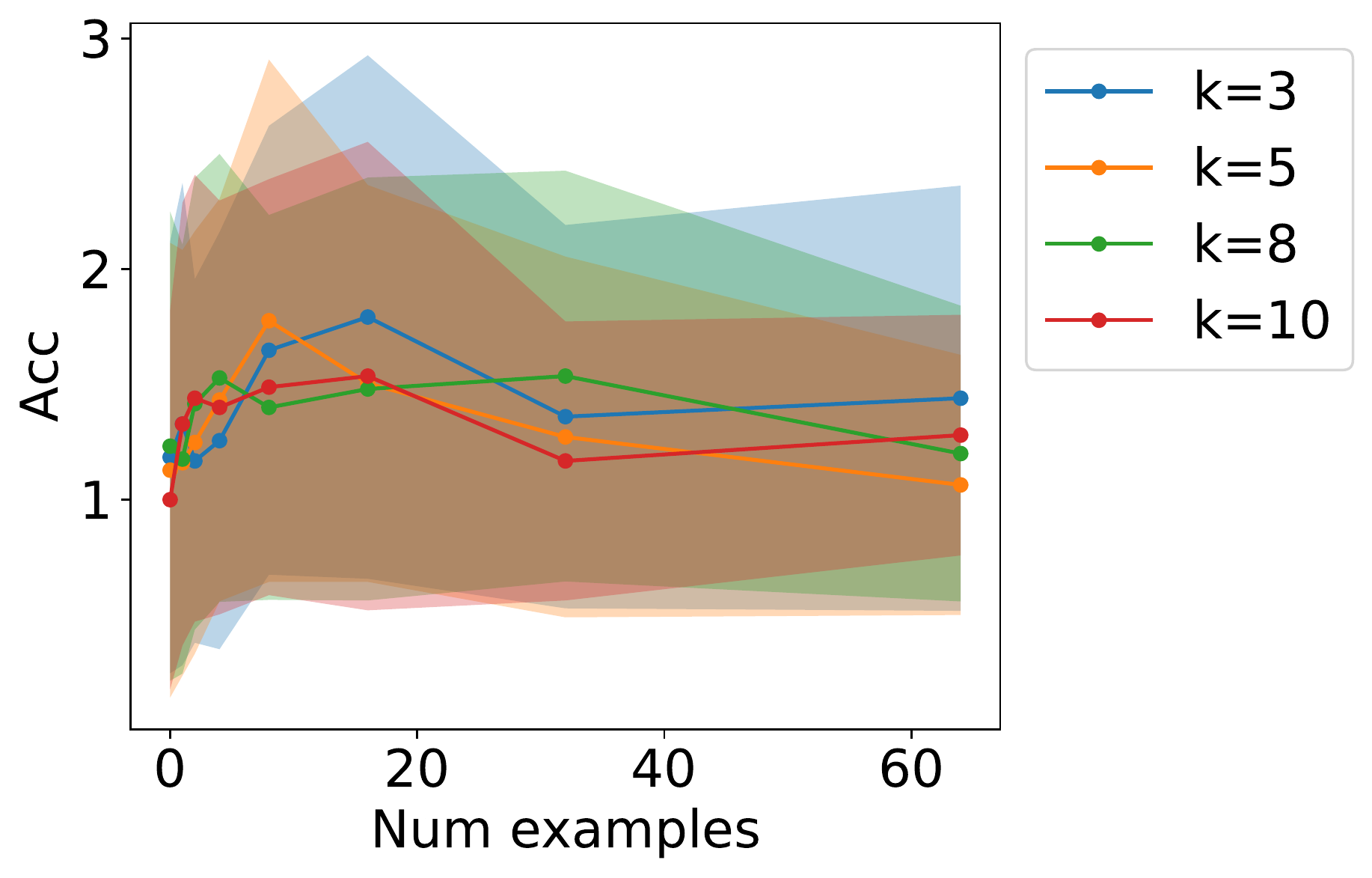}
    \end{subfigure}
    \hfill
    \begin{subfigure}[t]{0.32\linewidth}
       \centering
       \includegraphics[width=\linewidth]{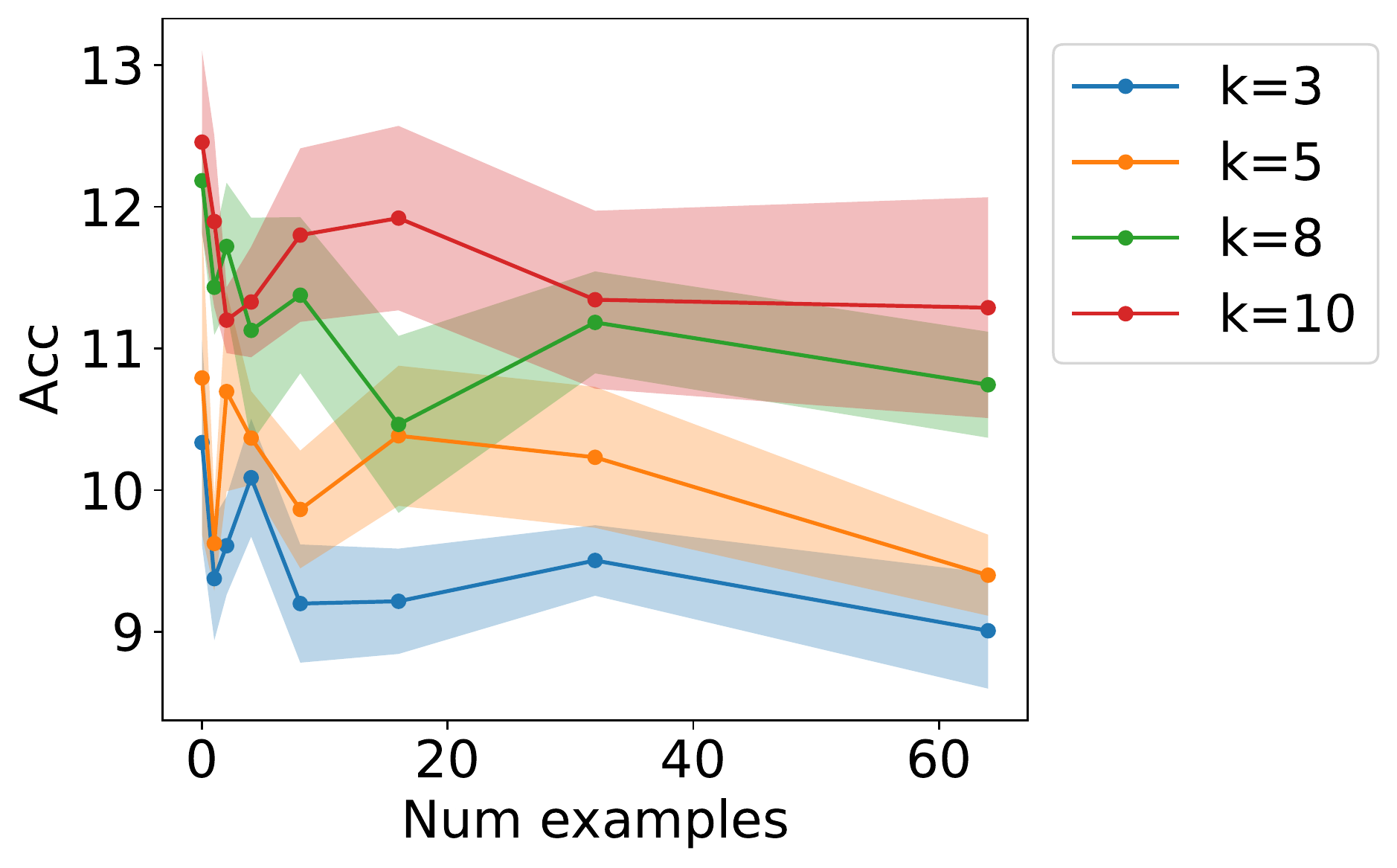}
    \end{subfigure}
    \caption{Ablation studies for 4 layer Transformers on the \dsname dataset with vocab size 50. \textbf{(Left)} When pretrained with only one \topic, in-context learning fails. \textbf{(Middle)} When the pretraining data has random transitions, the model sees all token transitions but in-context learning fails. \textbf{(Right)} When prompts are from random unseen \topics, in-context learning fails to extrapolate. }
    \label{fig:ablations}
\end{figure}

We generate the \dsname dataset and show that Transformers~\citep{vaswani2017attention} and LSTMs~\citep{hochreiter1997lstm} trained on \dsname exhibit in-context learning.
In the theory, we assumed that the pretrained LM fits the pretraining distribution exactly. Here, we pretrain LMs to approximate the pretraining distribution, showing that the in-context learning properties of the pretraining distribution transfer to the LM.

\paragraph{\dsname dataset.} We construct the \dsname dataset according to our theory (see Appendix~\ref{app:ginc-description}).
For pretraining, we define a uniform mixture of HMMs over a family $\Theta$ of 5 \topics to generate 1000 pretraining documents with $\sim$10 million tokens total.
For prompting, we generate prompts with 0 to 64 training examples and example lengths $k\in\{3,5,8,10\}$ (2500 prompts for each setting). The target token $\ytest$ is taken to be the most likely output $\argmax_{\y} \pprompt(\y \vert \Xtest)$ instead of sampling so that the intrinsic error is 0.

\paragraph{Main result.} We train GPT-2-based Transformers~\citep{radford2019language} and LSTMs on three versions of the \dsname dataset with vocabulary sizes 50, 100, and 150, then evaluate the in-context accuracy (see Appendix~\ref{app:transformer},~\ref{app:lstm}). We average all results over 5 pretraining runs.
Figure~\ref{fig:transformervsrnn} shows that for both Transformer and LSTMs, in-context accuracy improves as the number of prompt examples $n$ and the example length $k$ increase, verifying our theory.

\paragraph{Ablations on the latent \topic structure.}
We ablate the role of the mixture-of-\topics structure in \dsname.
In Figure~\ref{fig:ablations} (left), we pretrain a 4 layer Transformer on data with only one \topic (removing the prior) from $\Theta$, resulting in flat in-context learning curves.
Figure~\ref{fig:ablations} (middle) shows that pretraining on random pretraining data, which contains all possible token transitions, in-context learning also fails. Therefore, the mixture-of-\topics structure is important and simply seeing diverse token transitions does not enable in-context learning.

\paragraph{Extrapolation to unseen \topics.}
Full generative control of \dsname allows for experimentation with latent variables in the pretraining distribution.
For example, in large-scale datasets, it is difficult to test whether a \topic or task is in the pretraining data.
We test this in \dsname by testing the in-context accuracy of a 4 layer Transformer on prompts generated from 5 random \topics that are not in the pretraining family of \topics.
Figure~\ref{fig:ablations} (right) shows that in-context learning also fails for these novel \topics.

\paragraph{Effect of model size and architecture.}
Figure~\ref{fig:modelsizes} shows that increasing the size of the Transformer (4, 12, 16 layers) steadily increases the in-context accuracy, corroborating the results of~\citet{brown2020gpt3}.
Table~\ref{table:sizes} shows that even though larger Transformers may have the same pretraining loss (e.g., 12 and 16 layer Transformers both get 1.33 validation loss for vocab size 50), the in-context accuracy still improves (81\% to 85\% from 12 to 16 layers), suggesting that larger models can improve in-context learning beyond improving pretraining perplexity.
This may be related to phenomena from overparameterization and overtraining~\citep{zhang2017understanding,power2021grokking}.
Finally, the model architecture also plays a role --- LSTMs consistently outperform Transformers on \dsname despite having fewer parameters, perhaps due to the similarity between HMMs and LSTMs.
We leave analysis of the effect of model scaling and model architecture as open questions.

\begin{figure}[tbp]
\centering
\begin{minipage}{0.4\linewidth}
   \centering
   \includegraphics[width=\linewidth]{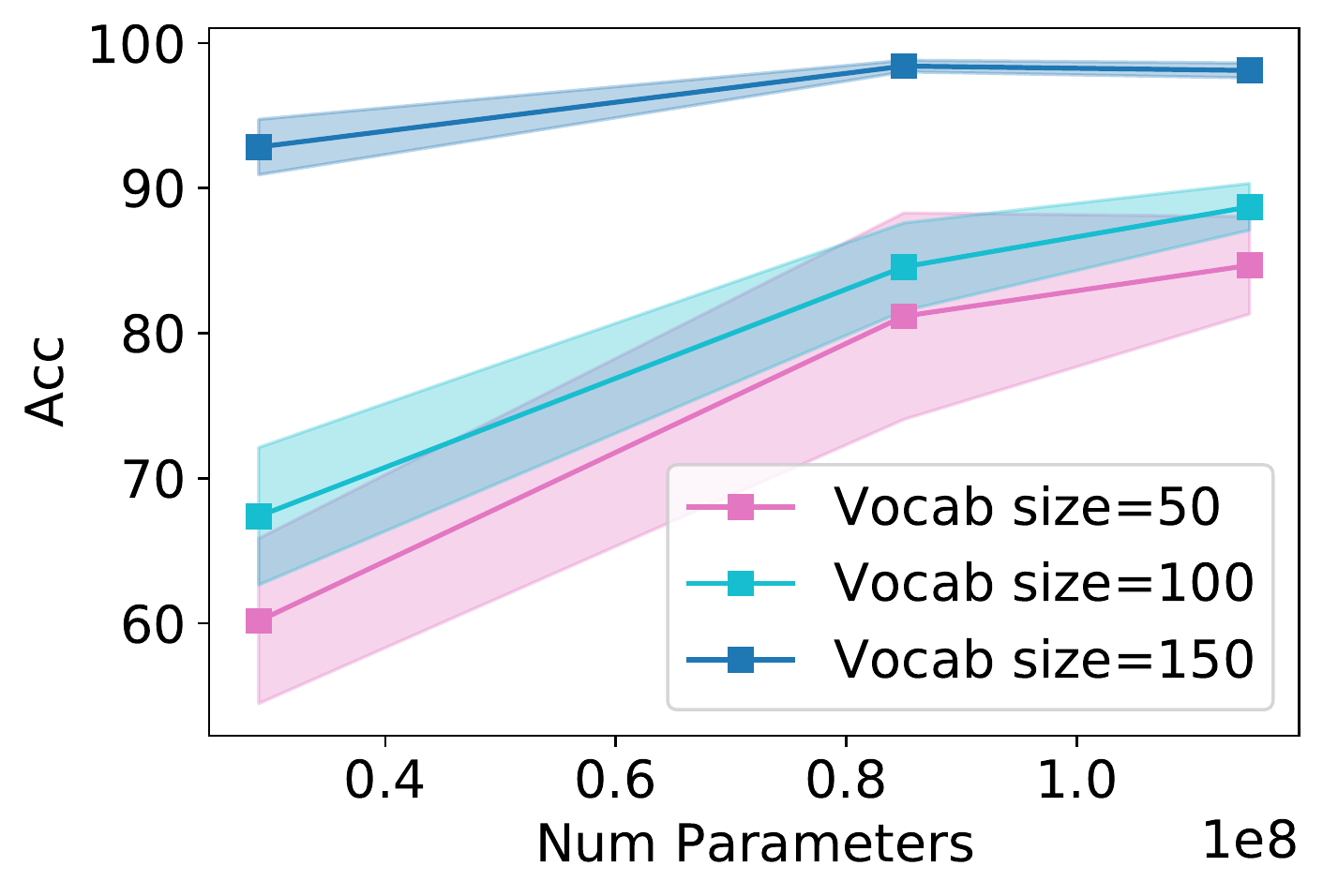}
    \caption{In-context accuracy (95\% intervals) of Transformers improves as model size increases on the \dsname dataset for vocabulary sizes 50, 100, and 150. }
    \label{fig:modelsizes}
\end{minipage}
\hfill
\begin{minipage}{0.52\linewidth}
\centering
\resizebox{\columnwidth}{!}{
\begin{tabular}{lcccc}
\toprule
\multirow{2}{*}{Model} & \multirow{2}{*}{\# Params} & \multirow{2}{*}{\shortstack{Train loss\\ (pretraining)}} & \multirow{2}{*}{\shortstack{Val loss\\ (pretraining)}} &\multirow{2}{*}{ \shortstack{In-context Acc}}\\
      & & & &\\
\midrule
Vocab size 50, $k=10,n=64$ & & & & \\
~~~~Transformer (4 layer) & 29M & 1.49 & 1.50 & 60.2 $\pm$ 5.7 \\
~~~~Transformer (12 layer) & 85M & 1.31 & 1.33 & 81.2 $\pm$ 7.1 \\
~~~~Transformer (16 layer) & 115M & 1.31 & 1.33 & 84.7 $\pm$ 3.4 \\
~~~~~LSTM & 28M & 1.31 & 1.35 & 95.8 $\pm$ 1.11 \\
Vocab size 100, $k=10,n=64$ & & & & \\
~~~~Transformer (4 layer) & 29M & 1.58 & 1.59 & 67.4 $\pm$ 4.7 \\
~~~~Transformer (12 layer) & 85M & 1.40 & 1.42 & 84.6 $\pm$ 3.0 \\
~~~~Transformer (16 layer) & 115M & 1.41 & 1.43 & 88.7 $\pm$ 1.6 \\
~~~~~LSTM & 28M & 1.43 & 1.44 & 95.8 $\pm$ 1.54 \\
Vocab size 150, $k=10,n=64$ & & & & \\
~~~~Transformer (4 layer) & 29M & 1.44 & 1.45 & 92.8 $\pm$ 1.9 \\
~~~~Transformer (12 layer) & 85M & 1.27 & 1.28 & 98.4 $\pm$ 0.4 \\
~~~~Transformer (16 layer) & 115M & 1.27 & 1.28 & 98.1 $\pm$ 0.5 \\
~~~~~LSTM & 28M & 1.26 & 1.31 & 99.2 $\pm$ 1.06 \\
\bottomrule
\end{tabular}
}
\caption{In-context accuracies (95\% intervals) on \dsname with vocab sizes (50, 100, 150) for Transformers and LSTMs. Accuracy improves with scale even though the pretraining loss may be the same.
}
\label{table:sizes} 
\end{minipage}
\end{figure}

\begin{figure}[t]
   \centering
    \begin{subfigure}[t]{0.4\linewidth}
       \centering
       \includegraphics[width=\linewidth]{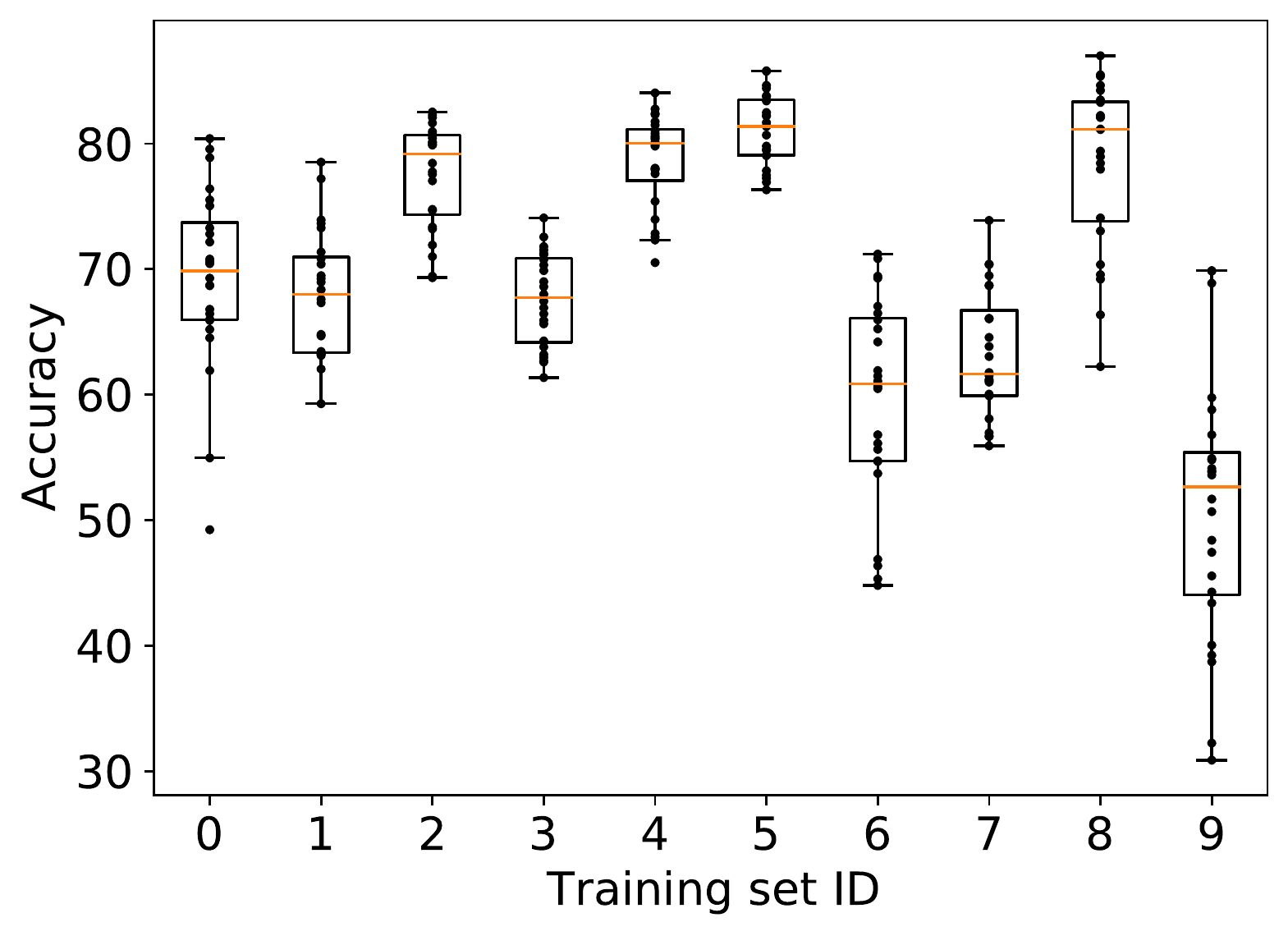}
    \end{subfigure}
    \hfill
    \begin{subfigure}[t]{0.45\linewidth}
       \centering
       \includegraphics[width=\linewidth]{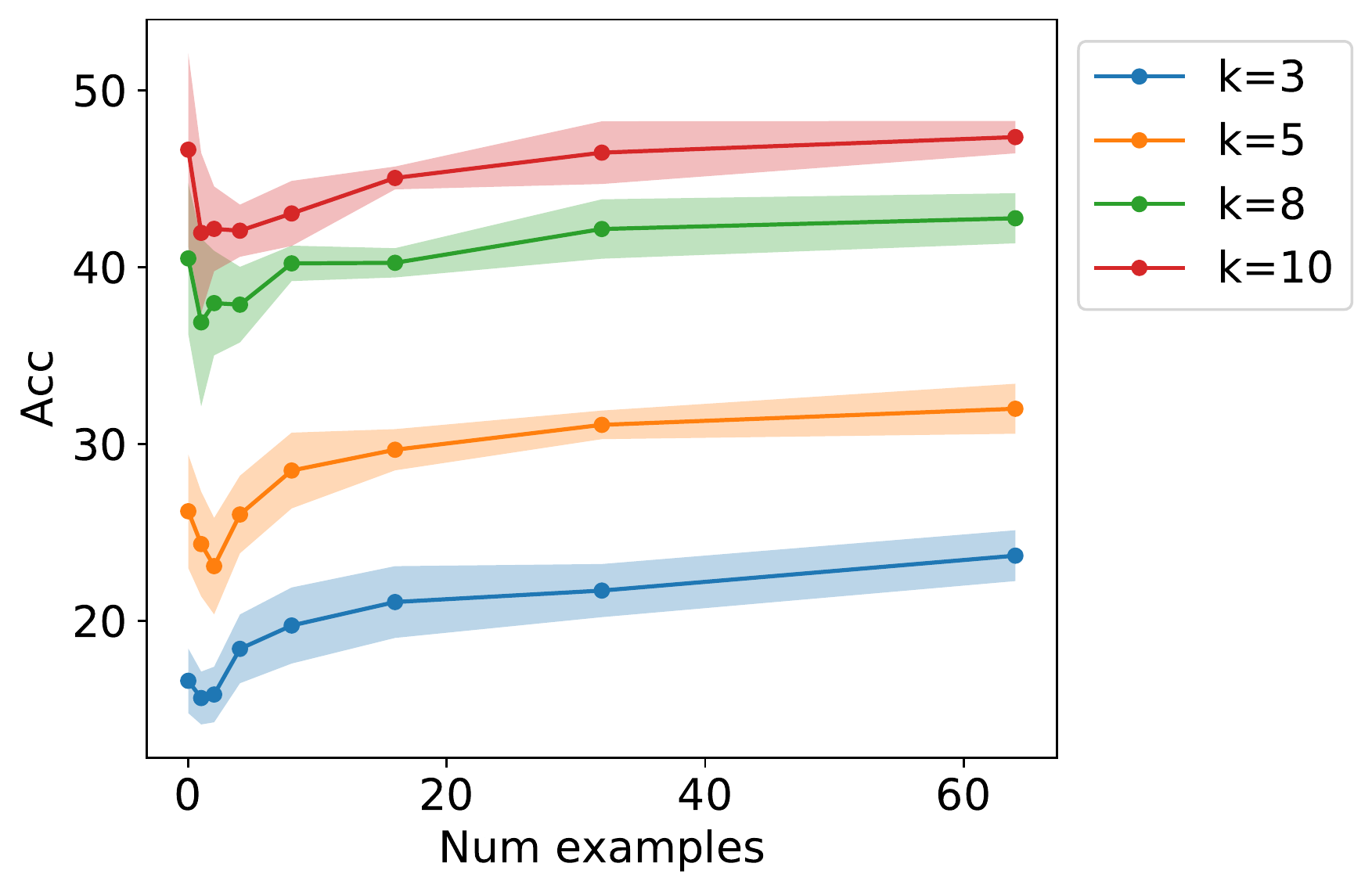}
    \end{subfigure}
    \caption{\textbf{(Left)} In-context accuracy varies widely with example ordering. Each training ID refers to a set of training examples. Each dot refers to the in-context learning accuracy of one permutation of the training examples for that particular training ID. \textbf{(Right)} Zero-shot performance can be higher than one/few-shot performance in some settings in \dsname, mirroring the behavior of GPT-3 on some datasets such as LAMBADA~\citep{brown2020gpt3}. The few-shot setting introduces the distracting prompt structure, which can initially lower accuracy.}
\label{fig:permutation_zerofew}
\end{figure}

\paragraph{Sensitivity to example ordering.}
In Figure~\ref{fig:permutation_zerofew} (left), we test the sensitivity of in-context accuracy on \dsname to the ordering of the prompt examples, following~\citet{zhao2021calibrate}.
For this experiment, we consider prompts generated from a single \topic and prompt start distribution.
We sample 10 different sets (leading to 10 training set IDs) of 4 examples and generate all 24 possible permutations for each example set.
We consider the in-context accuracy of the 4 layer Transformer trained on \dsname with vocabulary size 50.
Similarly to the behavior of GPT-3~\citep{zhao2021calibrate}, there is a significant variation (10--40\% difference) between permutations of the same set of examples.

\paragraph{Zero-shot is sometimes better than few-shot.}
In some settings in \dsname, we find that zero-shot performance can be better than few-shot performance. This mirrors GPT-3 on some datasets (e.g., LAMBADA, HellaSwag, PhysicalQA, RACE-m, CoQA/SAT analogies for smaller models~\citep{brown2020gpt3}).
This occurs especially when the transition probabilities in \dsname are lower entropy (controlled via a temperature parameter).
For this experiment, we consider \dsname with transition matrix temperature parameter 0.01 (instead of 0.1), 12 \topics, and vocabulary size 100.
Figure~\ref{fig:permutation_zerofew} (right) shows that here, few-shot accuracy is initially worse than zero-shot accuracy, but can recover with more examples. We hypothesize that the distracting prompt structure initially decreases the accuracy in this setting.

\vspace{-0.05in}
\section{Discussion and related work}
\label{sec:relatedwork}

\vspace{-0.01in}
\paragraph{Learning via Bayesian inference and extrapolation.}
The canonical Bernstein-von Mises theorem~\citep{vaart98asymptotic} does not apply for in-context learning since the prompt examples are not independent under the pretraining distribution.
\citet{gunst2008asymptotic} show a Bernstein-von Mises-type result for observations from an HMM, but do not handle observations from a different distribution.
Future directions include more precise asymptotic results about the posterior distribution and results under misspecification/extrapolation~\citep{kleijn2012bernstein}.
A possible avenue for extrapolation to some types of unseen concepts is to factorize the latent \topic into semantics and syntax. While the pretraining data may contain only some semantics-syntax pairs, the language model could generalize to unseen pairs if it learns generalizable syntactical operations such as copying or reordering.

\vspace{-0.01in}
\paragraph{Topic models and HMMs.}
Topic models such as LDA~\citep{blei03lda} also have document-level latent variables, but learning is typically relies on algorithms such as EM~\citep{demp1977em}, variational inference~\citep{jordan1999variational}, or MCMC~\citep{metropolis1953equation,hastings1970monte}.
We focus on learning as a natural result of Bayesian inference without an explicit inference algorithm.
\citet{wei2021why} also use an HMM model in their pretraining analysis. However, they analyze how pre-trained representations learned with masked LMs~\citep{devlin2019bert,liu2019roberta,lewis2020bart,clark2020electra} can improve optimization-based downstream learning~\citep{li2021prefix,lester2021power} rather than in-context learning.

\vspace{-0.01in}
\paragraph{Bridging the mismatch between pretraining and prompting.}
Prior works support our theoretical intuitions that reducing the prompt distribution mismatch would improve in-context learning.
Finetuning LMs on text with a prompting format improves its zero-shot performance~\citep{wei2021finetuned,sanh2021multitask} and optimizing prompt templates improves few-shot finetuning~\citep{jiang2020how,schick2021exploiting,shin2020eliciting,gao2021making}.
~\citet{zhao2021calibrate,holtzman2021surface} improve in-context accuracy via calibration or renormalization, a form of adaptation to the prompt distribution.

\vspace{-0.01in}
\paragraph{Meta-learning.}
Meta-learning methods can also train a sequence model to learn from examples~\citep{ravi2017metalearning}. However, meta-learning models are trained to learn, while in-context learning emerges from LM pretraining.

\vspace{-0.01in}
\paragraph{Studying large-scale phenomena at a small scale.}
We can study in-context learning, a large scale phenomenon, at a small scale in \dsname because the complexity of the pretraining distribution (HMM hidden state size, number of latent \topics) is small, such that the data and models are relatively larger.
Since \dsname is synthetic, we can also control the latent data properties (e.g., unseen \topics) to make predictions about large LMs while working at a small scale.

\vspace{-0.05in}
\section{Conclusion}
We cast in-context learning as implicit Bayesian inference, where the pretrained LM implicitly infers a \topic when making a prediction.
We show that in-context learning occurs when the pre-training distribution is a mixture of HMMs.
Our work provides a first step towards understanding in-context learning, which we hope will provide insight for improving pretraining and prompting.

\section*{Acknowledgements}
We thank Tianyi Zhang, Frieda Rong, Lisa Li, Colin Wei, Shibani Santurkar, Tri Dao, Ananya Kumar, and Shivam Garg for helpful discussions and feedback.
SMX is supported by an NDSEG Fellowship. The work is partially supported by an Open Philanthropy Project Award, SDSI, and SAIL at Stanford University.
TM acknowledges support of Google Faculty Award, NSF IIS 2045685, the Sloan Fellowship, and JD.com. Toyota Research Institute provided funds to support this work.

\bibliography{main}

\appendix
\section{Framework details}

\paragraph{Prompt distribution details.}

For in-context learning, we sample a prompt from a new distribution $\pprompt$, which consists of $n$ independent training examples and 1 test example.
We first sample $n$ hidden segments $\hiddenseg$ of length $k$ by sampling the first element $\hiddensegstart = \hiddenseg[1]$ from a prompt start distribution $\ppromptstart$.
Then, we sample the rest of the segment $\hiddensegend = \hiddenseg[2:k]$ from the hidden transition distribution of the pretraining distribution $p$ corresponding to a particular \topic $\thetastar$:
\begin{align}
    \hiddenseg_1, \dots, \hiddenseg_{n}, &~~~~~ \hiddenseg_i = [\h_{i, 1}, \dots, \h_{i,k}]\\
    \hiddensegstart_i = \hiddenseg_i[1] \sim \ppromptstart,&~~~~~ \hiddensegend_i = \hiddenseg_i[2:k] \sim p(\hiddensegend_i \vert \hiddensegstart, \thetastar).
\end{align}
To end each example (except the test example), we sample $n$ delimiters $\delim \in \sD$ from $\ppromptdelim$:
\begin{align}
    \delim_1, \dots, \delim_n, ~~~~~~ \delim_i \sim \ppromptdelim.
\end{align}
Conditioned on hidden variables $\hiddenseg_i$ and $\delim_i$, we sample the observed tokens $\obsseg_i = [\obs_{i,1},\dots, \obs_{i,k}]$ and $\obsdelim_i$ respectively from the pre-training distribution:
\begin{align}
    \obsseg_1,\dots, \obsseg_{n}, &~~~~~ \obsseg_i \sim p(\obsseg_i \vert \hiddenseg_i)\\
    \obsdelim_1,\dots, \obsdelim_n, &~~~~~  \obsdelim_i \sim p(\obsdelim_i \vert \delim_i, \thetastar)
\end{align}
The ``input'' for each example is $\X_i = \obsseg_i[1:k-1]$ and the ``output'' is $\y_i = \obsseg_i[k]$. 
Taking $S$ to be the sequence of training examples (without the test example), the resulting prompt sequence is 
\begin{align}
    [\promptseq, \Xtest] = [\obsseg_1, \obsdelim_1, \dots, \obsseg_n, \obsdelim_n, \Xtest] = [\X_1, \y_1, \obsdelim_1, \X_2, \y_2, \obsdelim_2, \dots, \X_n, \y_n, \obsdelim_n, \Xtest] \sim \pprompt
\end{align}
where $\Xtest = \X_{n+1} = \obsseg_{n+1}[1:k-1]$ is sampled via the same process but with $k-1$ elements.

\section{Propositions for Theorem~\ref{thm:thm1}}

The following propositions, which lower bound the probability of a delimiter token and probability of an example under $\thetastar$, are direct corollaries of the assumptions.
\begin{proposition}
\label{thm:translb}
For all $i$,
we have $p(\delim_i \vert \obsseg_1,\obsdelim_1, \dots, \obsseg_i, \thetastar) > \delimlb$ and $p(\delim_i \vert \obsseg_1,\obsdelim_1, \dots, \obsseg_i, \theta) < \delimub$.
\end{proposition}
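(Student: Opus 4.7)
The plan is to prove both inequalities by marginalizing over the hidden state associated with the final token of $\obsseg_i$ and then invoking Assumption~\ref{ass:delimiterbound} uniformly in that hidden state. Let $\h_{i,k}$ denote the hidden state that emits the last token of $\obsseg_i$, which is the state immediately preceding the delimiter state $\delim_i$ in the HMM.

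By the law of total probability and the Markov property of the HMM under any $\theta$,
\begin{align}
p(\delim_i \mid \obsseg_1, \obsdelim_1, \dots, \obsseg_i, \theta) = \sum_{\h \in \sH} p(\delim_i \mid \h, \theta)\, p(\h_{i,k} = \h \mid \obsseg_1, \obsdelim_1, \dots, \obsseg_i, \theta),
\end{align}
where I have used that conditioned on $\h_{i,k}$, the next hidden state $\delim_i$ is independent of the past observations.

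For the lower bound under $\thetastar$, I would apply the $\delimlb$ bound from Assumption~\ref{ass:delimiterbound} pointwise: since $p(\delim_i \mid \h, \thetastar) > \delimlb$ for every $\h \in \sH$ (because $\delim_i \in \sD$), pulling the bound out of the convex combination over $\h_{i,k}$ gives the desired $> \delimlb$. For the upper bound under $\theta \neq \thetastar$, I would likewise apply the $\delimub$ bound pointwise: $p(\delim_i \mid \h, \theta) < \delimub$ for every $\h$, so the same marginalization yields $< \delimub$.

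There is no real obstacle here; the result is essentially a direct corollary of the HMM Markov property together with Assumption~\ref{ass:delimiterbound}. The only subtlety worth being careful about is ensuring that the bounds in Assumption~\ref{ass:delimiterbound}, which are stated for the one-step transition $p(\delim \mid \h, \theta)$ for an arbitrary single $\h$, transfer to the posterior predictive $p(\delim_i \mid \text{past observations}, \theta)$; this transfer is exactly what the marginalization identity above accomplishes, since uniform one-step bounds survive averaging against any distribution over the previous hidden state.
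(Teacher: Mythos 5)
Your proof is correct and takes essentially the same approach as the paper: both marginalize over the hidden state $\h_{i,k}$ of the last token of $\obsseg_i$, invoke the Markov property to identify the conditional transition probability $p(\delim_i \mid \h_{i,k}, \theta)$, and then pull the uniform bounds from Assumption~\ref{ass:delimiterbound} through the resulting convex combination.
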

\begin{proof}
By Assumption~\ref{ass:delimiterbound},
\begin{align}
    p(\delim_i \vert \obsseg_1,\obsdelim_1, \dots, \obsseg_i, \theta) &= \sum_{\h_{i,k}} p(\delim_i \vert \h_{i,k}) p( \h_{i,k} \vert  \obsseg_1,\obsdelim_1, \dots, \obsseg_i, \theta)\\
    &< \sum_{\h_{i,k}} \delimub p(\h_{i,k} \vert  \obsseg_1,\obsdelim_1, \dots, \obsseg_i, \theta) = \delimub.
\end{align}
Similarly, 
\begin{align}
    p(\delim_i \vert \obsseg_1,\obsdelim_1, \dots, \obsseg_i, \thetastar) &= \sum_{\h_{i,k}} p(\delim_i \vert \h_{i,k}) p( \h_{i,k} \vert  \obsseg_1,\obsdelim_1, \dots, \obsseg_i, \thetastar)\\
    &> \sum_{\h_{i,k}} \delimlb p(\h_{i,k} \vert  \obsseg_1,\obsdelim_1, \dots, \obsseg_i, \thetastar) = \delimlb.
\end{align}
\end{proof}

\begin{proposition}
\label{thm:emitlb}
The probability of an example is lower bounded for $\thetastar$: there is some $\examplelb > 0$ such that $p(\obsseg_i \vert \hiddensegstart_i, \h_{j, l},\thetastar) > \examplelb$ for all $i$ and future hidden states $\h_{j, l}$, for any $l$ and $j > i$.
\end{proposition}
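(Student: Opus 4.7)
The plan is to apply Bayes' rule to decouple the quantity of interest into the unconditional example probability times a ratio capturing the effect of the future hidden state, and then to control that ratio using the uniform single-step transition lower bound in Assumption~\ref{ass:regularity}(1). Concretely, I would write
\begin{align}
p(\obsseg_i \vert \hiddensegstart_i, \h_{j,l}, \thetastar) = \frac{p(\h_{j,l} \vert \obsseg_i, \hiddensegstart_i, \thetastar)}{p(\h_{j,l} \vert \hiddensegstart_i, \thetastar)}\cdot p(\obsseg_i \vert \hiddensegstart_i, \thetastar).
\end{align}

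The main step is to lower bound the numerator of the ratio. By the tower rule and the Markov property of the HMM,
\begin{align}
p(\h_{j,l} \vert \obsseg_i, \hiddensegstart_i, \thetastar) = \sum_{\h_{i,k} \in \sH} p(\h_{j,l} \vert \h_{i,k}, \thetastar)\, p(\h_{i,k} \vert \obsseg_i, \hiddensegstart_i, \thetastar),
\end{align}
so it suffices to show $p(\h_{j,l} \vert \h_{i,k}, \thetastar) \geq \translb$ uniformly in $\h_{i,k}$. Since $j > i$, there is at least one hidden-state transition between positions $(i,k)$ and $(j,l)$ in the underlying Markov chain (the intermediate delimiter hidden states and any intermediate emissions are marginalized out in this conditional). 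Iterating Chapman--Kolmogorov and splitting off the final transition step, Assumption~\ref{ass:regularity}(1) gives $p(\h_{j,l} \vert \h', \thetastar) \geq \translb$ for any immediate predecessor $\h'$, so summing against any probability vector yields $p(\h_{j,l} \vert \h_{i,k}, \thetastar) \geq \translb$ regardless of the number of intermediate steps.

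Combining this with the trivial bound $p(\h_{j,l} \vert \hiddensegstart_i, \thetastar) \leq 1$, I obtain
\begin{align}
p(\obsseg_i \vert \hiddensegstart_i, \h_{j,l}, \thetastar) \geq \translb \cdot p(\obsseg_i \vert \hiddensegstart_i, \thetastar).
\end{align}
Because $(\hiddensegstart_i, \obsseg_i)$ is realized from the prompt distribution, which samples $\obsseg_i$ from exactly this conditional under $\thetastar$, we have $p(\obsseg_i \vert \hiddensegstart_i, \thetastar) > 0$. Setting $\examplelb := \translb \cdot p(\obsseg_i \vert \hiddensegstart_i, \thetastar)$ yields a strictly positive constant that depends on the realization of the example but is uniform over $j$, $l$, and $\h_{j,l}$, as required.

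The main subtlety is the multi-step uniformity: the number of transitions from $\h_{i,k}$ to $\h_{j,l}$ varies with $j, l$, and the intermediate hidden states include delimiter states that are constrained by Assumption~\ref{ass:delimiterstates}. The key observation that dissolves this difficulty is that the single-step floor $\translb$ from Assumption~\ref{ass:regularity}(1) propagates to any positive number of steps via Chapman--Kolmogorov, so we never need to track the precise path structure; aside from this, the proof is a direct Bayes-rule computation.
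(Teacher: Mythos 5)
Your Bayes-rule manipulation and the multi-step floor $p(\h_{j,l}\mid \h_{i,k},\thetastar)\geq\translb$ via Chapman--Kolmogorov are correct, and they cleanly reduce the claim to
\begin{align}
p(\obsseg_i \mid \hiddensegstart_i, \h_{j,l}, \thetastar) \geq \translb\cdot p(\obsseg_i \mid \hiddensegstart_i, \thetastar),
\end{align}
but the final step has a genuine gap. The proposition asks for a single $\examplelb>0$ that works simultaneously for all $i$ (and hence, since $n\to\infty$ in the downstream argument, for an unboundedly growing collection of realized examples). Setting $\examplelb:=\translb\cdot p(\obsseg_i\mid\hiddensegstart_i,\thetastar)$ is not a constant — it is a random quantity that changes with the realization $(\hiddensegstart_i,\obsseg_i)$ and with $i$. ``Positive for each realization'' does not give a uniform lower bound: the infimum of $p(\obsseg_i\mid\hiddensegstart_i,\thetastar)$ over examples is what you need, and nothing in your argument prevents that infimum from being $0$. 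The tell is that you never invoke the emission lower bound $\emitlb$ from Assumption~\ref{ass:regularity}(3), which is precisely the ingredient designed to make this term uniformly bounded away from zero.

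The paper closes this by marginalizing $p(\obsseg_i\mid\hiddensegstart_i,\h_{j,l},\thetastar)$ over the full hidden trajectory $\hiddenseg_i$ of the example: the sum is at least the single term corresponding to a trajectory in which each emission probability exceeds $\emitlb$ (Assumption~\ref{ass:regularity}(3)) and each transition probability exceeds $\translb$ (Assumption~\ref{ass:regularity}(1)), giving the deterministic bound $\examplelb=(\emitlb)^k\translb^2$. To salvage your route you would need a corresponding uniform bound $p(\obsseg_i\mid\hiddensegstart_i,\thetastar)\geq c(\emitlb,\translb,k)$ proved by the same trajectory-selection argument — at which point the Bayes-rule prefactor $\translb$ is a minor refinement of the same proof rather than a shortcut around it.
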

\begin{proof}
By Assumption~\ref{ass:regularity}, we have
\begin{align}
p(\obsseg_i \vert \hiddensegstart_i, h_{j, l},\thetastar) &= \sum_{\hiddenseg_i} p(\obsseg_i \vert \hiddenseg_i) p(\hiddenseg_i \vert \hiddensegstart_i, \h_{j, l},\thetastar) > (\emitlb)^k 
\end{align}
for some $\hiddenseg_i$. We have
\begin{align}
    p(\hiddenseg_i \vert \hiddensegstart_i, \h_{j, l},\thetastar) &= \frac{p(\h_{j, l} \vert \hiddenseg, \hiddensegstart_i, \thetastar)p(\hiddenseg \vert  \hiddensegstart_i, \thetastar)}{p(\h_{j, l} \vert  \hiddensegstart_i, \thetastar)} > \translb^2
\end{align}
which lower bounds the terms in the numerator by $\translb$ (marginalizing over previous hidden states), and upper bounding the denominator by 1.
Setting $\examplelb = (\emitlb)^k\translb^2$ finishes the proof.
\end{proof}

\section{Convergence of the in-context predictor}

Under Assumption~\ref{ass:promptstartshift}, we show that the in-context predictor $\fn(\Xtest) = \argmax_{\y} p(\y \vert \promptseq, \Xtest)$ converges
when abstracting away the Bayesian inference component (the selection of $\thetastar$ from $\Theta$) of the in-context predictor.
We will complete the argument for the convergence of the in-context predictor in the proof of Theorem~\ref{thm:thm1}.
\begin{lemma}
    \label{lem:convergence}
    Suppose the prompt $\promptseq$ and the test input $\Xtest$ are given.
    Under Assumption~\ref{ass:promptstartshift},
    we show that the argmax of the averaged predictive distribution conditioned on $\thetastar$ and a prompt $\promptseq$ is the same as the argmax of the prompt predictive distribution:
    \begin{align}
        \argmax_{\y} \sum_{\hiddensegstarttest \in \sH} p(\y \vert \Xtest, \hiddensegstarttest, \thetastar) p(\hiddensegstarttest \vert \promptseq, \Xtest, \thetastar) = \argmax_{\y} \pprompt(\y\vert \Xtest).
    \end{align}
\end{lemma}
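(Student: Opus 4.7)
The plan is to reduce the argmax equality to a pointwise closeness of the two predictive distributions over $\y$, so that the margin $\Delta$ built into Assumption~\ref{ass:promptstartshift} absorbs the perturbation. First I would apply the HMM Markov property: conditioned on $(\thetastar, \hiddensegstarttest, \Xtest)$, the next token $\y$ is independent of the earlier prompt $\promptseq$, so the left-hand side of the lemma collapses to $p(\y\mid \promptseq,\Xtest,\thetastar)$. Unpacking equation~(5), the right-hand side equals $\sum_h \ppromptstart(h\mid \Xtest)\,p(\y\mid \Xtest,h,\thetastar)$. Hence both sides are convex combinations of the same kernel $h\mapsto p(\y\mid \Xtest,h,\thetastar)$ and differ only in the weighting on $h$.

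Next I would bound the TV distance between these two weightings via the delimiter structure. Because $\promptseq$ ends in the delimiter token $\obsdelim_n$, Assumption~\ref{ass:delimiterstates} forces the corresponding hidden state into $\sD$. Marginalizing it out gives the mixture
\[
p(\hiddensegstarttest \mid \promptseq, \thetastar) \;=\; \sum_{\delim \in \sD} p(\delim \mid \promptseq, \thetastar)\, p(\hiddensegstarttest \mid \delim, \thetastar),
\]
a convex combination of delimiter-to-start transition distributions, each within TV distance $\Delta/4$ of $\ppromptstart$ by Assumption~\ref{ass:promptstartshift}. Convexity of the TV distance then yields $\|p(\hiddensegstarttest\mid \promptseq,\thetastar) - \ppromptstart(\hiddensegstarttest)\|_{TV} \leq \Delta/4$.

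Finally I would propagate this prior-level TV bound to a pointwise bound on the predictive probabilities over $\y$ and conclude with the margin. Let $P_i(h,\Xtest,\y) = \pi_i(h)\,p(\Xtest,\y\mid h,\thetastar)$ with $\pi_1 = p(\cdot\mid \promptseq,\thetastar)$ and $\pi_2 = \ppromptstart$; by data processing, the $(\y,\Xtest)$-marginals satisfy $\|P_1 - P_2\|_{TV} \leq \Delta/4$. Using that $P_2$ realizes the margin $\pprompt(\yone\mid\Xtest) - \pprompt(\y\mid\Xtest)\geq \Delta$, it suffices to show that the perturbation $P_1(\y,\Xtest) - P_2(\y,\Xtest)$ at the unnormalized level is dominated by $\tfrac{1}{2}\Delta\,\pprompt(\Xtest)$ for each $\y$; combined with the margin on $P_2$, this gives $P_1(\yone,\Xtest) > P_1(\y,\Xtest)$ and hence the argmax equality.

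The main obstacle is precisely this last step, because Bayesian conditioning on $\Xtest$ can in principle amplify TV by a factor of $1/\pprompt(\Xtest)$, so a naive ``TV in, TV out'' bound fails. My approach is to never normalize by $\pprompt(\Xtest)$: instead of comparing conditional probabilities, I compare unnormalized joint differences on the same scale and use the mixture form of $\pi_1$ to rewrite the perturbation as $\sum_{\delim\in \sD} p(\delim\mid \promptseq,\thetastar)\sum_h [p(h\mid\delim,\thetastar) - \ppromptstart(h)]\,p(\y,\Xtest\mid h,\thetastar)$, so that each inner sum is controlled by a per-delimiter TV bound already supplied by Assumption~\ref{ass:promptstartshift}. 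The $\max_h p(\Xtest\mid h,\thetastar)$ factor that would otherwise appear can then be absorbed into the aggregate $\pprompt(\Xtest)$ appearing in the margin on $P_2$, closing the argument.
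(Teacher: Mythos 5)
Your decomposition mirrors the paper's: both sides are weighted averages of the same kernel $p(\y\vert\Xtest,\hiddensegstarttest,\thetastar)$, the prompt-conditioned start distribution is a convex combination of $p(\cdot\vert\delim,\thetastar)$ over $\delim\in\sD$, and convexity of TV yields $TV(p(\cdot\vert\promptseq,\thetastar)\,\|\,\ppromptstart)<\Delta/4$. The final step, however, does not close. Your rewriting of the unnormalized perturbation,
\[
P_1(\y,\Xtest)-P_2(\y,\Xtest)=\sum_{\delim\in\sD} p(\delim\vert\promptseq,\thetastar)\sum_{h}\bigl[p(h\vert\delim,\thetastar)-\ppromptstart(h)\bigr]\,p(\y,\Xtest\vert h,\thetastar),
\]
together with the per-delimiter TV bound gives only $|P_1(\y,\Xtest)-P_2(\y,\Xtest)|\leq(\Delta/2)\cdot\max_h p(\Xtest\vert h,\thetastar)$, while the unnormalized margin you must beat is $\pprompt(\Xtest)\,\Delta=\Delta\sum_h\ppromptstart(h)\,p(\Xtest\vert h,\thetastar)$. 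Because a maximum always dominates the corresponding $\ppromptstart$-average, $\max_h p(\Xtest\vert h,\thetastar)\geq\pprompt(\Xtest)$, so the $\max_h$ factor cannot be ``absorbed into'' $\pprompt(\Xtest)$ --- the inequality you need runs the wrong way, and when $p(\Xtest\vert h,\thetastar)$ varies sharply across $h$ the perturbation can overwhelm the margin.

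For context, the paper's proof works at the unnormalized level with $W_{ij}=p(\y_i\vert\Xtest,h_j,\thetastar)\,p(\Xtest\vert h_j,\thetastar)$, obtains $\|WTv-Wu\|_\infty\leq\|Tv-u\|_1<\Delta/2$ using that the columns of $W$ sum to $p(\Xtest\vert h_j,\thetastar)\le 1$, and then compares this $\Delta/2$ directly against $\Delta$, which is defined on the \emph{normalized} conditional $\pprompt(\y\vert\Xtest)$. That is precisely the scale mismatch you flagged --- the paper does not carry out your absorption, it simply elides the factor $\pprompt(\Xtest)$ separating the two scales. So your instinct to worry about Bayesian amplification of TV was correct, but the patch you propose does not repair it; a rigorous closing would require either strengthening Assumption~\ref{ass:promptstartshift} to a TV bound of order $\pprompt(\Xtest)\Delta/4$, or redefining the margin $\Delta$ over the joint $\pprompt(\y,\Xtest)$ rather than the conditional.
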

\begin{proof}
First, we note by definition that
\begin{align}
    \pprompt(\y \vert \Xtest) = \sum_{\hiddensegstarttest \in \sH} p(\y \vert \Xtest, \hiddensegstarttest, \thetastar) \pprompt(\hiddensegstarttest \vert \Xtest).
\end{align}
Expanding the last term, we have
\begin{align}
    \pprompt(\hiddensegstarttest \vert \Xtest) \propto p(\Xtest \vert \hiddensegstarttest, \thetastar)\ppromptstart(\hiddensegstarttest).
\end{align}
which is proportional to a constant in $\Xtest$.

On the other hand, analyzing one term inside the LHS of the lemma statement, we have
\begin{align}
    p(\hiddensegstart \vert \promptseq, \Xtest, \thetastar) \propto p(\Xtest \vert \hiddensegstarttest, \thetastar) p(\hiddensegstarttest \vert \promptseq, \thetastar)
\end{align}
which is proportional to a constant in $\Xtest$ and $\promptseq$.
The quantities differ in the last term, which we expand below and put in matrix form. Let $T \in \R^{|\sH| \times |\sD|}$ be the matrix that represents the transition probabilities starting from a delimiter state: $p(\hiddensegstarttest \vert \delim)$ for $\hiddensegstarttest \in \sH$ and $\delim\in \sD$. As a result,
\begin{align}
    p(\hiddensegstarttest \vert \promptseq, \thetastar) &= \sum_{\delim_n} p(\hiddensegstarttest \vert \delim_n, \thetastar) p(\delim_n \vert \promptseq, \thetastar)\\
                                                        &= Tv
\end{align}
where $\delim_n$ is the delimiter hidden state before $\hiddensegstarttest$.

Let $W\in \R^{|\sY| \times |\sH|}$ be the matrix that represents the probabilities $p(\y \vert \Xtest, \hiddensegstarttest, \thetastar) p(\Xtest \vert \hiddensegstarttest, \thetastar)$ for all the possible $\y\in \sY$ and $\hiddensegstarttest \in \sH$.
Overall, we can write
\begin{align}
     \sum_{\hiddensegstarttest \in \sH} p(\cdot \vert \Xtest, \hiddensegstarttest, \thetastar) p(\hiddensegstarttest \vert \promptseq, \Xtest, \thetastar) &= WTv\\
    \pprompt(\cdot \vert \Xtest) &= Wu
\end{align}
where $u\in \R^{|\sH|}$ is the vector of probabilities that corresponds to the prompt start distribution $\ppromptstart$.

Bounding the difference between the two predictive distributions,
\begin{align}
    \|WTv - Wu\|_\infty &\leq \|WTv - Wu\|_1\\
                        &= \sum_{i=1}^{|\sY|} |W_i^\top (Tv-u)|_i\\
                        &=\sum_{i=1}^{|\sY|} \left\lvert\sum_{j=1}^{|\sH|} W_{ij} (Tv-u)_j\right\rvert\\
                        &\leq\sum_{i=1}^{|\sY|} \sum_{j=1}^{|\sH|} W_{ij} |(Tv-u)_j|~~~~~(W_{ij} \geq 0)\\
                        &=\sum_{j=1}^{|\sH|} (\sum_{i=1}^{|\sY|} W_{ij}) |(Tv-u)_j|\\
                        &= \|Tv - u\|_1.
\end{align}
Using Assumption~\ref{ass:promptstartshift}, we can further bound this by $\Delta/2$:
\begin{align}
    \|Tv-u\|_1 &= 2TV(\pprompt(\cdot) \| \sum_{i=1}^{|\sD|} v_ip(\cdot \vert \delim=i,\thetastar))\\
               &\leq 2\sum_{i=1}^{|\sD|} v_i TV(\pprompt(\cdot) \| p(\cdot \vert \delim=i, \thetastar)) ~~~~\text{(convexity of TV distance)} \\
               &\leq 2\max_{\delim \in \sD} TV(\pprompt(\cdot) \| p(\cdot \vert \delim, \thetastar)) < \Delta/2.
\end{align}
Since the probability of any output does not change by more than $\Delta / 2$ and the margin between the most likely label and the second most likely label is $\Delta$, the argmax's are the same, showing the result.
\end{proof}

\section{Proof of Theorem~\ref{thm:thm1}}
\begin{proof}
We analyze the most likely prediction over the pretraining distribution conditioned on the prompt $\argmax_{\y} p(\y \vert \promptseq, \Xtest)$.
\begin{align}
    p(\y \vert \promptseq, \Xtest) &= \int_\theta  p(\y\vert \promptseq, \Xtest, \theta) p(\theta \vert \promptseq, \Xtest) d\theta\\
                                   &\propto \int_\theta  p(\y\vert \promptseq, \Xtest, \theta) p(\promptseq, \Xtest\vert \theta) p(\theta) d\theta\\
                                   &\propto \int_\theta  p(\y\vert \promptseq, \Xtest, \theta) \frac{p(\promptseq, \Xtest\vert \theta)}{p(\promptseq, \Xtest\vert \thetastar)} p(\theta) d\theta\\
                                   &= \int_\theta \sum_{\hiddensegstarttest \in \sH} p(\y\vert \Xtest, \hiddensegstarttest, \theta) p(\hiddensegstarttest \vert \promptseq, \Xtest, \theta) \frac{p(\promptseq, \Xtest\vert \theta)}{p(\promptseq, \Xtest\vert \thetastar)} p(\theta) d\theta \label{eqn:overall-approach-appendix}
\end{align}
Defining the following quantity,
\begin{align}
  \rntheta = \frac{1}{n} \log \frac{  p(\promptseq, \Xtest \vert \theta)}{ p(\promptseq, \Xtest \vert  \thetastar)}.
\end{align}
we will show that under distinguishability for all $\theta \neq \thetastar$, $\rntheta$ converges to a negative constant such that
\begin{align}
    \label{eq:ratio-converges}
    \frac{  p(\promptseq, \Xtest \vert \theta)}{ p(\promptseq, \Xtest \vert \thetastar)} = \exp(n \cdot \rntheta) \rightarrow 0
\end{align}
for $\theta \neq \thetastar$, whereas this ratio is always 1 for $\theta = \thetastar$.
This will then ``select'' the desired prompt \topic through marginalization.

Supposing that Equation~\ref{eq:ratio-converges} holds, we show that the theorem statement holds. Let 
\begin{align}
    \Delta' = \max_{\delim \in \sD} TV(\ppromptstart(\cdot) \| p(\cdot \vert \delim, \thetastar)) < \Delta / 2,
\end{align}
and let $\epsilon < (\Delta/2 - \Delta')p(\thetastar)$.
Then for $n$ large enough (due to Equation~\ref{eq:ratio-converges}),
\begin{align}
    \int_\theta \sum_{\hiddensegstarttest \in \sH} &p(\y\vert \Xtest, \hiddensegstarttest, \theta) p(\hiddensegstarttest \vert \promptseq, \Xtest, \theta) \frac{p(\promptseq, \Xtest\vert \theta)}{p(\promptseq, \Xtest\vert \thetastar)} p(\theta) d\theta \\
    ~~~~~~~~~~~~~~~~~~~~~~~&=\sum_{\hiddensegstarttest \in \sH} p(\y\vert \Xtest, \hiddensegstarttest, \thetastar) p(\hiddensegstarttest \vert \promptseq, \Xtest, \thetastar) p(\thetastar) + \int_{\theta \neq \thetastar} \epsilon_\theta(\y)p(\theta) d\theta \\
    ~~~~~~~~~~~~~~~~~~~~~~~&\propto \sum_{\hiddensegstarttest \in \sH} p(\y\vert \Xtest, \hiddensegstarttest, \thetastar) p(\hiddensegstarttest \vert \promptseq, \Xtest, \thetastar) + \frac{1}{p(\thetastar)}\int_{\theta \neq \thetastar} \epsilon_\theta(\y) p(\theta) d\theta  \label{eq:largen-argmax}
\end{align}
where $\epsilon_\theta(\y) \leq \epsilon/2$ for all $\y \in \sY$.

By Lemma~\ref{lem:convergence}, the argmax of the first term of Equation~\ref{eq:largen-argmax} is the same as $\argmax_{\y} \pprompt(\y \vert \Xtest)$, where the margin between the most likely label and the second most likely is at least $\Delta/2 - \Delta'$.
Since
\begin{align}
    \frac{1}{p(\thetastar)}\int_{\theta \neq \thetastar} \epsilon_\theta(\y) p(\theta) \leq \frac{\epsilon}{2p(\thetastar)} < (\Delta/2 - \Delta')/2
\end{align}
for all $\y \in \sY$, the argmax of Equation~\ref{eq:largen-argmax} is also the same as $\argmax \pprompt(\y\vert \Xtest)$.

Now it remains to show that $\rntheta$ converges to a negative constant for $\theta \neq \thetastar$.
Let $\obsex_i = [\obsdelim_{i-1}, \obsseg_i]$ be the $i$-th observation segment and the previous delimiter together for $i >1$ and define $\obsex_1 = \obsseg_1$.
Expanding the numerator of the ratio in $\rntheta$, we have
\begin{align}
    p(\promptseq, \Xtest \vert \theta) &= p(\Xtest \vert \promptseq, \theta) p(\promptseq \vert \theta)\\
                                       &= \sum_{\hiddensegstarttest} p(\Xtest \vert \hiddensegstarttest, \theta) p(\hiddensegstarttest \mid \promptseq, \theta) p(\obsdelim_n \vert \obsex_{1:n}, \theta) \prod_{i=1}^n p(\obsex_i \vert \obsex_{1:i-1}, \theta)\\
                                       &= \sum_{\hiddensegstarttest} p(\Xtest \vert \hiddensegstarttest, \theta) p(\hiddensegstarttest \mid \promptseq, \theta) \\
                                       &~~~~~~~~\sum_{\delim_n\in \sD} p(\obsdelim_n \vert \delim_n) p(\delim_n \vert \obsex_{1:n}, \theta)\prod_{i=1}^n\sum_{\delim_{i-1} \in \sD} p(\obsseg_i \vert \delim_{i-1}, \theta)p(\delim_{i-1} \vert \obsex_{1:i-1}, \theta)  \\
                                       &= \sum_{\hiddensegstarttest} p(\Xtest \vert \hiddensegstarttest, \theta) p(\hiddensegstarttest \mid \promptseq, \theta) \\
                                       &~~~~~~~~\sum_{\delim_n \in \sD} p(\delim_n \vert \obsex_{1:n}, \theta)\prod_{i=1}^n\sum_{\delim_{i-1} \in \sD} p(\obsseg_i \vert \delim_{i-1}, \theta)p(\delim_{i-1} \vert \obsex_{1:i-1}, \theta)\\
                                       &= \sum_{\hiddensegstarttest} p(\Xtest \vert \hiddensegstarttest, \theta) p(\hiddensegstarttest \mid \promptseq, \theta) \prod_{i=1}^n\sum_{\delim_{i-1} \in \sD} p(\obsseg_i \vert \delim_{i-1}, \theta)p(\delim_{i-1} \vert \obsex_{1:i-1}, \theta)
\end{align}
Note that in the last line, the inner sum is over the set of delimiter states $\sD$ by using the assumption that observing a delimiter $\obsdelim$ implies that the corresponding hidden state $\delim$ must be in $\sD$. We also see that $\sum_{\delim_n} p(\delim_n\vert \obsex_{1:n}, \theta) = 1$.

We restrict our attention to $\theta$ where $p(\promptseq, \Xtest \vert \theta) > 0$, since otherwise $\theta$ does not affect the prediction.
Expanding $\rntheta$, we have the following upper bound:
\begin{align}
    \rntheta &= \frac{1}{n}\bigg( \log \frac{p(\promptseq, \Xtest \vert \theta)}{p(\promptseq, \Xtest \vert \thetastar)}\bigg) \\
             &= \frac{1}{n}\bigg( \log \frac{\sum_{\hiddensegstarttest} p(\Xtest \vert \hiddensegstarttest, \theta) p(\hiddensegstarttest \mid \promptseq, \theta) }{\sum_{\hiddensegstarttest} p(\Xtest \vert \hiddensegstarttest, \thetastar) p(\hiddensegstarttest \mid \promptseq, \thetastar) } + \sum_{i=1}^n \log \frac{\sum_{\delim_{i-1} \in \sD} p(\obsseg_i \vert \delim_{i-1}, \theta)p(\delim_{i-1} \vert \obsex_{1:i-1}, \theta)}{\sum_{\delim_{i-1} \in \sD} p(\obsseg_i \vert \delim_{i-1}, \thetastar)p(\delim_{i-1} \vert \obsex_{1:i-1}, \thetastar)}  \bigg)\\
             &\leq \frac{1}{n}\bigg( \log \frac{\sum_{\hiddensegstarttest} 1\cdot p(\hiddensegstarttest \mid \promptseq, \theta) }{\sum_{\hiddensegstarttest} \examplelb \cdot p(\hiddensegstarttest \mid \promptseq, \thetastar) }  + n(\log(\delimub) - \log(\delimlb)) + \sum_{i=1}^n \log \frac{\sum_{\delim_{i-1} \in \sD} p(\obsseg_i \vert \delim_{i-1}, \theta)}{\sum_{\delim_{i-1} \in \sD} p(\obsseg_i \vert \delim_{i-1}, \thetastar)}  \bigg)\\
             &= \frac{1}{n}\bigg( -\log(\examplelb) + n(\log(\delimub) - \log(\delimlb)) + \sum_{i=1}^n \log \frac{\sum_{\delim_{i-1} \in \sD} p(\obsseg_i \vert \delim_{i-1}, \theta)}{\sum_{\delim_{i-1} \in \sD} p(\obsseg_i \vert \delim_{i-1}, \thetastar)}  \bigg)
\end{align}
In the above steps, we used both Propositions~\ref{thm:translb} and~\ref{thm:emitlb} in the terms involving $\delimub,\delimlb$ (bounding the probability of $\delim$ hidden states) and $\examplelb$ (bounding the probability of $\Xtest$). Note that in the second line, the sum can must be over the set of delimiter states $\sD$ by using the assumption that observing a delimiter $\obsdelim$ implies that the corresponding hidden state $\delim$ must be in $\sD$.

Focusing on the numerator of the ratio term and summing over the start hidden state for the $i$-th example,
\begin{align}
    \sum_{\delim_{i-1} \in \sD} p(\obsseg_i \vert \delim_{i-1}, \theta) &= \sum_{\delim_{i-1} \in \sD}\sum_{\hiddensegstart_i} p(\obsseg_i \vert \hiddensegstart_i,\theta)p(\hiddensegstart_i \vert \delim_{i-1}, \theta))\\
    &= \sum_{\hiddensegstart_i} p(\obsseg_i \vert \hiddensegstart_i,\theta)p(\hiddensegstart_i\vert \theta) \sum_{\delim_{i-1} \in \sD} \frac{p(\hiddensegstart_i \vert \delim_{i-1}, \theta)}{p(\hiddensegstart_i\vert \theta)}\\
    &= \sum_{\hiddensegstart_i} p(\obsseg_i \vert \hiddensegstart_i,\theta)p(\hiddensegstart_i\vert \theta) \sum_{\delim_{i-1} \in \sD} \frac{p(\delim_{i-1} \vert \hiddensegstart_{i}, \theta)}{p(\delim_{i-1}\vert \theta)}
\end{align}
where the last step applies Bayes' rule. We can lower and upper bound the following quantity for any $\theta$ using Assumption~\ref{ass:delimiterbound}:
\begin{align}
    \frac{p(\delim_{i-1} \vert \hiddensegstart_{i}, \theta)}{p(\delim_{i-1}\vert \theta)} &\leq \frac{p(\delim_{i-1} \vert \hiddensegstart_{i}, \theta)}{\delimstartlb}\\
    \frac{p(\delim_{i-1} \vert \hiddensegstart_{i}, \theta)}{p(\delim_{i-1}\vert \theta)} &\geq \frac{p(\delim_{i-1} \vert \hiddensegstart_{i}, \theta)}{\delimstartub}.
\end{align}
This implies that
\begin{align}
    \sum_{\delim_{i-1} \in \sD} \frac{p(\delim_{i-1} \vert \hiddensegstart_{i}, \theta)}{p(\delim_{i-1}\vert \theta)} &\leq \frac{1}{\delimstartlb}\\
    \sum_{\delim_{i-1} \in \sD} \frac{p(\delim_{i-1} \vert \hiddensegstart_{i}, \theta)}{p(\delim_{i-1}\vert \theta)} &\geq \frac{1}{\delimstartub}.
\end{align}

Plugging in these bounds, we have
\begin{align}
    \rntheta &\leq \frac{1}{n}\bigg( -\log(\examplelb) + 2n(\log(\delimub) - \log(\delimlb)) + n(\log(\delimstartub) - \log(\delimstartlb)) + \sum_{i=1}^n \log \frac{\sum_{\hiddensegstart_i} p(\obsseg_i \vert \hiddensegstart_i,\theta)p(\hiddensegstart_i\vert \theta) }{\sum_{\hiddensegstart_i} p(\obsseg_i \vert \hiddensegstart_i,\theta)p(\hiddensegstart_i\vert \thetastar) }   \bigg)\\
             &=\frac{1}{n}\bigg( -\log(\examplelb) + 2n(\log(\delimub) - \log(\delimlb)) + n(\log(\delimstartub) - \log(\delimstartlb)) + \sum_{i=1}^n \log \frac{ p(\obsseg_i \vert \theta)}{ p(\obsseg_i \vert \thetastar)}   \bigg)\\
    &\rightarrow_{n\rightarrow \infty} \E_{ \obsseg\sim \pprompt} \left[\log \frac{ p(\obsseg \vert  \theta) }{ p(\obsseg \vert \thetastar)}\right] + \errdelim
\end{align}
where we set
\begin{align}
    \errdelim=2(\log(\delimub) - \log(\delimlb)) + \log(\delimstartub) - \log(\delimstartlb).
\end{align}

Next, we convert the expectation in the bound into a KL divergence.
We have
\begin{align}
    \E_{\obsseg\sim \pprompt} \left[\log \frac{p(\obsseg\vert \theta)}{p(\obsseg \vert \thetastar)}\right] &= \E_{\obsseg\sim\pprompt} \left[\log\frac{p(\obsseg \vert \theta)}{\pprompt(\obsseg)} + \log\frac{\pprompt(\obsseg)}{p(\obsseg\vert \thetastar)} \right]\\
                                                                                                           &= KL(\pprompt \|p(\cdot \vert \thetastar)) - KL(\pprompt \| p(\cdot \vert \theta)).
\end{align}
We will upper bound the first KL term:
\begin{align}
    KL(\pprompt \| p(\cdot \vert \thetastar)) = \E_{\obsseg \sim \pprompt}\left[\log \frac{\pprompt(\obsseg)}{p(\obsseg\vert \thetastar)}\right].
\end{align}
Expanding the numerator and denominator of the ratio inside, we have
\begin{align}
    \pprompt(\obsseg) &= \sum_{\hiddenseg} \pprompt(\hiddenseg[1])p(\obsseg[1]\vert \hiddenseg[1], \thetastar) \prod_{j=2}^k p(\obsseg[j] \vert \hiddenseg[j], \thetastar)p(\hiddenseg[j]\vert \hiddenseg[j-1], \thetastar)\\
    p(\obsseg \vert \thetastar) &= \sum_{\hiddenseg} p(\hiddenseg[1]\vert \thetastar)p(\obsseg[1]\vert \hiddenseg[1], \thetastar) \prod_{j=2}^k p(\obsseg[j] \vert \hiddenseg[j], \thetastar)p(\hiddenseg[j]\vert \hiddenseg[j-1], \thetastar)
\end{align}
which differ in only the hidden start distribution.
Using Assumption~\ref{ass:regularity}, we have that $p(\h \vert \thetastar) \geq \hiddenstartlb$ for any $h\in \sH$, which implies that
\begin{align}
    \frac{\pprompt(h)}{p(h \vert \thetastar)} \leq \frac{1}{\hiddenstartlb}\\
    \implies \pprompt(\obsseg) \leq \frac{1}{\hiddenstartlb} p(\obsseg\vert \thetastar).
\end{align}
Finally, this implies that the KL term is bounded as
\begin{align}
    KL(\pprompt \| p(\cdot \vert \thetastar)) \leq -\log(\hiddenstartlb).
\end{align}
This term is non-negative since $\hiddenstartlb \leq 1$.

Aiming to decompose the second KL term into a sum over the $k$ tokens, we write
$\pjtheta(\obs) = p(\obsseg[j]=\obs \vert \obsseg[1:j-1], \theta)$ and $\ppromptj(\obs) = \pprompt(\obsseg[j]=\obs \vert \obsseg[1:j-1])$. We have
\begin{align}
    -KL(\pprompt \| p(\cdot \vert \theta)) &= -\sum_{\obsseg} \pprompt(\obsseg)\log\frac{\pprompt(\obsseg)}{p(\obsseg\vert \theta)}\\
                                           &= -\sum_{\obsseg} \pprompt(\obsseg)\sum_{j=1}^k \log\frac{\pprompt(\obsseg[j] \vert \obsseg[1:j-1]))}{p(\obsseg[j] \vert \obsseg[1:j-1], \theta)}\\
                                           &= -\sum_{j=1}^k \sum_{\obsseg} \pprompt(\obsseg) \log\frac{\pprompt(\obsseg[j] \vert \obsseg[1:j-1]))}{p(\obsseg[j] \vert \obsseg[1:j-1], \theta)}\\
                                           &= -\sum_{j=1}^k \E_{\obsseg[1:j-1] \sim\pprompt} \left[KL(\ppromptj \| \pjtheta)\right]
\end{align}

Then we have that
\begin{align}
    \lim_{n\rightarrow \infty} r_n(\theta) &< -\sum_{j=1}^{k} \E_{\obsseg[1:j-1]\sim \pprompt}[KL(\ppromptj \| \pjtheta)] + \errstart + \errdelim
\end{align}
The second term (set $\errstart = \log(\frac{1}{\hiddenstartlb})$) is an error term that depends on how different the starting prompt distribution $\ppromptstart$ (which is part of $\pprompt$) is to the pretraining distribution.
The third term is an error term that comes from the delimiter transitions.
The bound is negative when the sum of KL terms is larger in magnitude than the error terms.
Note that as $k$ becomes larger, the number of observations of $\thetastar$ ``overpowers'' the distracting transitions in the prompt distribution.
This condition is equivalent to the disinguishability condition (Condition~\ref{thm:condition}).

By assumption, for $\theta\neq \thetastar$ the Condition~\ref{thm:condition} holds, and thus
\begin{align}
    \lim_{n\rightarrow \infty} \frac{ p(\promptseq, \Xtest  \vert \theta)}{ p(\promptseq, \Xtest \vert \thetastar)} = \lim_{n\rightarrow \infty} \exp(n \cdot r_n(\theta)) = 0
\end{align}
since $r_n(\theta)$ has a negative, constant limit. Note that $\exp (n \cdot r_n(\thetastar))=1$ for $\thetastar$.

\end{proof}

\section{Non-distinguishable case}

When Condition~\ref{thm:condition} is unsatisfied, Equation~\ref{eqn:distinguishability}), gives an upper bound on the sum of KL divergences for the next token distributions given different-length histories. In contrast, the in-context task only measures the accuracy of the last ($k$-th) token. The main challenge is to relate the different-length histories to each other to give a more precise bound for the error on the in-context task (last token).

Before addressing this challenge, we give the following lemma, which leverages the result of~\citet{pires2016multiclass,steinwart2007how} to relate a bound on the KL divergence to 0-1 loss.
\begin{lemma}
\label{thm:kltoacc}
  Let the set of $\theta$ which does not satisfy Condition~\ref{thm:condition} to be $\badset$. Assume that $KL(\pprompt(\ytest \vert \Xtest) \| p(\ytest \vert \Xtest, \theta)$ is bounded above for all $\theta$ and that $\thetastar$ minimizes the multiclass logistic risk $\LCE(\theta)=-\E_{\Xtest \sim \pprompt}[\pprompt(\ytest \vert \Xtest)\log p(\ytest \vert \Xtest, \theta)]$. If
\begin{align}
    \E_{\Xtest \sim \pprompt} [KL(\pprompt(\ytest \vert \Xtest) \| p(\ytest \vert \Xtest, \theta)) ] \leq \epsilon_\theta \text{~~~for all~~~} \theta \in \badset,
\end{align}
then
\begin{align}
    \lim_{n\rightarrow\infty} \Lzeroone(\fn) \leq \inf_{f} \Lzeroone(f) + g^\minv\left(\sup_{\theta \in \badset} \epsilon_\theta\right) 
\end{align}
where
\begin{align}
    g(\delta) &= \frac{1}{2}((1-\delta)\log(1-\delta) + (1+\delta)\log(1+\delta))
\end{align}
is a calibration function for the multiclass logistic loss for $\delta \in [0, 1]$.
\end{lemma}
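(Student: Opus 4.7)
The plan is threefold: (i) identify the limiting in-context predictive distribution as a mixture of the pretraining predictives $p(\cdot \mid \Xtest, \theta)$ supported on $\badset \cup \{\thetastar\}$, (ii) bound its expected KL divergence from $\pprompt(\cdot \mid \Xtest)$ by $\sup_{\theta \in \badset} \epsilon_\theta$ using convexity of KL in its second argument, and (iii) convert this cross-entropy bound into an excess 0-1 risk bound via the calibration-function result of \citet{pires2016multiclass,steinwart2007how} for the multiclass logistic loss.

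For step (i), I would start from the expansion in Equation~\ref{eqn:overall-approach-appendix}. Because the test example is conditionally independent of $\promptseq$ given $\theta$ under the pretraining model, the inner sum over $\hiddensegstarttest$ collapses to $p(\y \mid \Xtest, \theta)$, so the expression becomes (up to a normalizing constant) an integral of $p(\y \mid \Xtest, \theta)\exp(n \rntheta) p(\theta)$. From the proof of Theorem~\ref{thm:thm1}, $\exp(n \rntheta) \to 0$ for every $\theta \notin \badset \cup \{\thetastar\}$, while the assumed uniform upper bound on $KL(\pprompt(\cdot \mid \Xtest) \| p(\cdot \mid \Xtest, \theta))$ combined with Assumption~\ref{ass:regularity} supplies a dominating function. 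Dominated convergence and renormalization then give
$$
\lim_{n\to\infty} p(\y \mid \promptseq, \Xtest) = q_\infty(\y \mid \Xtest) \coloneqq \int_{\badset \cup \{\thetastar\}} \alpha_\theta \, p(\y \mid \Xtest, \theta) \, d\theta,
$$
for nonnegative weights $\alpha_\theta$ summing to $1$ that depend on the realized prompt.

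For step (ii), the log-sum inequality (convexity of $q \mapsto KL(p \| q)$) yields, for each $\Xtest$, the bound $KL(\pprompt(\cdot\mid\Xtest) \| q_\infty(\cdot\mid\Xtest)) \leq \int \alpha_\theta KL(\pprompt(\cdot\mid\Xtest) \| p(\cdot\mid\Xtest, \theta))\, d\theta$. Taking expectation over $\Xtest \sim \pprompt$, each $\theta \in \badset$ contributes at most $\epsilon_\theta$ by hypothesis. The $\thetastar$ contribution is controlled by the minimality of $\thetastar$: since $\LCE(\theta) = H(\pprompt) + \E_{\Xtest}[KL(\pprompt(\cdot\mid\Xtest) \| p(\cdot\mid\Xtest,\theta))]$, minimizing over $\theta$ gives $\E_{\Xtest}[KL(\pprompt \| p(\cdot\mid\thetastar))] \leq \inf_{\theta \in \badset} \epsilon_\theta \leq \sup_{\theta \in \badset} \epsilon_\theta$. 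Combining, $\E_{\Xtest}[KL(\pprompt(\cdot\mid\Xtest) \| q_\infty(\cdot\mid\Xtest))] \leq \sup_{\theta\in\badset} \epsilon_\theta$.

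For step (iii), the excess cross-entropy risk of $q_\infty$ equals precisely $\E_{\Xtest}[KL(\pprompt \| q_\infty)]$, so under the coinciding-minimizers hypothesis the Pires--Steinwart calibration inequality reads $g(\Lzeroone(\argmax q_\infty) - \inf_f \Lzeroone(f)) \leq \LCE(q_\infty) - \inf_f \LCE(f) \leq \sup_{\theta\in\badset}\epsilon_\theta$; inverting $g$ (strictly increasing on $[0,1)$) and combining with $\lim_n \Lzeroone(\fn) = \Lzeroone(\argmax q_\infty)$ (bounded convergence plus pointwise convergence of the argmax at its continuity points) yields the claim. The main obstacle is step (i): carefully justifying the interchange of limit and integral over $\theta$ uniformly in $n$, which requires marshaling the boundedness of $KL(\pprompt \| p(\cdot\mid\Xtest,\theta))$ together with the explicit upper bounds on $\rntheta$ from the proof of Theorem~\ref{thm:thm1} to produce a dominating function for the unnormalized posterior density.
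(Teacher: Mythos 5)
Your proposal mirrors the paper's proof exactly: identify the limiting in-context predictor as the argmax of a mixture of predictives supported on $\badset$ (the paper's "$\argmax_{\y} \E_{\theta \sim q}[p(\y \vert \Xtest, \theta)]$"), bound the expected KL of the mixture by $\sup_{\theta\in\badset}\epsilon_\theta$ via convexity of KL in its second argument, and convert the resulting excess multiclass logistic risk into a 0-1 excess risk bound through the Pires--Szepesv\'ari calibration function. One small slip worth flagging in your step (i): $\Xtest$ is \emph{not} conditionally independent of $\promptseq$ given $\theta$ alone under the pretraining HMM --- the hidden state chains across the delimiter, so $p(\hiddensegstarttest \mid \promptseq, \Xtest, \theta)$ genuinely depends on $\promptseq$ and the sum over $\hiddensegstarttest$ does not literally collapse to a prompt-free $p(\y \mid \Xtest, \theta)$; the paper sidesteps this by treating $p(\y \mid \Xtest, \theta)$ as the (prompt-dependent) marginalized predictive and the convexity bound applies unchanged, so your argument survives the same way, but the stated reason is wrong.
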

\begin{proof}
First, we note that we can study the 0-1 risk of the limiting predictor:
\begin{align}
    \lim_{n\rightarrow\infty} \Lzeroone(\fn) &=  \lim_{n\rightarrow \infty} \E_{\Xtest,\ytest \sim \pprompt}[\indicator[\fn(\Xtest) \neq \ytest]]\\
                                             &= \E_{\Xtest,\ytest \sim \pprompt}[\lim_{n\rightarrow\infty} \indicator[\fn(\Xtest) \neq \ytest]]~~\text{(dominated convergence, boundedness of indicator)}\\
                                             &= \E_{\Xtest,\ytest \sim \pprompt}[\indicator[\lim_{n\rightarrow\infty} \fn(\Xtest) \neq \ytest]]
\end{align}
where in the last step we use that since the output space of $\fn$ is discrete and the probabilities that the in-context predictor takes an argmax over converges, then for $N$ large enough, $f_{N}(\Xtest) = \lim_{n\rightarrow \infty} \fn(\Xtest)$.

Note that for every input $\Xtest$, the limiting in-context learning predictor outputs the argmax of a predictive distribution which can be a mixture of predictive distributions over $\badset$:
\begin{align}
    \lim_{n\rightarrow\infty} \fn(\Xtest) = \argmax_{\y} \E_{\theta \sim q}[p(\y \vert \Xtest, \theta)]
\end{align}
for some distribution $q$ over $\badset$.
The KL divergence between this mixture and the prompt \topic is bounded by the KL divergence of any one $\theta \in \badset$, due to the convexity of KL:
\begin{align}
    \E_{\Xtest \sim \pprompt} [&KL( \pprompt(\y \vert \Xtest) \| \E_{\theta \sim q}[p(\y \vert \Xtest, \theta)]]\\ &\leq \E_{\Xtest \sim \pprompt} [\E_{\theta \sim q}[ KL(\pprompt(\y \vert \Xtest) \| p(\y \vert \Xtest, \theta))] ]\\
    &=  \E_{\theta \sim q}[\E_{\Xtest \sim \pprompt} [ KL(\pprompt(\y \vert \Xtest) \| p(\y \vert \Xtest, \theta))]]\\
    &\leq \sup_{\theta \in \badset} \E_{\Xtest \sim \pprompt} [ KL(\pprompt(\y \vert \Xtest) \| p(\y \vert \Xtest, \theta))]
\end{align}
where we can exchange the order of expectations since the KL is bounded (dominated convergence).

From the KL bound $KL(\pprompt(\ytest \vert \Xtest) \| p(\ytest \vert \Xtest, \theta)$, we thus have
\begin{align}
    \E_{\Xtest \sim \pprompt} [KL(\pprompt(\ytest \vert \Xtest) \| p(\ytest \vert \Xtest, \theta)) ] &= \LCE(\theta) - \LCE(\thetastar) \leq \sup_{\theta \in \badset} \epsilon_\theta
\end{align}
where $\LCE(\theta)=-\E_{\Xtest \sim \pprompt}[\pprompt(\ytest \vert \Xtest)\log p(\ytest \vert \Xtest, \theta)]$ is the multiclass logistic risk, and $\LCE(\thetastar)$ is the optimal risk over $\theta\in \Theta$ by assumption.
Applying Theorem 2.2 and 5.11 of~\citet{pires2016multiclass}, $g$ is a calibration function for the multiclass logistic loss, and allows us to convert the surrogate risk bound to a bound on the 0-1 loss, giving the result. Note that we have zero approximation error here, since $\thetastar \in \Theta$.
\end{proof}
Note that $g^\minv$ is roughly linear in $\epsilon$ for $\epsilon$ smaller than 0.7, where the bound is non-vacuous.

\subsection{Proof of Theorem~\ref{thm:continuity}}
\begin{proof}
By the continuity assumption, we have for any $\theta$ in $\badset$ that
\begin{align}
    \sum_{j=2}^k \KLj
    &\geq \frac{1}{2}\sum_{j=2}^k (\theta - \thetastar)^\top \fisherinfj (\theta - \thetastar) + (k-1)O(\|\theta - \thetastar\|^3)\\
                      &\geq \frac{1}{2}(k-1)\lambdamin(\fisherinfj) \|\theta - \thetastar\|^2\\
    \implies \|\theta - \thetastar\|^2 &\leq \frac{\errstart + \errdelim}{\frac{1}{2}(k-1)(\min_j~\lambdamin(\fisherinfj))}.
\end{align}
We use this to bound the last KL term by plugging it in below:
\begin{align}
    \KLk &= \frac{1}{2}(\theta - \thetastar)^\top  \fisherinfk(\theta - \thetastar) + O(\|\theta - \thetastar\|^3)\\
         &\leq \frac{1}{2}(\max_j~\lambdamax(\fisherinfj))\|\theta - \thetastar\|^2 + O(\|\theta - \thetastar\|^2)\\
                 &\leq \frac{(\errstart + \errdelim)(\max_j~\lambdamax(\fisherinfj) + O(1))}{(k-1)\min_j~\lambdamin(\fisherinfj)}.
\end{align}
Rearranging and noting that $\KLk = \E_{\Xtest \sim \pprompt} [KL(\pprompt(\ytest \vert \Xtest) \| p(\ytest \vert \Xtest, \theta)) ]$, we have
\begin{align}
\E_{\Xtest \sim \pprompt} [KL(\pprompt(\ytest \vert \Xtest) \| p(\ytest \vert \Xtest, \theta)) ] \leq \frac{(\errstart + \errdelim)(\max_j~\lambdamax(\fisherinfj) + O(1))}{(k-1)\min_j~\lambdamin(\fisherinfj)}
\end{align}
Plugging into Lemma~\ref{thm:kltoacc} gives the result.
\end{proof}

\subsection{Proof of Theorem~\ref{thm:varying}}
Note that Condition~\ref{thm:condition} ensures that the sum of KL divergences between positions within a $k$-length input is bounded. This means that we have a bound over not only the last-position KL divergence, but also for all the intermediate tokens.
Intuitively, the random length test example allows the in-context predictor to ``take credit'' for fitting the intermediate tokens.
The proof is immediate given the KL bound and Lemma~\ref{thm:kltoacc}, given that the length of $\Xtest$ is uniformly random between 2 to $k$.
\begin{proof}
Let the set of $\theta$ that does not satisfy Condition~\ref{thm:condition} to be $\badset$.  We have for any $\theta$ in $\badset$ that
\begin{align}
    \E_{\Xtest\sim \pprompt} [KL&(\pprompt(\ytest \vert \Xtest) \| p(\ytest \vert \Xtest, \theta))] \\
                                &\leq \frac{1}{k-1} \sum_{j=2}^k \E_{O[1:j-1]\sim \pprompt} KL(\pprompt(O[j] \vert O[1:j-1]) \| p(O[j] \vert O[1:j-1], \theta))\\
&\leq \frac{\sup_\theta (\errstart + \errdelim)}{k-1}
\end{align}
by Theorem~\ref{thm:thm1} and Condition~\ref{thm:condition}.
Plugging this into Lemma~\ref{thm:kltoacc} gives the result.
\end{proof}

\section{Experimental details}
\label{app:experimental}

\begin{figure}[t]
    \centering
    \includegraphics[width=\linewidth]{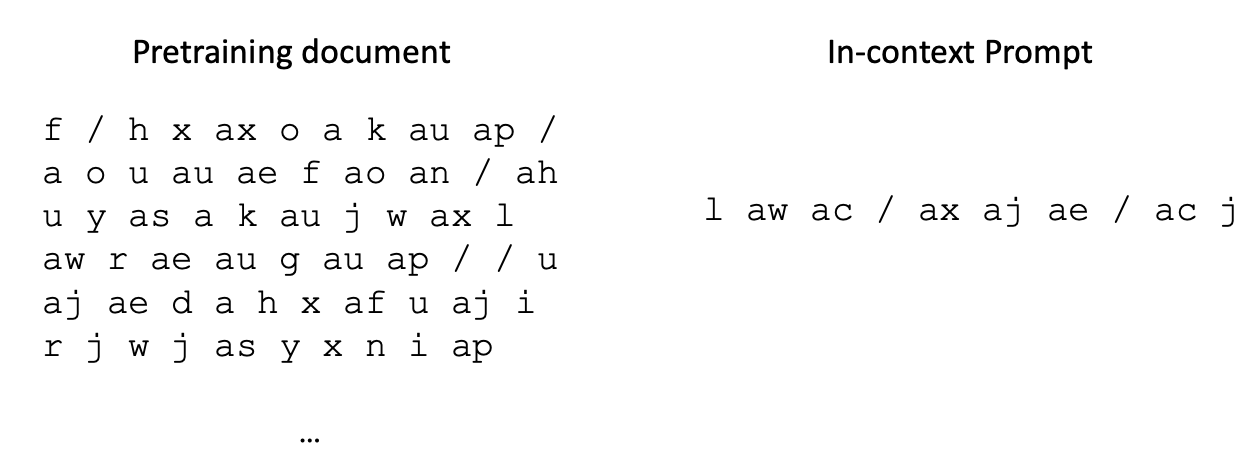}
    \caption{Example pretraining document snippet (\textbf{Left}) and example prompt with 3 training examples, 1 test example, and example length 3 (\textbf{Right}). The delimiter token is the backslash.}
    \label{fig:ginc_example}
\end{figure}

\begin{figure}[tbp]
    \centering
    \includegraphics[scale=0.20]{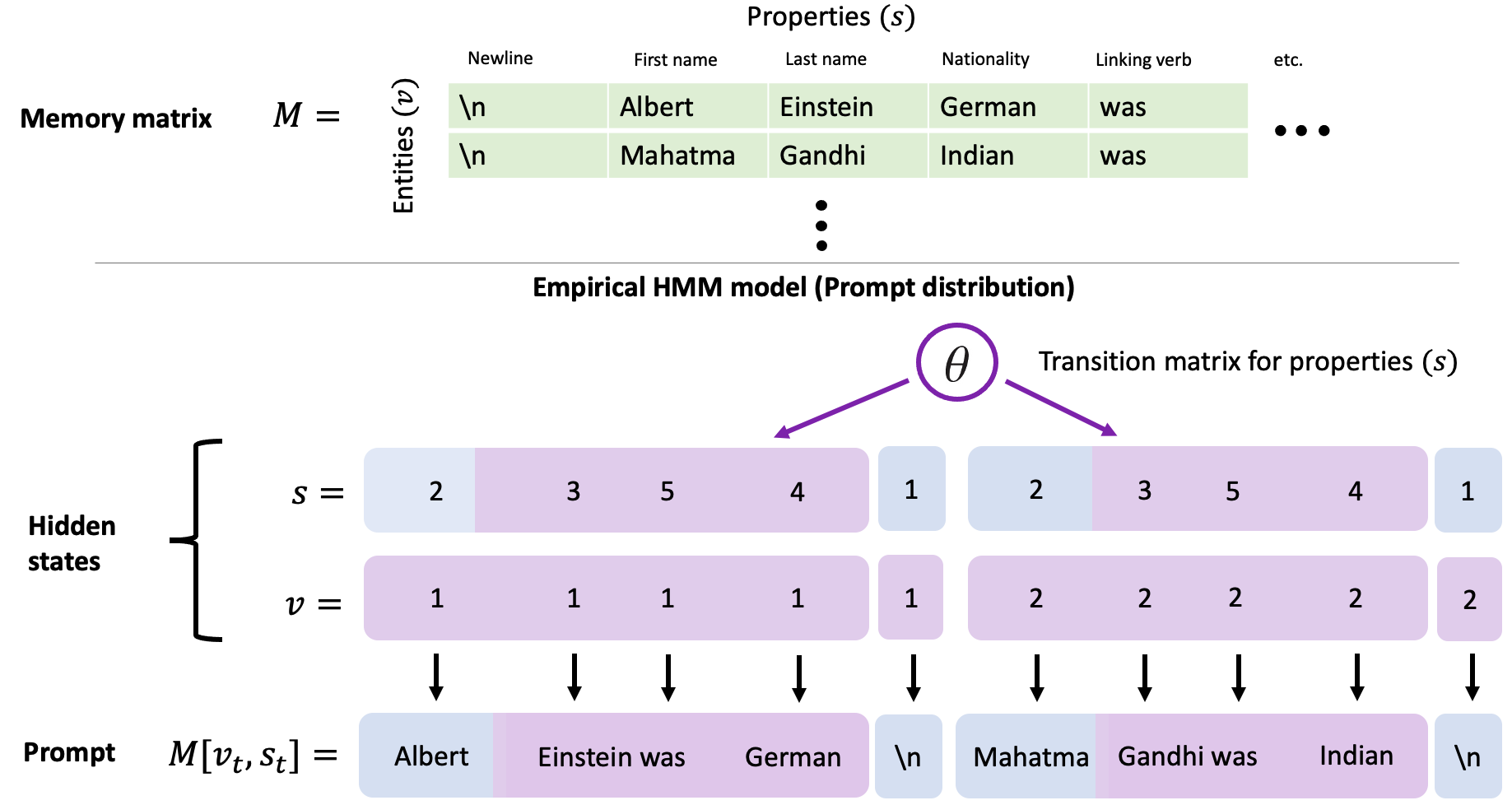}
    \caption{The \dsname dataset generates sequences from a mixture of HMMs. The HMM hidden states consist of entities ($v$) and properties ($s$), which index into a memory matrix to produce the observed token. The entity and property sequences are sampled from independent Markov chains. The \topic parameter $\theta$ is the transition matrix for properties, which defines relations between properties. In this example, the sequence of properties [2,3,5,4] relates names to nationalities, defining the in-context task. The blue color represents hidden states/observations sampled from the prompt distribution, and the purple color represents hidden states/observations sampled from the pretraining distribution.}
\label{fig:simulation}
\end{figure}
\subsection{\dsname dataset}
\label{app:ginc-description}

\paragraph{Pretraining distribution.}
We consider a pretraining distribution from a mixture of HMMs with an interpretable hidden state structure and emission distribution.
The HMM hidden state $h_t = [s_t, v_t]$ at time $t$ is composed of an \emph{entity} $v_t \in \{1,\dots, |\sV|\}$ (e.g., Einstein) and a \emph{property} $s_t \in \{1, \dots, |\sS|\}$ (e.g., nationality, first name, last name, other grammatical tokens).
We model the entities and properties as independent Markov chains (i.e., a factorial HMM~\citep{ghahramani97fhmm}), while the emissions depend on both.
In pretraining documents, we expect that the entities (e.g., Einstein) change slowly over time while and the properties of the entity (e.g., their nationality) change quickly with some pattern to generate natural sentences. We implement this by ensuring that the probability of transitioning to the same entity index in the next step is at least 0.9.
The emission distribution depends on a memory matrix $M$ with $|\sV|$ rows and $|\sS|$ columns (Figure~\ref{fig:simulation}).
At step $t$, we use the entity $v_t$ and property $s_t$ to index into the memory matrix.
In particular, the observed tokens are deterministic with $p(\obs_t \vert h_t) = 1$ if $\obs_t = M[v_t, s_t]$.
This construction satisfies the structure on delimiter states (Assumption~\ref{ass:delimiterstates}).
We ensure that all the transitions have nonzero probability and use a uniform prior over \topics, satisfying Assumptions~\ref{ass:delimiterbound} and~\ref{ass:regularity}.

\paragraph{\Topic parameter.}
The \topic parameter is the property transition matrix, while the entity transition matrix is fixed for all \topics.
The prompt start distribution and the \topic together determine the in-context task.
We define a uniform mixture of HMMs over a family $\Theta$ of 5 \topics to generate 1000 documents with $\sim$10 million tokens total.

\paragraph{Vocabulary.}
The \dsname dataset is generated from a mixture of HMMs. These HMMs output tokens from a vocabulary of size in $\{50,100,150\}$.
The vocabulary contains a special delimiter token (backslash -- see Figure~\ref{fig:ginc_example}, designated to be index 1.
The vocabulary is generated as combinations of letters starting from a to z, then aa to az, and so on.
All sequences are tokenized by splitting on whitespaces.

\paragraph{Memory matrix.}
The shared memory matrix has 10 entities and 10 properties, totaling 100 entries (corresponding to 100 hidden states).
The first column of the memory matrix is fixed to be the delimiter token, while
each remaining entry of the shared memory matrix is populated with a token
sampled uniformly from the vocabulary.

\paragraph{Transition matrix for properties.}
We generate 5 property transition matrices, one for each component of the HMM mixture.
We generate each transition matrix via a convex combination of 100 random permutation matrices.
The weights of the convex combination are randomly generated as 
\begin{align}
\text{softmax}((u-0.5) / t)
\end{align}
where $u\in \R^{100}$ has uniform random entries in $[0,1]$ and $t$ is a temperature parameter, set to 0.1.

\begin{figure}[t]
   \centering
    \begin{subfigure}[t]{0.31\linewidth}
       \centering
       \includegraphics[width=\linewidth]{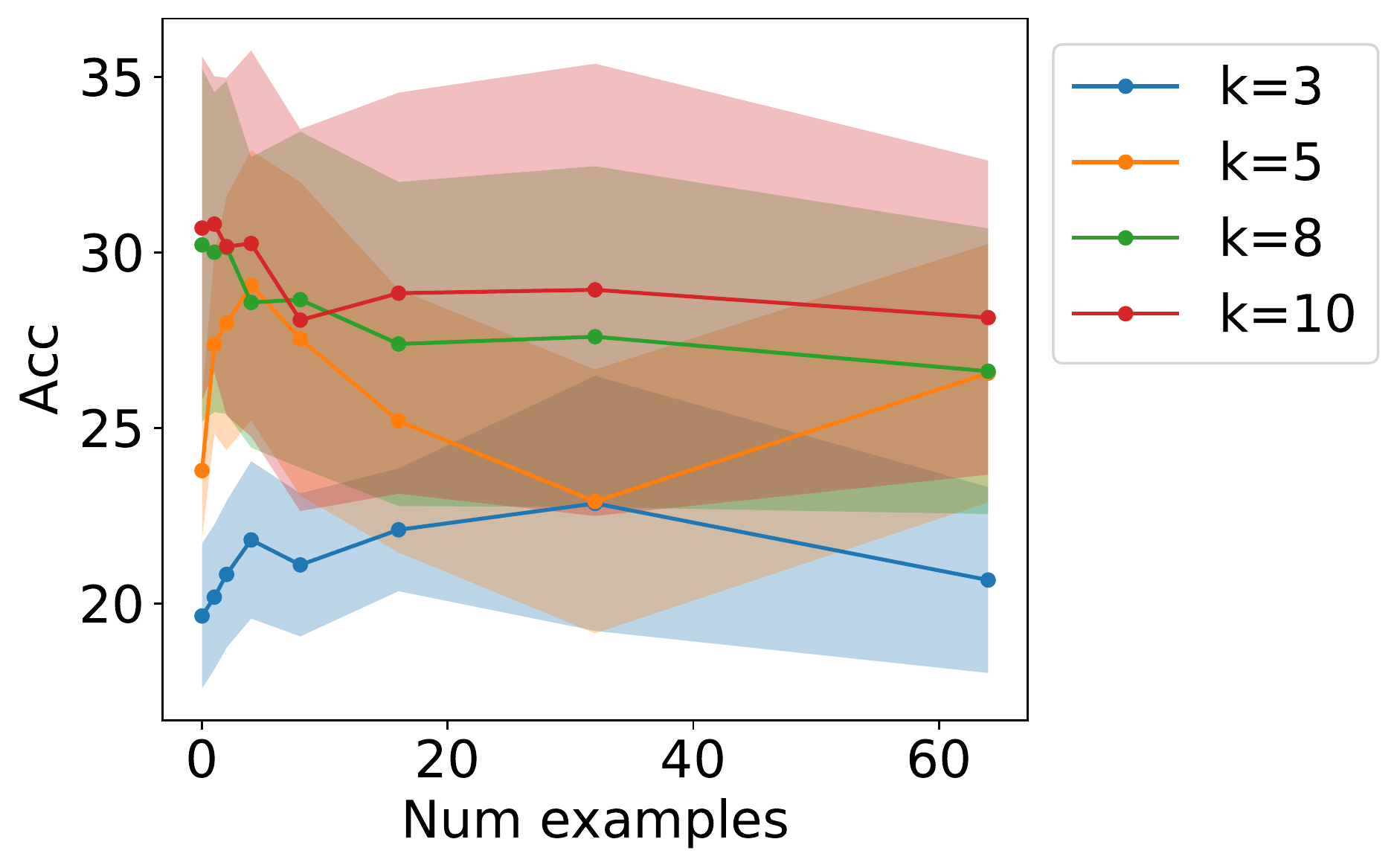}
    \end{subfigure}
    \hfill
    \begin{subfigure}[t]{0.31\linewidth}
       \centering
       \includegraphics[width=\linewidth]{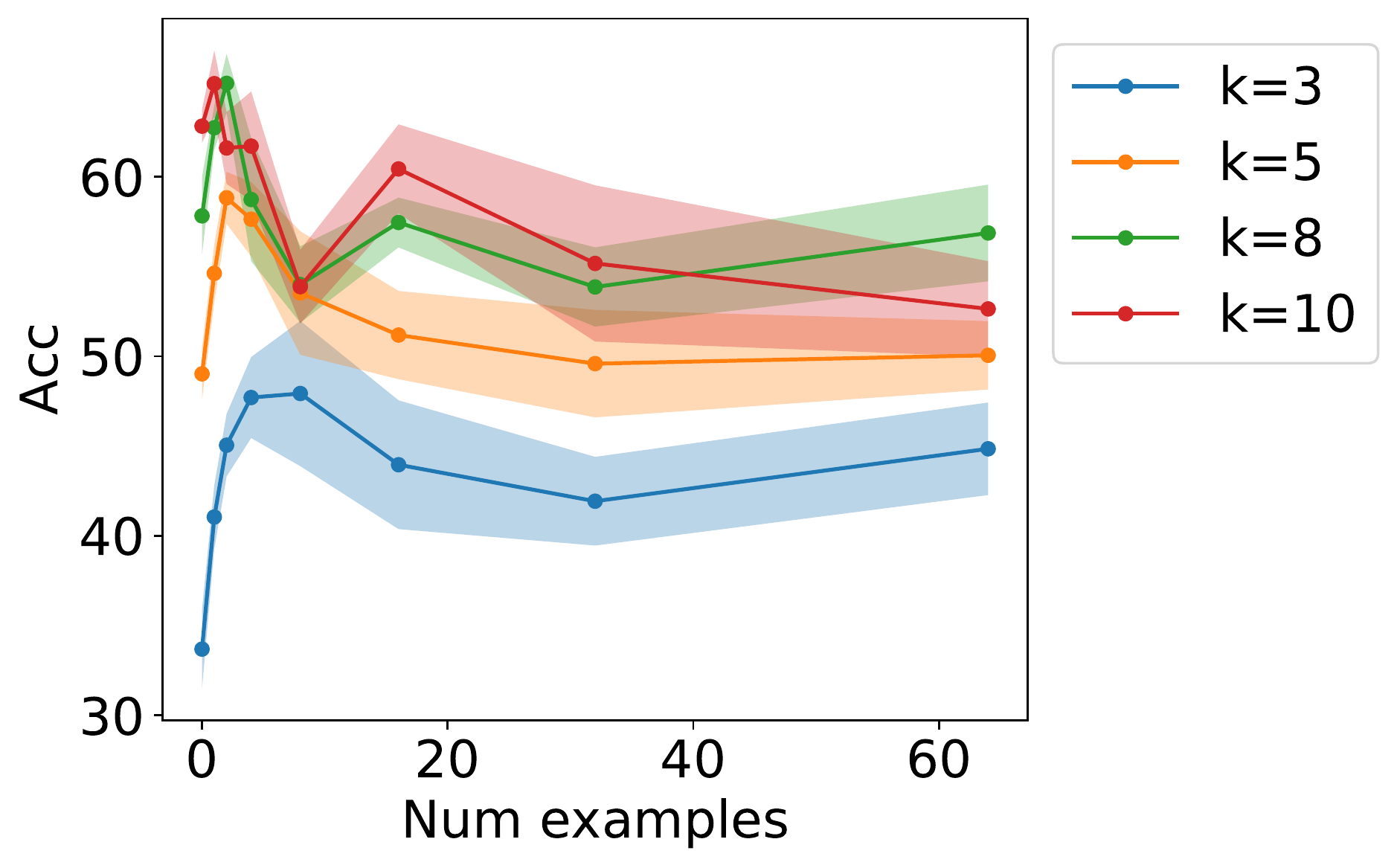}
    \end{subfigure}
    \hfill
    \begin{subfigure}[t]{0.31\linewidth}
       \centering
       \includegraphics[width=\linewidth]{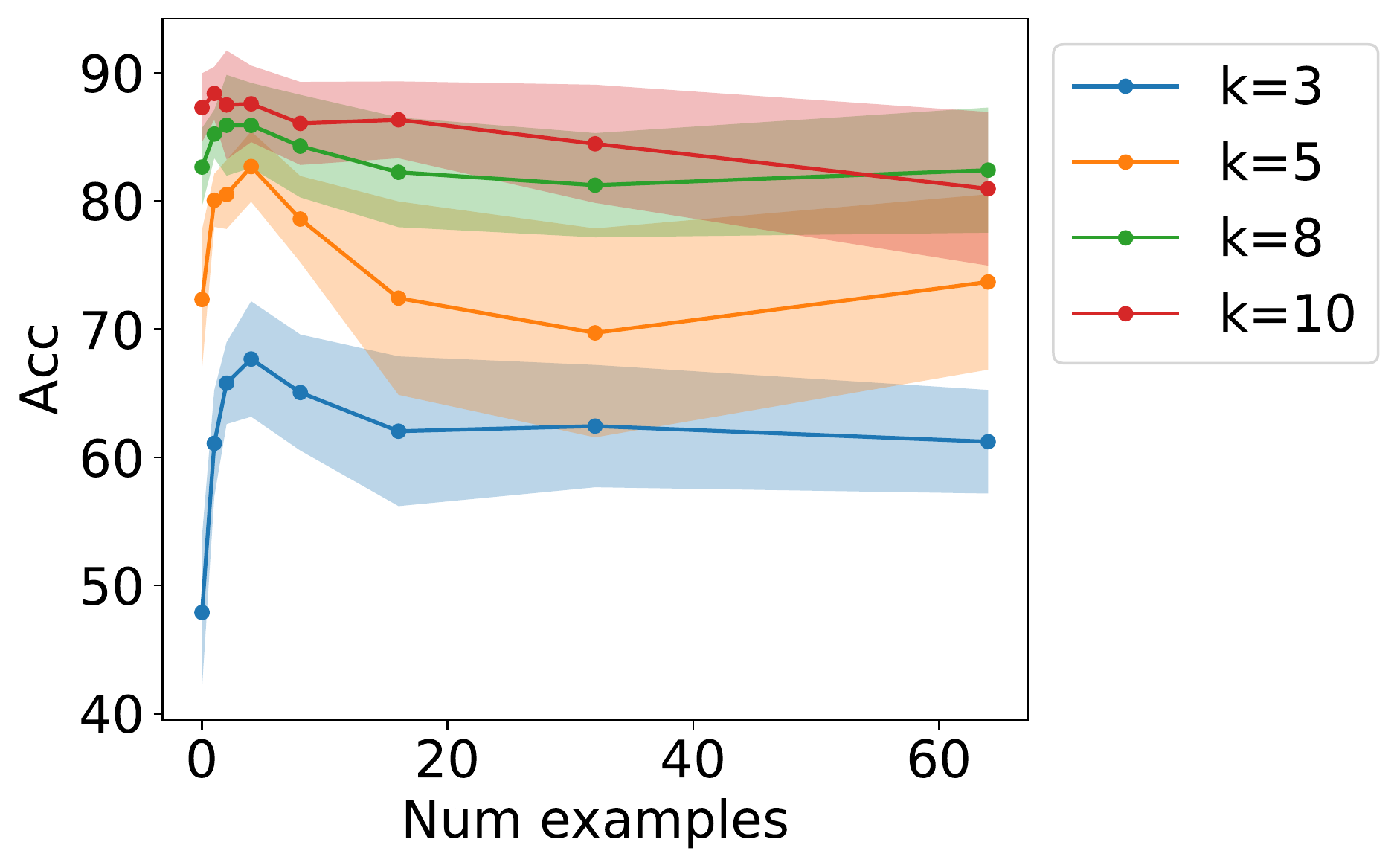}
    \end{subfigure}
    \caption{In-context accuracy curve of the 4 layer Transformer on the \dsname dataset when the entity transition matrix does not have an additional identity component, for vocabulary sizes 50 (left), 100 (middle), and 150 (right). In-context learning is still generally successful.}
\label{fig:fastent}
\end{figure}
\paragraph{Transition matrix for entities.}
The entity transition matrix is shared between all the HMMs that consistute the mixture.
The entity transition matrix is generated in the same way as the property transition matrices, except with one additional step.
Letting $T$ be a transition matrix sampled in the same way as a property transition matrix,

In pretraining documents, we expect that the entities (e.g., Einstein) change slowly over time while and the properties of the entity (e.g., their occupation) change quickly with some pattern to generate natural sentences. We implement this by ensuring that the probability of transitioning to the same entity index in the next step is at least 0.9.
The final entity transition matrix is then $0.1T + 0.9 I$ where $I$ is the identity matrix.
Although we add the diagonal component for added realism, we also consider not adding this component.
Figure~\ref{fig:fastent} shows in-context learning curves for a small (4 layer) Transformer trained on data that does not add the diagonal component (we check this for vocabulary sizes 50, 100, and 150).
In-context learning still works in this case, although not as well for the 50 vocab size case.

\paragraph{Start distribution.}
The starting distribution for the hidden states in all HMMs in the mixture are close to uniform. We generate the start distribution as $\text{softmax}((u-0.5) / t)$ for random vector $u$ with entries uniformly from $[0,1]$ and temperature $t=10$.
In the pretraining documents, we only sample from the start distribution in the beginning of the document.

\paragraph{Prompt distribution.}
We generate prompts with 0 to 64 training examples and example lengths $k\in\{3,5,8,10\}$ (2500 prompts for each setting). The target token $\ytest$ is taken to be the most likely output $\argmax_{\y} \pprompt(\y \vert \Xtest)$ instead of sampling so that the intrinsic error is 0.

\paragraph{Prompt distribution.}
To generate the prompts, we first sample a \topic $\theta$ uniformly at random from $\Theta$ (well-specification, Assumption~\ref{ass:wellspecified}), then use it to generate all the prompt examples. The prompt start distribution is chosen to be uniform over entities but with a fixed starting property that is chosen randomly for each prompt, for consistency in the task.
This may not satisfy Assumption~\ref{ass:promptstartshift}, but we found this to still work empirically and is simpler.
Given the starting property, we sample $k$ tokens from the HMM defined by the \topic $\theta$.
Finally, we append the delimiter token for the example. We repeat this process for each example in the prompt, concatenating all examples.
The label is generated as
\begin{align}
\argmax_{\y} ~~\pprompt(\y  \vert \Xtest)
\end{align}
under the prompt \topic $\thetastar$. This differs from the theory, which samples $\ytest$ instead of taking it to be the most likely token. However, there can be a large amount of intrinsic error that sampling introduces. We define the label this way in the simulations to remove the intrinsic error from sampling.

\paragraph{Example of prompt generation.}
In the example in Figure~\ref{fig:ginc_example} (right), the starting property is fixed to be 5 (for example). The first token (l) is generated by sampling a random entity index (3), and indexing into the memory matrix returns l. Running the hidden state chain of the HMM forward gives the next pair of property and entity. Since the entity Markov chain changes slowly, the entity is still 3 in the next step -- however, the property has changed to 4, and indexing into the memory matrix outputs the next token (aw). Following this same process to generate the third token (the output for the first example), we finish generating one example. To end the example, we append a delimiter (backslash). We repeat this example generation process for all the examples, except for the test example at the end, where we do not generate the last token. We condition the HMM on the generated prompt to compute the posterior distribution over the next token $\pprompt(\y \vert \Xtest)$. We take the argmax of this distribution to be the ground truth label.

\paragraph{Dataset details.}
The dataset contains 1000 training documents and 100 validation documents, where training documents have 10240 tokens and validation documents have 1024 tokens.
Each document is generated by first selecting one of the HMMs from the mixture uniformly at random, then generating 10240 tokens from the HMM.

We also generate 2500 in-context prompts for each (example length,number of examples) pair, for example lengths $k=[3,5,8,10]$ and number of examples $n=[0,1,2,4,8,16,32,64]$.
Each prompt is generated using a random HMM in the mixture.

\subsection{Transformer details}
\label{app:transformer}
Our Transformer models are based on the GPT-2 architectures with 4, 12, and 16 layers respectively, with 12 attention heads, 768 dimensional embeddings, residual/embedding/attention dropout set to 0.1, and a context window of 1024. Other than the number of layers, the other parameters are the default settings from the HuggingFace library~\citep{wolf2019transformers}.
We train for 5 epochs using the AdamW optimizer~\citep{loshchilov2019decoupled,kingma2015adam} with a batch size of 8 and a linear learning rate schedule (with 1000 step warmup) up to a learning rate of 8e-4 for the 4 layer and 12 layer model, while for the 16 layer model we start with a constant learning rate of 8e-4 and reduce by a factor of 0.25 whenever the best validation loss does not improve.
We tried both learning rate strategies for all models and take the most consistent.
We tuned these models so that the training loss curves between seeds have smaller variability between the runs in terms of the curve shape and when the loss decreases -- we found that this is an important indication of stable results.
The models took 50 minutes, 2 hours, 3 hours to train respectively. The hardware was mainly Titan Xp GPUs, trained and evaluated using 16-bit precision.
All the results are reported with 5 pretraining runs (5 different seeds).

\subsection{LSTM details}
\label{app:lstm}
We train an LSTM language model with embedding size 768, hidden layer size 768, and 6 layers.
We use dropout 0.2 and weight decay 1e-5.
The optimizer is AdamW starting with a learning rate of 1e-3, then reducing by a factor of 0.25 whenever the best validation loss does not go down.
We train for a total of 10 epochs, with gradient clipping at norm 1.0.
We use a batch size of 8 and backpropagate through time for 1024 steps (each pretraining data segment is also 1024 tokens).
Each model takes roughly 2 hours to train on Titan Xp GPUs.

\subsection{Varying the vocabulary size}
To do well on the in-context learning task, the model must both infer the prompt \topic and the last HMM hidden state.
In general, increasing the number of observable symbols makes the in-context task easier by making the inference of the HMM hidden state easier.
With more symbols, each hidden state is more likely to output a different symbol, making the inference problem easier.
This improvement comes despite the number of output classes in the problem (same as the vocabulary size) increasing.
Figures~\ref{fig:4layervocabsizes},~\ref{fig:12layervocabsizes},~\ref{fig:16layervocabsizes},~\ref{fig:rnnvocabsizes} show in-context learning curves for vocabulary sizes 50, 100, and 150, keeping other hyperparmeters of the dataset the same.

\begin{figure}[h!]
    \centering
    \begin{subfigure}[t]{0.31\linewidth}
       \centering
       \includegraphics[width=\linewidth]{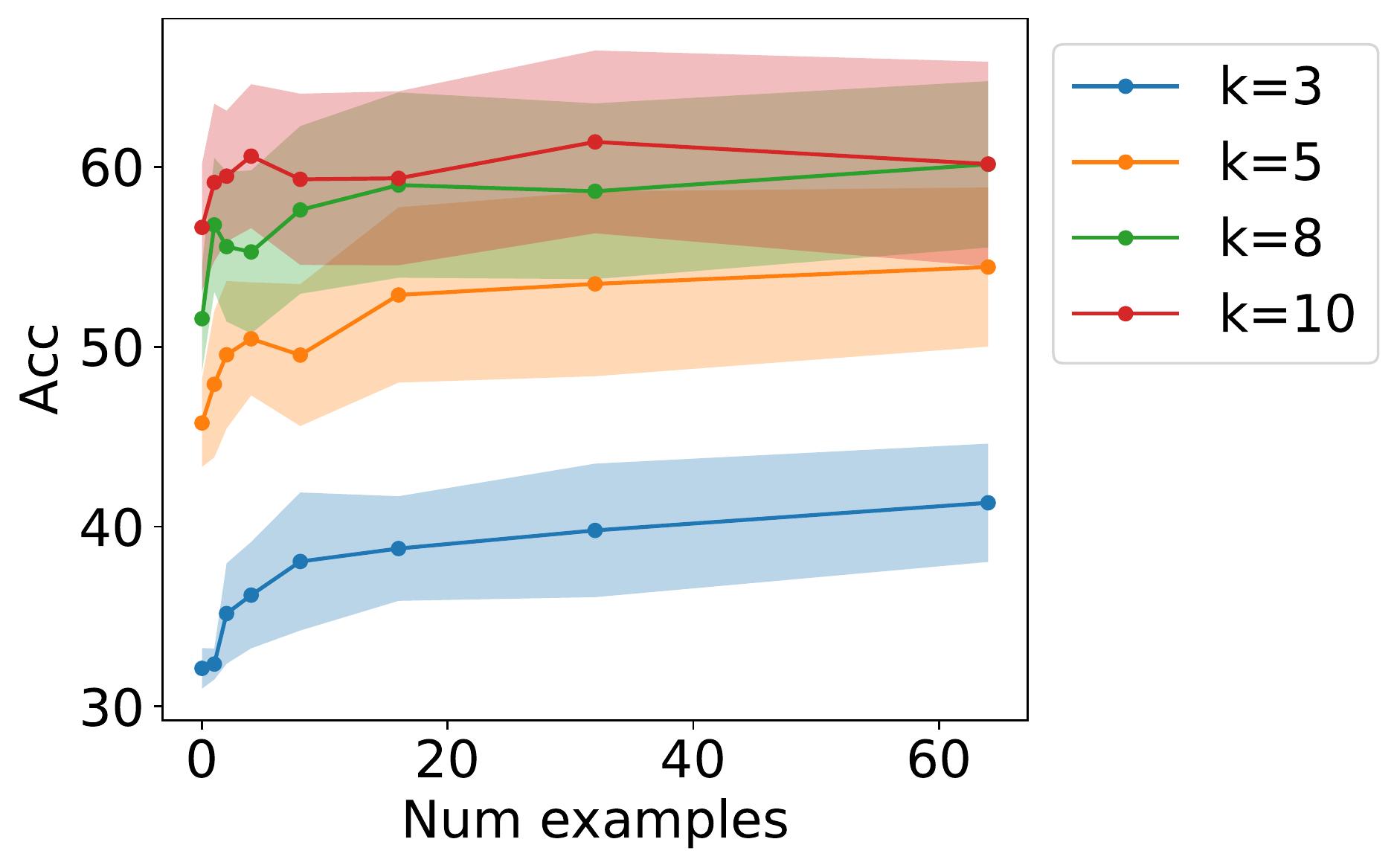}
    \end{subfigure}
    \hfill
    \begin{subfigure}[t]{0.31\linewidth}
       \centering
       \includegraphics[width=\linewidth]{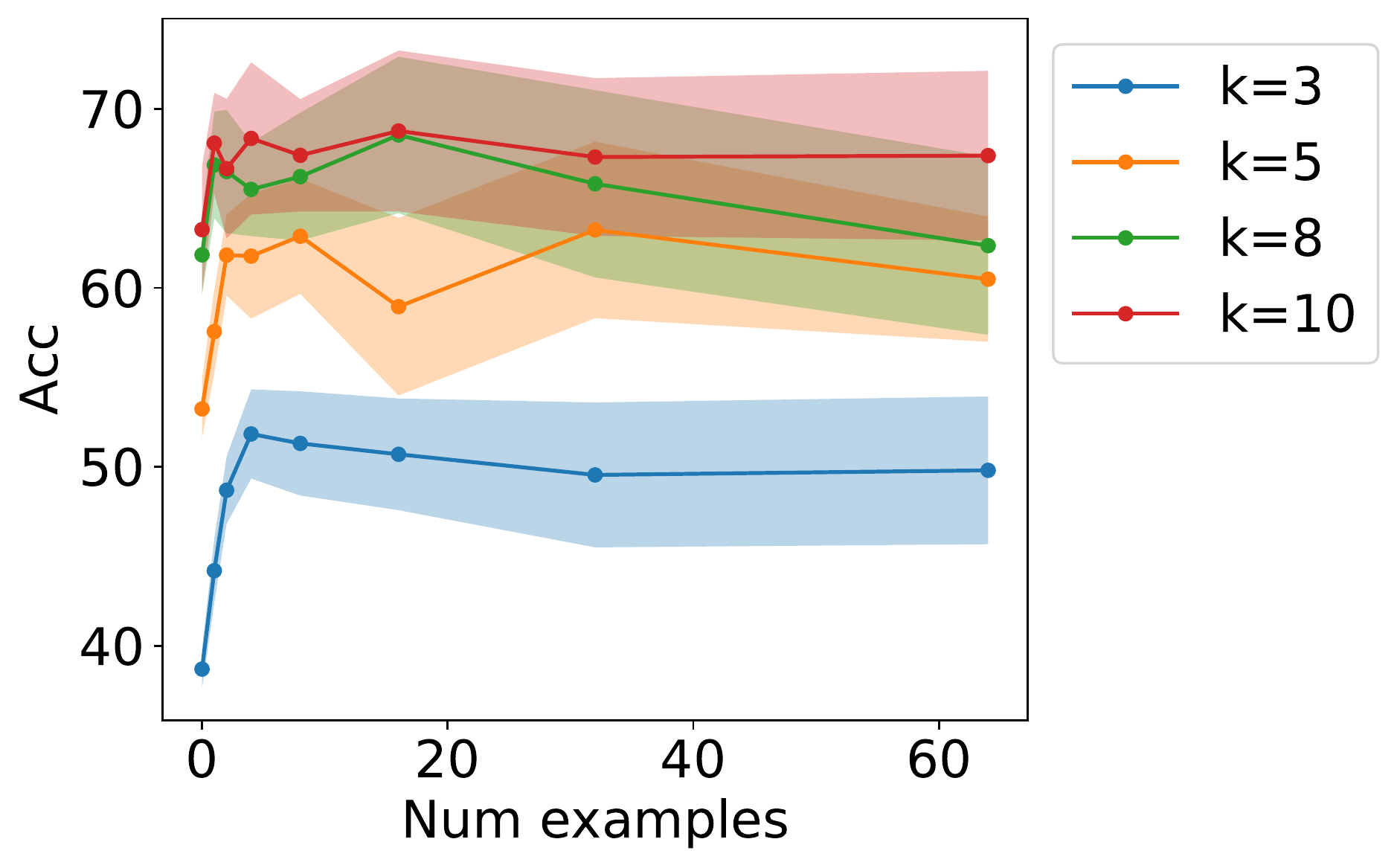}
    \end{subfigure}
    \hfill
    \begin{subfigure}[t]{0.31\linewidth}
       \centering
       \includegraphics[width=\linewidth]{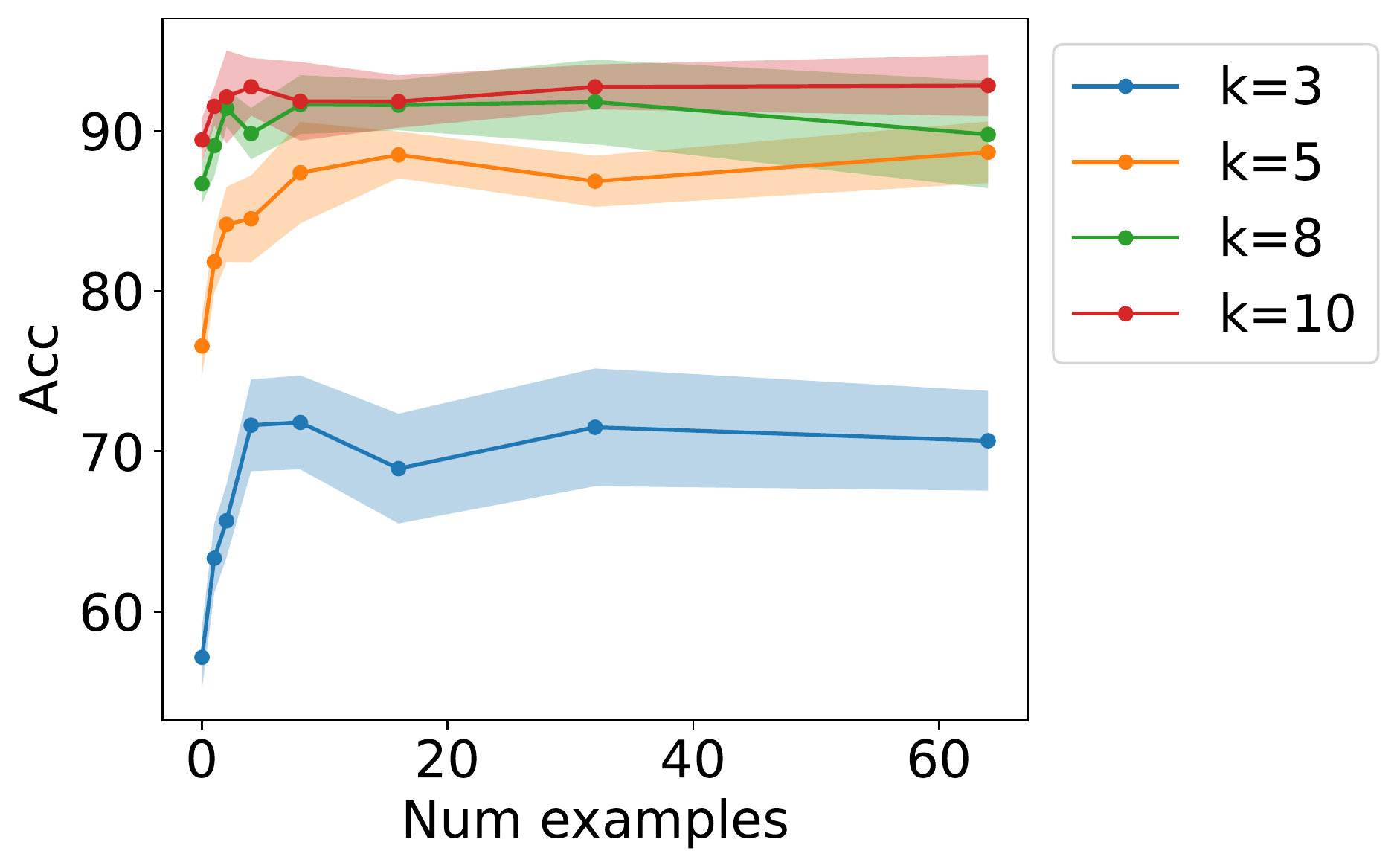}
    \end{subfigure}
    \caption{In-context accuracy of the 4 layer Transformer on the \dsname dataset for vocabulary sizes 50 (left), 100 (middle) and 150 (right). Accuracies generally improve as the vocabulary size increases.}
\label{fig:4layervocabsizes}
\end{figure}
\begin{figure}[h!]
    \centering
    \begin{subfigure}[t]{0.31\linewidth}
       \centering
       \includegraphics[width=\linewidth]{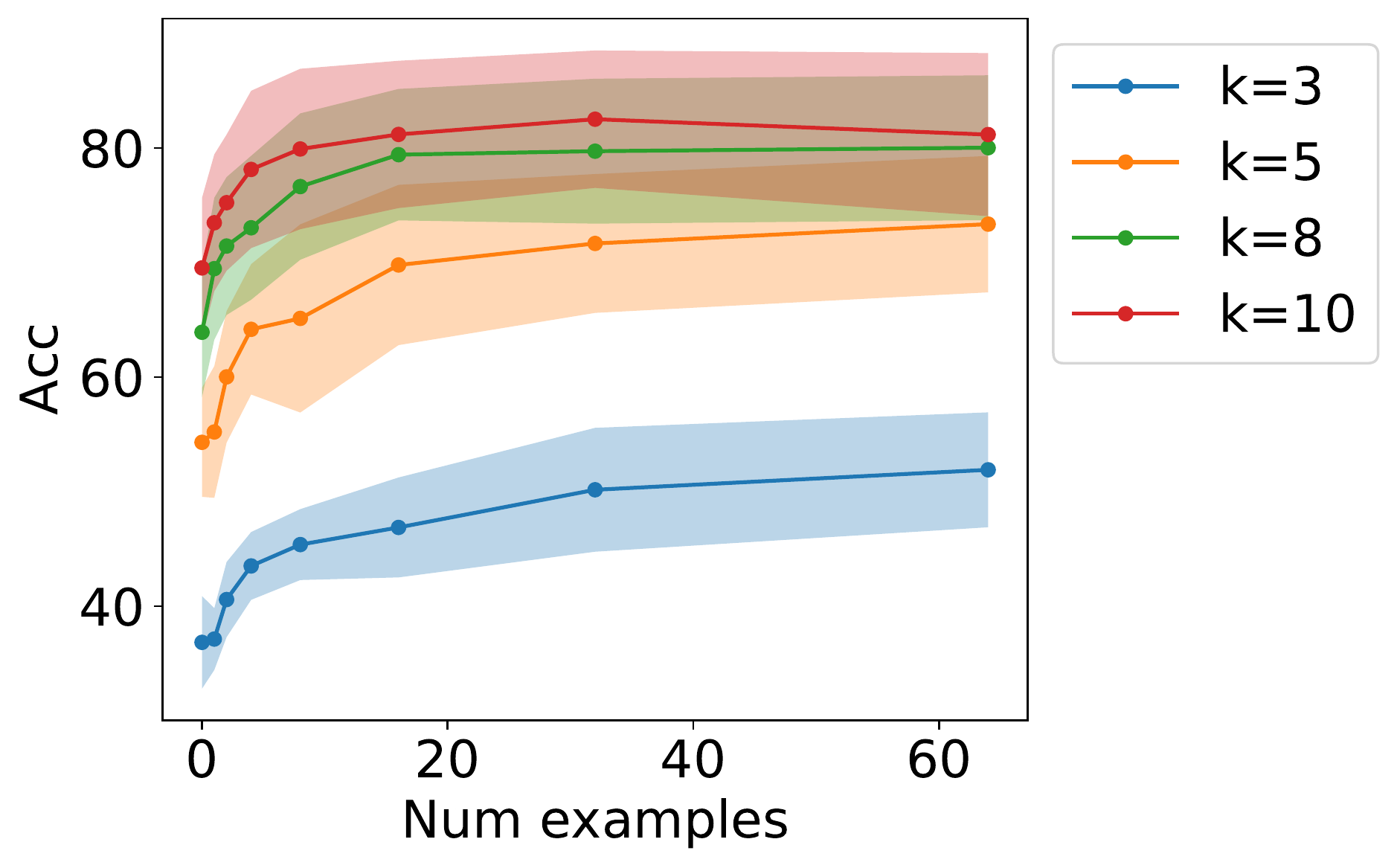}
    \end{subfigure}
    \hfill
    \begin{subfigure}[t]{0.31\linewidth}
       \centering
       \includegraphics[width=\linewidth]{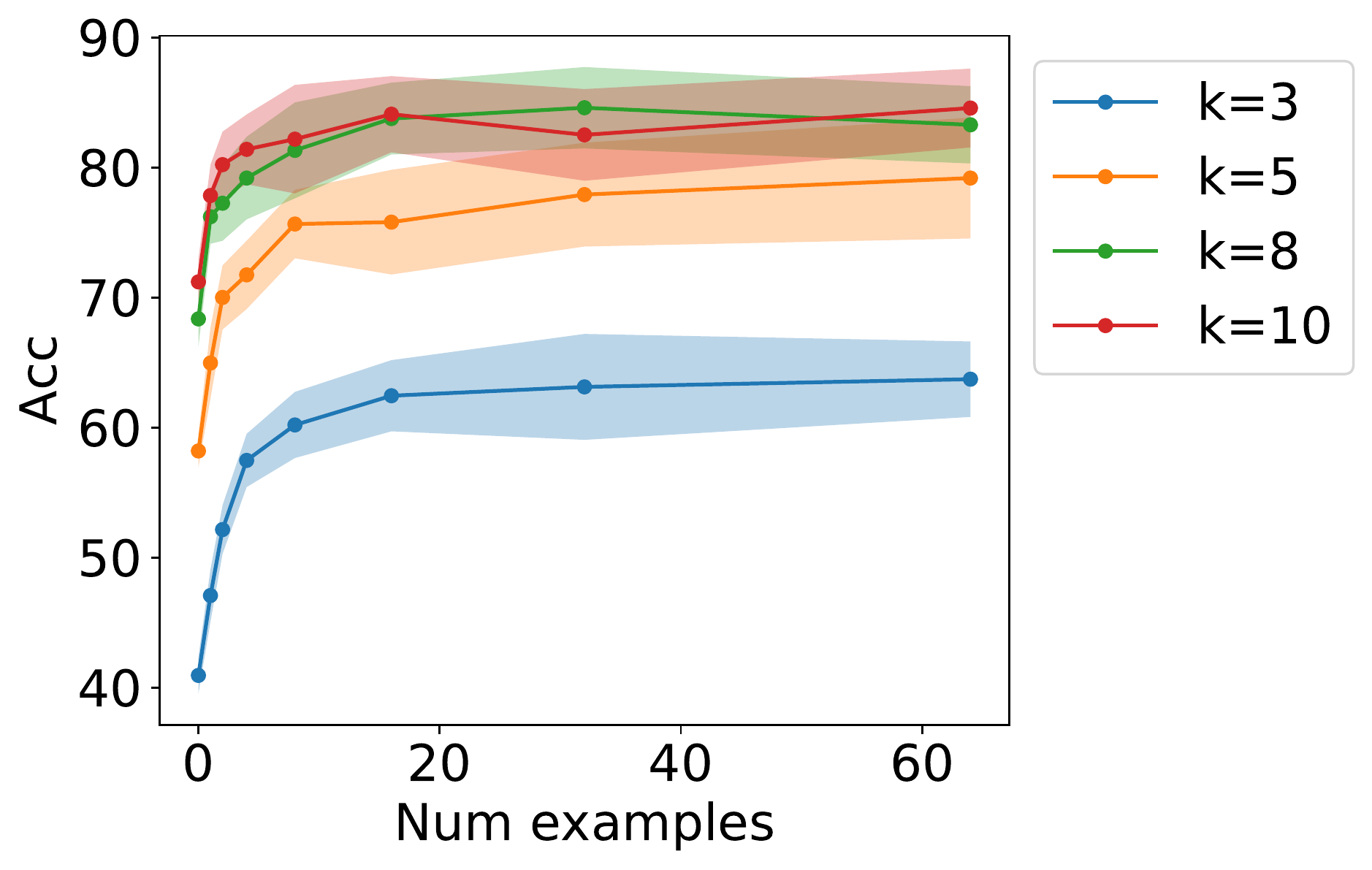}
    \end{subfigure}
    \hfill
    \begin{subfigure}[t]{0.31\linewidth}
       \centering
       \includegraphics[width=\linewidth]{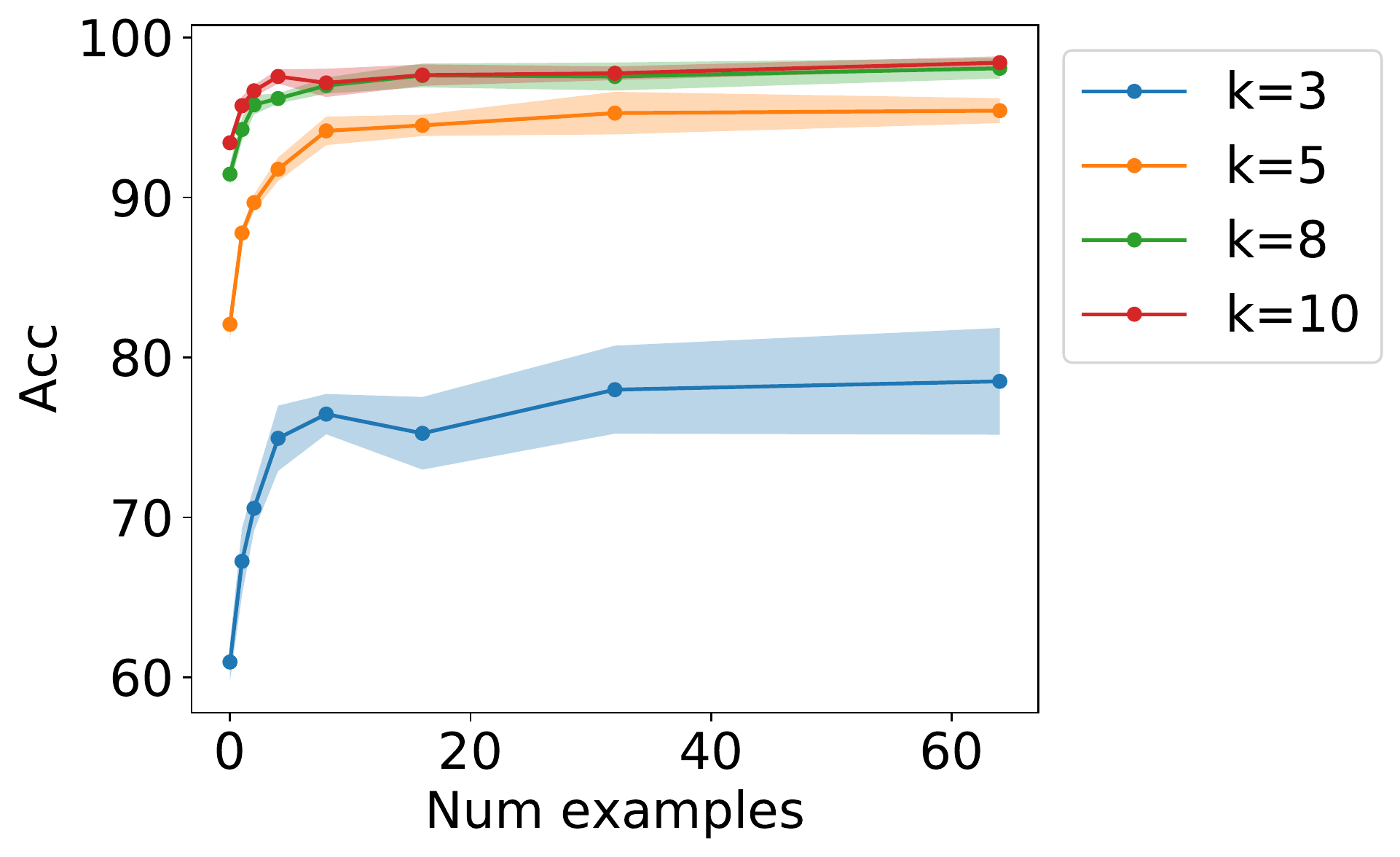}
    \end{subfigure}
    \caption{In-context accuracy of the 12 layer Transformer on the \dsname dataset for vocabulary sizes 50 (left), 100 (middle) and 150 (right). Accuracies generally improve as the vocabulary size increases.}
\label{fig:12layervocabsizes}
\end{figure}
\begin{figure}[h!]
    \centering
    \begin{subfigure}[t]{0.31\linewidth}
       \centering
       \includegraphics[width=\linewidth]{figures/med_nsymbols50}
    \end{subfigure}
    \hfill
    \begin{subfigure}[t]{0.31\linewidth}
       \centering
       \includegraphics[width=\linewidth]{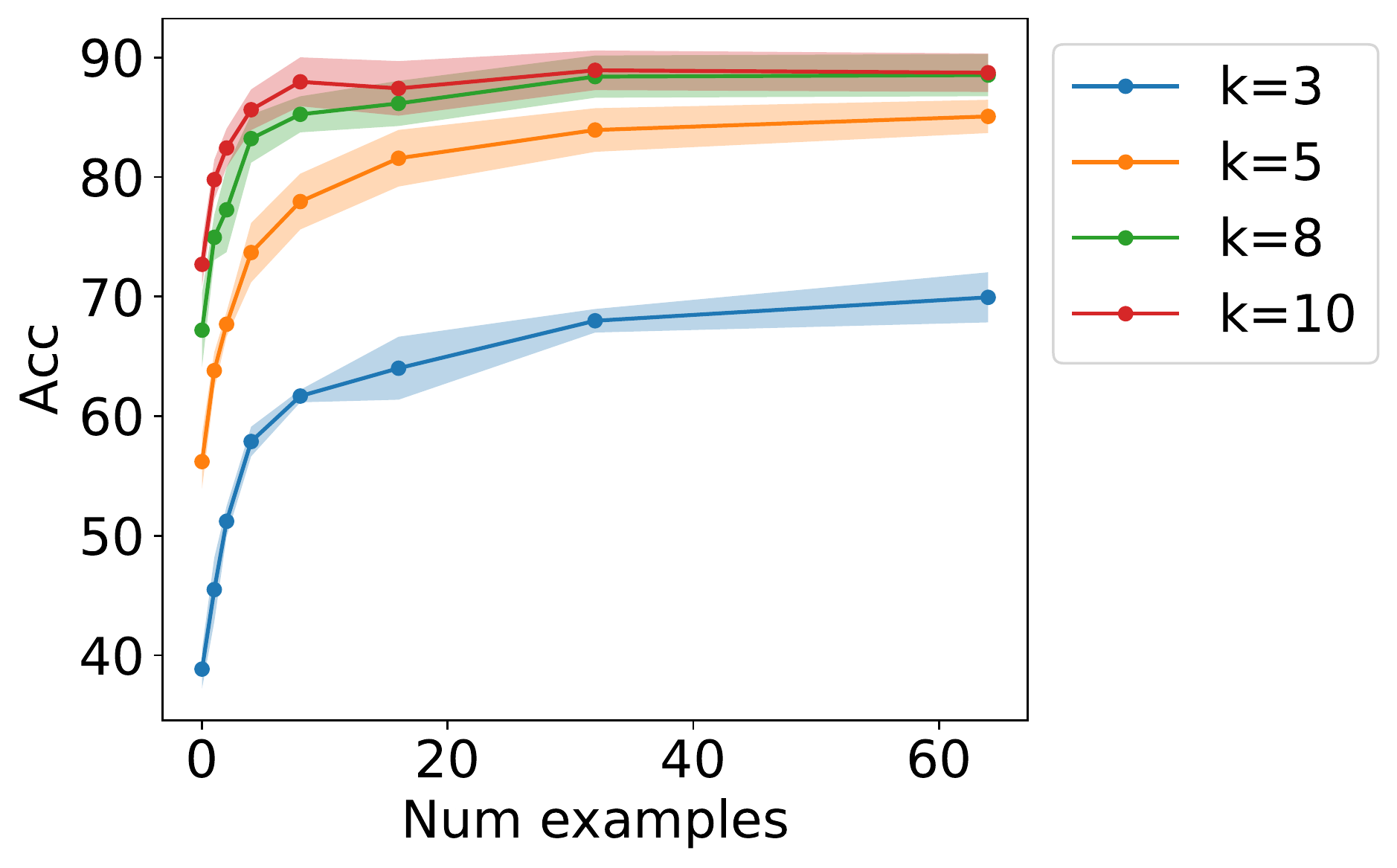}
    \end{subfigure}
    \hfill
    \begin{subfigure}[t]{0.31\linewidth}
       \centering
       \includegraphics[width=\linewidth]{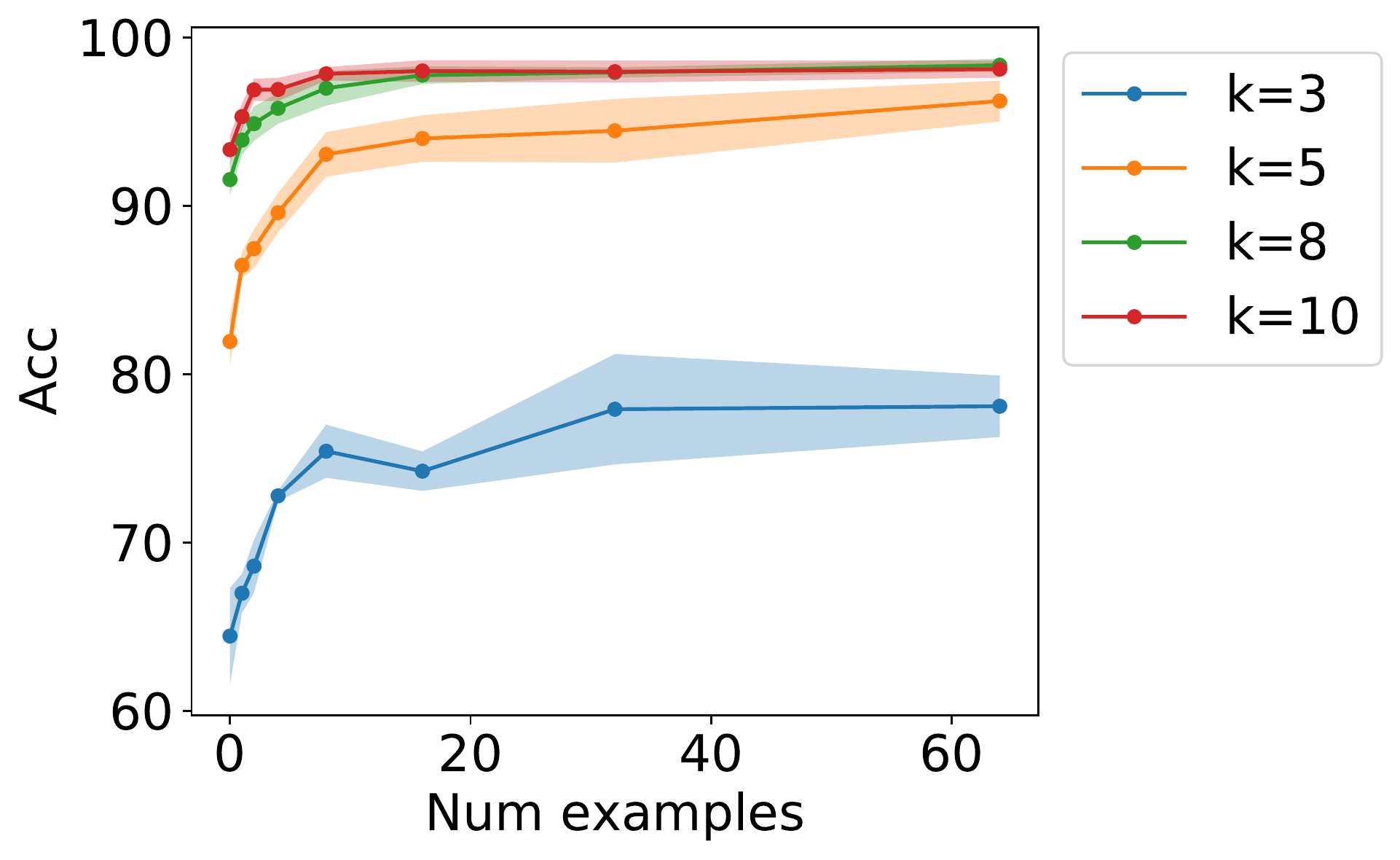}
    \end{subfigure}
    \caption{In-context accuracy of the 16 layer Transformer on the \dsname dataset for vocabulary sizes 50 (left), 100 (middle) and 150 (right). Accuracies generally improve as the vocabulary size increases.}
\label{fig:16layervocabsizes}
\end{figure}
\begin{figure}[h!]
    \centering
    \begin{subfigure}[t]{0.31\linewidth}
       \centering
       \includegraphics[width=\linewidth]{figures/rnn_nsymbols50}
    \end{subfigure}
    \hfill
    \begin{subfigure}[t]{0.31\linewidth}
       \centering
       \includegraphics[width=\linewidth]{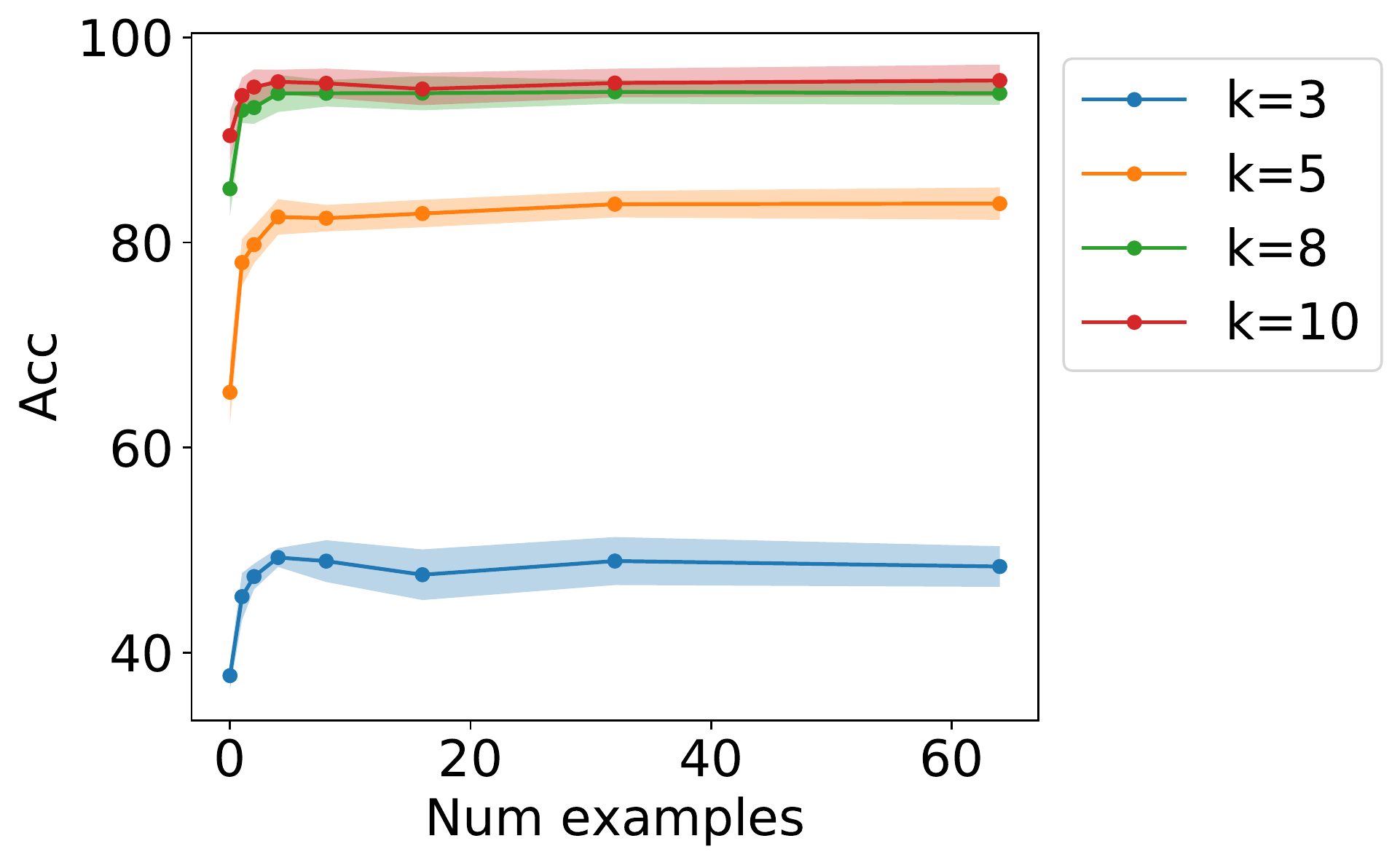}
    \end{subfigure}
    \hfill
    \begin{subfigure}[t]{0.31\linewidth}
       \centering
       \includegraphics[width=\linewidth]{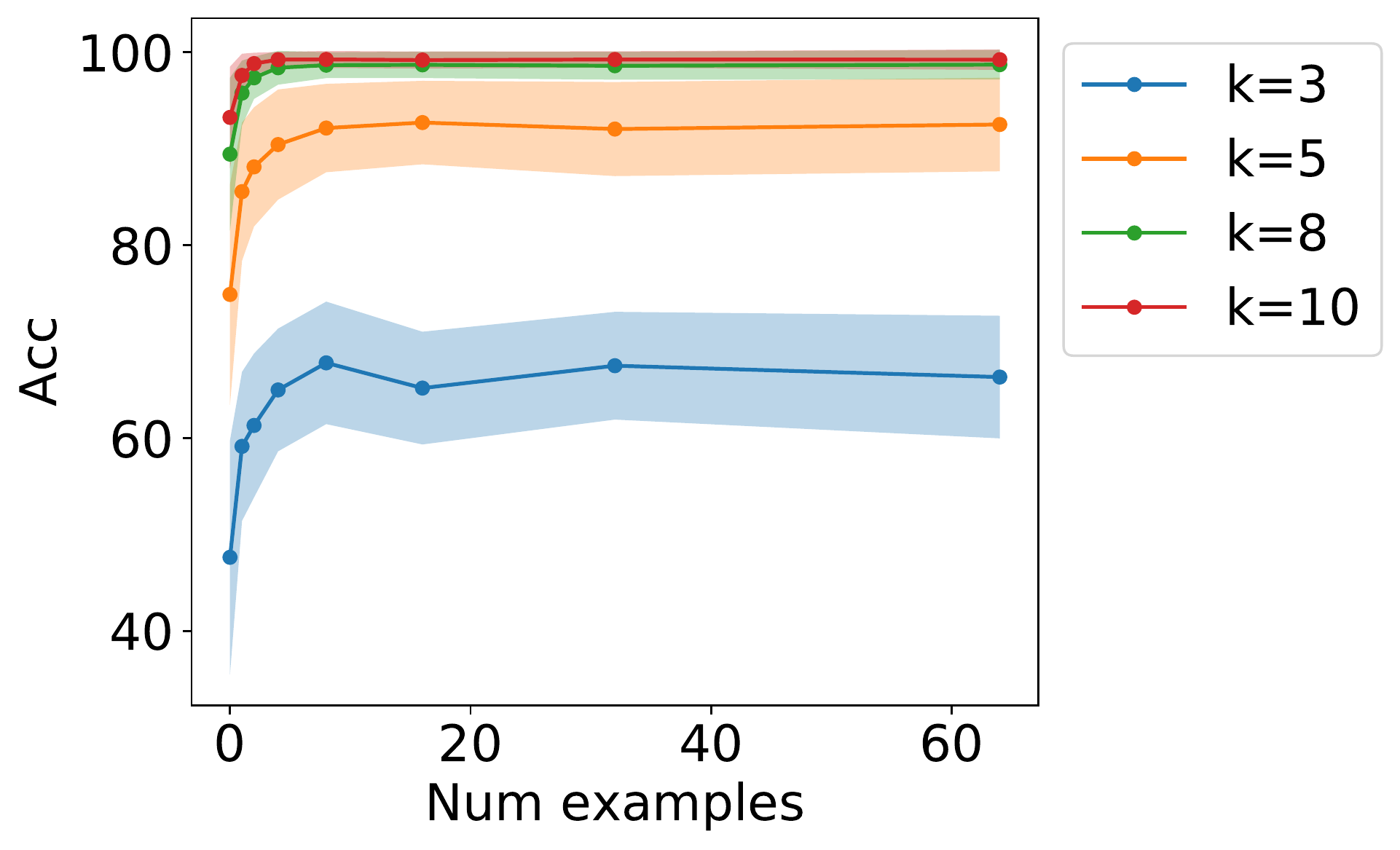}
    \end{subfigure}
    \caption{In-context accuracy of the LSTM on the \dsname dataset for vocabulary sizes 50 (left), 100 (middle) and 150 (right). Accuracies generally improve as the vocabulary size increases.}
\label{fig:rnnvocabsizes}
\end{figure}
\begin{table}[tbp]
\centering
\begin{tabular}{lc}
\toprule
Prompt example length & Test Acc (200--300 chars) \\
\midrule
5 examples & \\
~~~~~Short (200--300 chars) & 69.8\\
~~~~~Long (500--600 chars) & 70.7 \\
10 examples & \\
~~~~~Short, duplicated examples & 69.6 \\
~~~~~Short, independent examples & 71.4 \\
\bottomrule
\end{tabular}
\caption{Accuracies for 5-shot in-context learning of GPT-3 on a filtered LAMBADA test set with short examples (200--300 characters). Even though there is distribution mismatch with the test set, having longer examples improves the accuracy, supporting theoretical intuitions. The first two rows use 5 training examples in the prompt, while the last two rows use 10 training examples to equalize the total length.}
\label{table:lambada}
\end{table}

\subsection{Experiment on GPT-3}
We conduct an additional experiment which shows that longer examples improve in-context learning in GPT-3 on the LAMBADA~\citep{paperno2016lambada} completion task.

\paragraph{Data.}
In this experiment, we define a short version of the LAMBADA test dataset (LAMBADA test-short) which contains only test examples with up to 200--300 characters in length.
We also define two ``training'' datasets from which to sample examples for the in-context prompts from.
The short training dataset (LAMBADA train-short) contains examples from the training set that are 200--300 characters in length, which matches the distribution of test-short.
The long training dataset (LAMBADA train-long) contains training examples that are 500--600 characters long.
We cut the number of examples in the larger of the two training datasets so that the two training datasets are equally sized (47 examples).
For each test example, we sample 5 random training examples (5-shot learning).

We also consider equalizing the total length of the prompts in two ways. First, we consider duplicating the 5 short examples (if the examples are [1,2,3,4,5], duplicating refers to [1,2,3,4,5,1,2,3,4,5]).
This allows for equalizing the total length without increasing the number of examples.
As a skyline comparison, we also consider sampling 10 independent short examples, which contains more input-output pairs for the task.

\paragraph{Result.}
Table~\ref{table:lambada} shows that when evaluating only on LAMBADA test-short, 5-shot in-context learning using LAMBADA train-long improves the test accuracy by almost 1\% compared to LAMBADA train-short, despite the long/short distribution mismatch between train and test. This supports intuitions from our theory.

In comparison, simply increasing the total prompt length by duplicating the short examples does not improve the accuracy. Intuitively, the longer examples have additional information that is not directly related to mapping between the input and output, but can be leveraged to improve in-context learning by helping the model infer the latent \topic. Using 5 long examples (as opposed to 5 short examples) closes about 56\% of the gap between using 5 short examples and 10 independent short examples despite not adding additional examples or task-related information.

\end{document}